\documentclass[11pt]{article}
\usepackage[bookmarks,colorlinks]{hyperref}
\usepackage{xcolor}
\definecolor{darkblue}{rgb}{0,0.22,0.66}
\definecolor{darkcyan}{RGB}{0, 139, 139}
\definecolor{darkgray}{HTML}{666666}
\hypersetup{colorlinks=true,
citecolor=darkcyan,
linkcolor=darkblue,
urlcolor=darkgray
}

\usepackage[a4paper]{geometry}		
\usepackage{amsthm}
\usepackage{amsmath,scalerel}
\usepackage[numbers,sort]{natbib} 

\usepackage{amsmath,amsfonts,amssymb}
\usepackage[scr=rsfs]{mathalpha} 
\usepackage{mathtools}
\usepackage{dsfont} 
\usepackage{bm}

\usepackage{xurl,xcolor
}

\usepackage{enumitem}

\setlist[enumerate]{
  topsep=3pt,
  itemsep=3pt,
  parsep=0pt,
  partopsep=0pt,
  leftmargin=*,   %
  align=right,
  labelsep=0.600em,
  itemindent=0.00em , %
  labelindent=0.6em,
}

\setlist[itemize]{
  topsep=3pt,
  itemsep=3pt,
  parsep=0pt,
  partopsep=0pt,
  leftmargin=*,   %
  align=right,
  labelsep=0.600em,
  itemindent=0.00em , %
  labelindent=0.6em,
}

\usepackage{tabularx,multirow,array,booktabs}
\usepackage{colortbl}
\usepackage{graphicx}
\graphicspath{{figures/}}%
\usepackage{float}
\usepackage{subcaption} 

\usepackage{caption}
\usepackage{amsthm}
\theoremstyle{plain}
\newtheorem{theorem}{Theorem}[section]%

\newtheorem*{example*}{Example}
\newtheorem{corollary}[theorem]{Corollary} %
\newtheorem{lemma}[theorem]{Lemma}
\newtheorem*{lemma*}{Lemma}

\newtheorem{proposition}[theorem]{Proposition}
\theoremstyle{definition}

\theoremstyle{remark}

\newtheorem*{remark*}{Remark}
\usepackage{etoolbox,xparse,dsfont,bm,amssymb}
\newcommand{\myMFabc}[4]{\expandafter#1\csname#3#4\endcsname{{#2{#4}}}}
\newcommand{\myMFcmd}[4]{\expandafter#1\csname#3#4\endcsname{{#2{\csname#4\endcsname}}}}

\newcommand{\MFabc}[3][\newcommand]{
    \def\doOld##1##2{\forcsvlist{\myMFabc{#1}{##1}{##2}}{#3}}
    \providecommand{\do}{do}
    \RenewDocumentCommand \do { >{\SplitList{,}} m } { \doOld##1 }
    \docsvlist{#2}
}
\newcommand{\MFcmd}[3][\newcommand]{
    \def\doOld##1##2{\forcsvlist{\myMFcmd{#1}{##1}{##2}}{#3}}
    \providecommand{\do}{do}
    \RenewDocumentCommand \do { >{\SplitList{,}} m } { \doOld##1 }
    \docsvlist{#2}
}

\newcommand{\bmzero}{{\bm{0}}}

\let\one\bbone
\MFabc{ {\mathfrak,frak}, }{T,u,U,o,O,E,m}
\MFabc{ {\mathbb, }, }{A,N,Z,R,C,Q}

\newcommand{\hatbm}[1]{\widehat{\bm{#1}}}
\newcommand{\tildebm}[1]{\widetilde{\bm{#1}}}
\newcommand{\bmcal}[1]{\bm{\mathcal{#1}}}
\newcommand{\caltilde}[1]{\mathcal{\widetilde{#1}}}
\newcommand{\calhat}[1]{\mathcal{\widehat{#1}}} 
\newcommand{\scrtilde}[1]{\widetilde{\mathscr{#1}\mspace{1mu}\mspace{-1mu}}}
\newcommand{\scrhat}[1]{\mathscr{\widehat{#1}\mspace{1mu}\mspace{-1mu}}}
\newcommand{\bmcalhat}[1]{\bm{\mathcal{\widehat{#1}}}}
\newcommand{\bmcaltilde}[1]{\bm{\mathcal{\widetilde{#1}}}}
\MFabc{ {\mathcal,cal}, {\mathscr,scr}, {\mathbb,bb}, {\bmcal,bmcal}, {\bmcal,calbm}, {\caltilde,caltilde}, {\caltilde,tildecal}, {\calhat,calhat}, {\calhat,hatcal}, {\scrtilde,scrtilde}, {\scrtilde,tildescr}, {\scrhat,scrhat}, {\scrhat,hatscr}, {\bmcalhat,bmcalhat}, {\bmcalhat,bmhatcal}, 
{\bmcalhat,calbmhat}, {\bmcalhat,calhatbm}, {\bmcalhat,hatbmcal}, {\bmcalhat,hatcalbm}, {\bmcaltilde,bmcaltilde}, {\bmcaltilde,bmtildecal}, {\bmcaltilde,calbmtilde}, {\bmcaltilde,caltildebm}, {\bmcaltilde,tildebmcal}, {\bmcaltilde,tildecalbm}}{A,B,C,D,E,F,G,H,I,J,K,L,M,N,O,P,Q,R,S,T,U,V,W,X,Y,Z}

\MFabc{{\bm,bm}, {\mathsf,sf}, {\widehat,hat}, {\widetilde,tilde}, {\hatbm,hatbm}, {\hatbm,bmhat}, {\tildebm,tildebm}, {\tildebm,bmtilde}}{a,b,c,d,e,f,g,h,i,j,k,l,m,n,o,p,q,r,s,t,u,v,w,x,y,z,A,B,C,D,E,F,G,H,I,J,K,L,M,N,O,P,Q,R,S,T,U,V,W,X,Y,Z}

\let\eps\varepsilon
\MFcmd{{\bm,bm}, {\widehat,hat}, {\widetilde,tilde}, {\tildebm, tildebm}, {\tildebm, bmtilde}, {\hatbm, hatbm}, {\hatbm, bmhat}}{alpha,beta,gamma,delta,epsilon,zeta,eta,theta,iota,kappa,lambda,mu,nu,xi,omicron,pi,rho,sigma,tau,upsilon,phi,chi,psi,omega,Alpha,Beta,Gamma,Delta,Epsilon,Zeta,Eta,Theta,Iota,Kappa,Lambda,Mu,Nu,Xi,Omicron,Pi,Rho,Sigma,Tau,Upsilon,Phi,Chi,Psi,Omega,varrho,varphi,vartheta,varepsilon,varsigma,ell,eps}

\newcommand{\actdef}[1]{\expandafter\def\csname#1\endcsname{{\ensuremath{\mathtt{#1}}}}}
\forcsvlist{\actdef}{ReLU, LReLU, LeakyReLU, ELU, GELU, SiLU, Softplus, dGELU, dSiLU, dSoftplus, Tanh, Sigmoid, Arctan, Softsign, SRS, dSRS, Swish, dSwish, Mish, dMish, SELU, CELU, dSELU, Sin,SinLU, SinTU, PSinTU, sine, cosine, Sine, Cosine, EUAF}

\newlength{\myLength}

\newcommand{\mystep}[2]{\par \vspace{0.25cm}\noindent\textbf{\hspace{8pt}Step }$#1\colon$ #2 \vspace{0.18cm} \par }

\newenvironment{keywords}{\par \noindent\textbf{Key words}.}{\par}
\newenvironment{MSCcodes}{\par\par \noindent\textbf{MSC codes}.}{\par}

\usepackage[left,mathlines]{lineno}
\usepackage{refcount}

\definecolor{mylinenumbercolor}{HTML}{BEBEBE}

\makeatletter
\newcommand*\patchAmsMathEnvironmentForLineno[1]{%
	\expandafter\let\csname old#1\expandafter\endcsname\csname #1\endcsname
	\expandafter\let\csname oldend#1\expandafter\endcsname\csname end#1\endcsname
	\renewenvironment{#1}%
	{\linenomath\csname old#1\endcsname}%
	{\csname oldend#1\endcsname\endlinenomath}}%
\newcommand*\patchBothAmsMathEnvironmentsForLineno[1]{%
	\patchAmsMathEnvironmentForLineno{#1}%
	\patchAmsMathEnvironmentForLineno{#1*}}%
\patchBothAmsMathEnvironmentsForLineno{equation}%
\patchBothAmsMathEnvironmentsForLineno{align}%
\patchBothAmsMathEnvironmentsForLineno{flalign}%
\patchBothAmsMathEnvironmentsForLineno{alignat}%
\patchBothAmsMathEnvironmentsForLineno{gather}%
\patchBothAmsMathEnvironmentsForLineno{multline}%
\makeatother

\let\epsilon\varepsilon
\let\eps\varepsilon
\let\subset\subseteq
\let\tn\textnormal

\let\cdots\customcdots
\let\ldots\cdots
\let\dots\cdots
\let\myforall\forall
\def\forall{{\myforall\, }}
\let\myexists\exists
\def\exists{{\myexists\, }}

\long\def\blue#1{{\color{blue}#1}}

\definecolor{mygray}{RGB}{230,230,230}
\definecolor{myorange}{HTML}{ff7f0e}

\let\cite\citep
\usepackage{doi}

\makeatletter
\long\def\@makefntext#1{\@setpar{\@@par\@tempdima \hsize 
		\advance\@tempdima-15pt\parshape \@ne 15pt \@tempdima}\par
	\parindent 2em\noindent \hbox to \z@{\hss{\textsuperscript{\@thefnmark}} \hfil}#1}
\newlength\aftertitskip     \newlength\beforetitskip
\newlength\interauthorskip  \newlength\aftermaketitskip
\def\maketitle{\par
	\begingroup
	\def\thefootnote{\color{black}\fnsymbol{footnote}}
	\def\@makefnmark{\hbox to 0pt{$^{\@thefnmark}$\hss}}
	\@maketitle \@thanks
	\endgroup
	\setcounter{footnote}{0}
	\let\maketitle\relax \let\@maketitle\relax
	\gdef\@thanks{}\gdef\@author{}\gdef\@title{}\let\thanks\relax}

\def\@startauthor{\noindent \normalsize\bf}
\def\@endauthor{}
\def\@starteditor{\noindent \small {\bf Editor:~}}
\def\@endeditor{\normalsize}
\def\@maketitle{\vbox{\hsize\textwidth
		\linewidth\hsize \vskip \beforetitskip
		{\begin{center} \Large\bf \@title \par \end{center}} \vskip \aftertitskip
		{\def\and{\unskip\enspace{\rm and}\enspace}%
			\def\addr{\small\it}%
            \def\email{\hfill\small\ttfamily}%
			\def\name{\normalsize\bf}%
			\def\AND{\@endauthor\rm\hss \vskip \interauthorskip \@startauthor}
			\@startauthor \@author \@endauthor}
		\vskip \aftermaketitskip
}}
\makeatother
\usepackage{scalerel,amssymb}

\let\ocirc\ocomp

\def\aff{{\mathsf{Aff}}}

\newenvironment{colorenv}[1][blue]
{%
  \begingroup
  \color{#1}%
  \ignorespaces
}
{%
  \endgroup
  \ignorespacesafterend
}

\colorlet{blue}{black}

\title{Layer-wise Geometric Approximation Rates for Deep Networks}

\author{\name Shijun Zhang\thanks{Corresponding author
}
\email \href{mailto:shijun.zhang@polyu.edu.hk}{shijun.zhang@polyu.edu.hk}\\
\addr Department of Applied Mathematics\\
Hong Kong Polytechnic University
\AND
\name Zuowei Shen
\email 
\href{mailto:matzuows@nus.edu.sg}{matzuows@nus.edu.sg}\\
\addr Department of Mathematics\\
National University of Singapore
\AND \name Yuesheng Xu
\email \href{mailto:y1xu@odu.edu}{y1xu@odu.edu}
\\ 
\addr Department of Mathematics and Statistics\\ Old Dominion University
 }

\begin{document}
\maketitle


\begin{abstract}
Depth is widely viewed as a central contributor to the success of deep neural networks, whereas standard neural network approximation theory typically provides guarantees only for the final output and leaves the role of intermediate layers largely unclear. We address this gap by developing a quantitative framework in which depth admits a precise scale-dependent interpretation. Specifically, we design a single shared mixed-activation architecture of fixed width $2dN+d+2$ and any prescribed finite depth such that each intermediate readout $\Phi_\ell$ is itself an approximant to the target function $f$. For $f\in L^p([0,1]^d)$ with $p\in [1,\infty)$, the approximation error of $\Phi_\ell$ is controlled by $(2d+1)$ times the $L^p$ modulus of continuity at the geometric scale $N^{-\ell}$ for all $\ell$. The estimate reduces to the geometric rate $(2d+1)N^{-\ell}$ if $f$ is $1$-Lipschitz.
\begin{colorenv}
    Our network design is inspired by multigrade deep learning, where depth serves
as a progressive refinement mechanism. For every prescribed terminal depth, the
construction yields a finite nested family of prefix readouts whose earlier
correction terms remain embedded in later readouts. Thus the approximation may
be truncated within the prescribed depth range once the desired certified
accuracy is reached.
\end{colorenv}
\end{abstract}

\begin{keywords}
layer-wise approximation rates, adaptive approximation, mixed activations, multigrade deep learning
\end{keywords}

\vspace{12pt}
\begin{MSCcodes}
 41A46, 41A25, 65D15, 68T07
\end{MSCcodes}

\section{Introduction}
\label{sec:intro}

Deep neural networks have achieved remarkable empirical success across a wide range of applications, including computer vision, natural language processing, and scientific computing. From the viewpoint of approximation theory, their expressive power is now relatively well understood: sufficiently deep and wide neural networks can approximate broad classes of functions on compact domains, and a substantial body of work has established quantitative bounds in terms of network depth, width, and the regularity of the target function \cite{Cybenko1989ApproximationBS,HORNIK1989359,HORNIK1991251,yarotsky2017,yarotsky18a,shijun:Characterized:by:Numer:Neurons,shijun:optimal:rate:in:width:and:depth}. These results provide a strong theoretical foundation for understanding what neural networks  {can represent}.

A major limitation of the existing theory, however, is that the approximation guarantee is usually attached only to the {final} network. Little is known quantitatively about what the intermediate layers represent, even though modern deep architectures are often interpreted as progressive feature extractors. As a consequence, depth is typically fixed in advance, and passing from an $\ell$-layer model to an $(\ell+1)$-layer model usually requires redesigning and retraining the whole architecture. This lack of a nested refinement principle stands in sharp contrast with classical approximation theory, where one refines an approximation by appending new basis elements while preserving all previously constructed terms. As illustrated in Figure~\ref{fig:FC_Approx_intro}, a given residual network typically guarantees only that the final output $\Phi_L$ approximates the target function $f$ within error $\varepsilon_L$, while offering no characterization of the approximation quality at intermediate layers. In particular, a typical $\ell$-layer network $\Phi_\ell$ lacks a \emph{nested refinement property}: passing from $\Phi_\ell$ to $\Phi_{\ell+1}$ generally requires redesigning and retraining the whole architecture, rather than simply reusing $\Phi_\ell$ and appending a new block. By contrast, classical approximation theory is built on a fixed family of basis functions $\{\Psi_k\}_{k\ge 1}$, where refinement from level $\ell$ to level $\ell+1$ is achieved simply by adding $\Psi_{\ell+1}$.
This nested-refinement viewpoint is closely related to classical nonlinear and multiresolution
approximation, where one improves an approximant by refining a structured dictionary or scale
space rather than redesigning the approximation from scratch
\cite{devore_1998,Ingrid,cohen2020optimal,RON1997408,WaveletTour2009}.

\begin{figure}[ht]
\centering
\includegraphics[width=0.92585\textwidth]{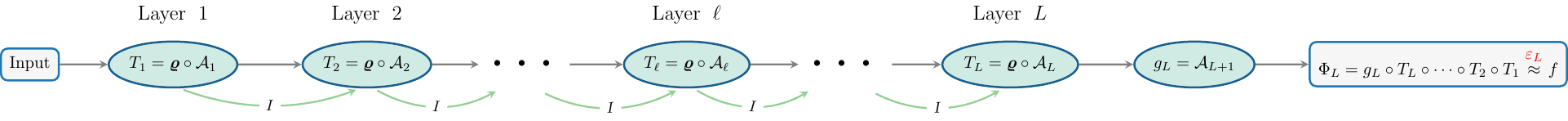}
\caption{Approximation behavior in standard deep neural networks, where a quantitative error guarantee is typically available only for the final output. Here, \(\bmvarrho\) denotes the activation function, and \(\mathcal{A}_1,\dots,\mathcal{A}_{L+1}\) are affine maps.}
\label{fig:FC_Approx_intro}
\end{figure}

\subsection{Main contributions}

The goal of this paper is to establish quantitative approximation rates simultaneously across \emph{all layers} of a deep neural network. More precisely, we construct a single shared architecture such that, at each layer \(\ell\), the corresponding readout \(\Phi_\ell\) already approximates the target function with a rigorous error bound; see Figure~\ref{fig:MGNN_Approx_intro}. Throughout the paper, the approximation guarantee is formulated for the layer-$\ell$ readout \(\Phi_\ell\), namely, the function obtained from the first \(\ell\) hidden blocks together with its affine output head, rather than for the raw hidden feature state itself.

\begin{figure}[ht]
\centering
\includegraphics[width=0.92585\textwidth]{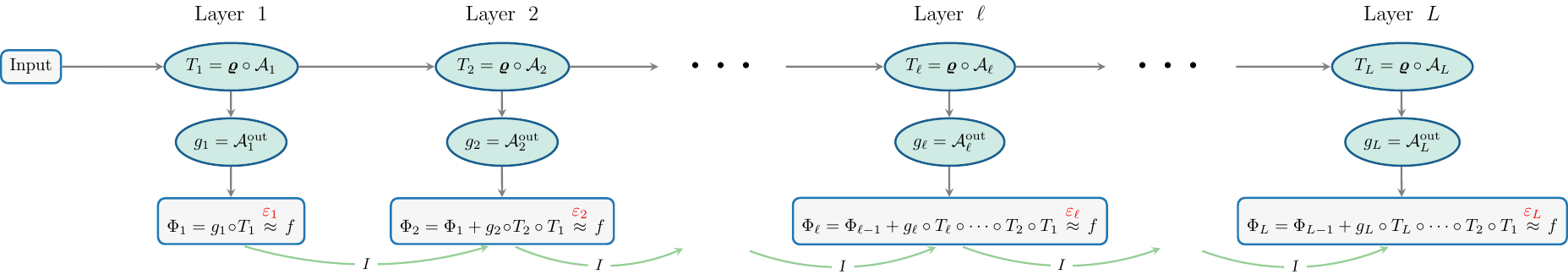}
\caption{
Each block $T_\ell=\bmvarrho\circ\mathcal{A}_\ell$ represents one hidden layer, while $g_\ell=\mathcal{A}^{\mathrm{out}}_\ell$ is an affine output head. The layer-$\ell$ readout $\Phi_\ell$ is required to approximate the target function with its own explicit error bound.}
\label{fig:MGNN_Approx_intro}
\end{figure}

To illustrate our main results, we first consider the approximation of a \(1\)-Lipschitz target function  $f$ on \([0,1]^d\), although the principal results in Section~\ref{sec:main_results} are formulated in the more general setting of \(L^p([0,1]^d)\) with \(p\in[1,\infty)\). As established in Theorem~\ref{thm:main:Lp} (or Corollary~\ref{cor:holder}), 
we design a single shared mixed-activation architecture of fixed width
\(2dN+d+2\) and arbitrarily prescribed depth such that each intermediate readout \(\Phi_\ell\)
is itself an approximant to the target function, i.e., 
\[
\|f-\Phi_\ell\|_{L^p([0,1]^d)}\le (2d+1)N^{-\ell}
\quad\textnormal{for all  } \ell.
\]
Hence, if a target tolerance $\varepsilon>0$ is prescribed, one may keep increasing the depth until $(2d+1)N^{-\ell}\le\varepsilon$. For example, with the fixed choice $N \ge 2$, it suffices to take
\(\ell\ge \log_N \bigl(\tfrac{2d+1}{\varepsilon}\bigr),\)
\blue{
and the resulting number of real-valued affine coefficients is of order
\(\calO \left(d^2\log_N \!\frac{2d+1}{\varepsilon}\right).\)
Thus the required \emph{real-parameter count} grows only logarithmically in
\(\varepsilon^{-1}\) and polynomially in \(d\), rather than exponentially in \(d\).
This should not be confused with a bounded-weight or bit-complexity statement:
the two-sine decoder may require large, high-precision coefficients in order to
encode the \(N^{d\ell}\) cellwise values at level \(\ell\). We stress that the present
result is an approximation-theoretic expressivity statement; it does not address
optimization, generalization, numerical stability, or arithmetic complexity.
}

The contributions of the paper may be summarized as follows. First, we prove an explicit layer-wise approximation theorem, in which each intermediate readout is controlled by the $L^p$ modulus of continuity of the target at the corresponding geometric scale.
\begin{colorenv}
    Second, for every prescribed terminal depth \(L\), we obtain a finite
algebraically nested family of readouts: the earlier correction terms are
retained inside the later readouts. This gives a rigorous form of finite-horizon
depth refinement, but does not assert extension-consistency of the parameters
when the terminal depth is increased.
\end{colorenv}
Third, we show that a simple mixed activation, with two sine channels together with shared \ReLU\ blocks, is sufficient to realize this refinement mechanism in a fixed-width architecture. 
Finally,
our architecture and approximation scheme are explicit,
so the theory can be read as a constructive foundation for multigrade deep learning (MGDL) \cite{Xu2023,shijun:mgdl:relu:decay}.

\subsection{Connection with MGDL and ResNets}

The architecture and analysis developed in this paper are motivated by the
multigrade deep learning (MGDL) framework introduced in \cite{Xu2023}. In MGDL,
approximation is organized as a progressive, grade-wise process. Instead of
fitting the target function in one monolithic step, one successively adds
refinement blocks whose purpose is to reduce the residual error left by the
previous grades. The transformations constructed at earlier grades are kept
fixed, while the new block is composed with them to capture the remaining
unresolved component. This viewpoint preserves the compositional expressive
power of deep networks, but decomposes the learning or approximation task into
a sequence of smaller and more interpretable refinement problems.

A theoretical analysis of this idea in the purely \(\ReLU\) setting was carried
out in \cite{shijun:mgdl:relu:decay}. There, a constructive universal
approximation theorem for MGDL was proved, and the approximation error was
shown to decrease monotonically and converge to zero as the number of grades
increases. The present paper develops a different, quantitative aspect of the
same philosophy. Rather than focusing only on eventual convergence, we assign
an explicit approximation bound to \emph{every} intermediate readout. Thus each
grade is not merely a step in an algorithmic procedure, but an approximation
object with its own certified error estimate. We emphasize that this stronger
layer-wise statement relies on a mixed sine--\(\ReLU\) architecture; the
monotonicity property available in the purely \(\ReLU\) MGDL setting is not
asserted here.

The approximation at grade \(\ell\) takes the residual form
\[
    \Phi_\ell=\Phi_{\ell-1}+\Gamma_\ell,
\]
where \(\Phi_\ell\) is the layer-\(\ell\) readout and \(\Gamma_\ell\) is the
newly appended correction term. In the present construction, this identity has
a direct approximation-theoretic meaning. The term \(\Gamma_\ell\) is designed
to capture the part of the target that remains unresolved by
\(\Phi_{\ell-1}\), and it does so at a geometrically finer scale. Passing from
grade \(\ell-1\) to grade \(\ell\) therefore corresponds to refining the
approximation from scale \(N^{-(\ell-1)}\) to scale \(N^{-\ell}\). In this
sense, depth is not only a measure of architectural size, but also a resolution
parameter. The readouts form a nested family of approximants: previously
constructed correction terms remain embedded in all later readouts, and finer
accuracy is obtained by appending new grades rather than redesigning the
earlier part of the model.
This residual form makes MGDL structurally related to residual networks. In
both frameworks, new blocks contribute additive updates rather than replacing
the current state entirely. The identity
\(\Phi_\ell=\Phi_{\ell-1}+\Gamma_\ell\) is therefore formally analogous to the
update rule in ResNets \cite{7780459}, and this connection is consistent with
the strong expressive power established for residual-type architectures,
including in thin regimes and in dynamical-systems formulations
\cite{NEURIPS2018_03bfc1d4,2026arXiv260315363C,LiLinShen2023DynamicalSystems,doi:10.1137/23M1599744}.

The mathematical interpretation, however, is different. In a standard ResNet,
an intermediate hidden state is typically viewed as an internal feature
representation whose role is to help produce the final output after the full
network has been trained jointly. Approximation guarantees are therefore
usually formulated for the terminal network, and intermediate layers are not
normally assigned independent approximation meaning. Moreover, when additional
residual blocks are appended, the optimization problem is usually changed
globally; earlier layers are not designed to remain fixed approximants at
coarser resolutions. In the MGDL viewpoint used here, by contrast, each readout
\(\Phi_\ell\) is an approximant in its own right. Its interpretation is
preserved when further grades are added, because the previous grades remain
fixed and the new term \(\Gamma_\ell\) is constructed to reduce the residual
left by \(\Phi_{\ell-1}\). Thus the residual structure is not used only as an
architectural or optimization device; it implements a genuine multilevel
approximation mechanism.

Finally, the two viewpoints also emphasize different strengths from an
application-oriented perspective. Standard ResNets have been especially
successful in representation-learning tasks such as image classification,
where the network is mainly used to extract hierarchical features for a
downstream decision rule. In such settings, the effective target is often
low-dimensional or discrete, and the central issue is the construction of
robust feature representations. MGDL is more naturally aligned with regression,
operator learning, and PDE-related problems, where the target itself is a
function with multiscale or oscillatory structure. In these settings, a
grade-wise residual mechanism is well suited to progressively resolving
finer-scale components and high-frequency behavior. This distinction is only
heuristic rather than absolute: the two frameworks are not restricted to
disjoint problem classes, but they are motivated by different mathematical
perspectives, namely feature extraction in standard ResNets and progressive
function approximation in MGDL.

\subsection{Related work}

We next comment on the relation of our work to the broader literature. Classical universal approximation theorems \cite{Cybenko1989ApproximationBS,HORNIK1989359,HORNIK1991251} establish the density of neural network classes, but they are qualitative in nature. Subsequent works developed quantitative approximation theory, clarifying how depth, width, and compositional structure affect approximation efficiency \cite{yarotsky2017,yarotsky18a,B_lcskei_2019,shijun:Characterized:by:Numer:Neurons,shijun:optimal:rate:in:width:and:depth,shijun:smooth:functions,shijun:arbitrary:error:with:fixed:size,shijun:three:layers,shijun:net:arc:beyond:width:depth,shijun:NonlineArpprox,Pinkus1999MLP}. These studies mainly focus on representation power, and typically characterize the approximation ability of the final network only.
Beyond universal approximation, a useful perspective is to interpret deep networks through the
lens of nonlinear approximation, where the key issues are adaptivity, stability, and the relation
between approximation rate and description complexity
\cite{Ingrid,cohen2020optimal,shijun:intrinsic:parameters,yang2020approximation}.

A complementary line of research investigates alternative activations and structured architectures to improve expressivity, especially for oscillatory or high-frequency targets \cite{shijun:2023:beyond:ReLU:to:diverse:actfun,ZZZZ-25-FMMNN}. Our work is related to this direction, but differs in an essential way: rather than using mixed activations merely to enlarge a static hypothesis class, we incorporate them into a recursive multigrade framework and show that they yield explicit approximation improvement at every grade. Another related direction concerns optimization dynamics and spectral bias \cite{rahaman2019spectral,xu2019training,luo2019theory,arora2022understanding,cohen2021gradient}. These works provide important evidence that standard gradient-based training favors low-frequency structure and may struggle with fine-scale details. Our results complement this literature from the approximation perspective: instead of analyzing training dynamics directly, we construct an architecture whose multigrade organization is intrinsically adapted to progressive coarse-to-fine refinement.
Periodic and hybrid activations have emerged as a natural mechanism for representing oscillatory
or high-frequency structure, both in approximation theory and in implicit neural representations
\cite{NEURIPS2020_55053683,NEURIPS2020_53c04118,doi:10.1137/19M1310050,SIEGEL20221,doi:10.1137/21M144431X,pmlr-v139-yarotsky21a,shijun:2023:beyond:ReLU:to:diverse:actfun}.

There is also substantial literature on residual learning, layer-wise training, and multilevel refinement models \cite{7780459,Bengio2007,6796673,oreshkin2020nbeats,NEURIPS2023_1d5a9286,9614997}. These approaches share the idea that learning can be improved by decomposing a difficult task into simpler stages. MGDL \cite{Xu2023} belongs to this family and has demonstrated strong empirical performance in regression and PDE-related problems \cite{FangXu2024,FangXu2025,Jiang-Xu2025,XuZeng2023}. Compared with previous MGDL analyses \cite{shijun:mgdl:relu:decay}, the present work provides a much sharper theoretical picture: we establish a  constructive fixed-width approximation framework, introduce a simple shared mixed-activation design, and prove explicit geometric refinement bounds valid at every layer.

\vspace{5pt}
The remainder of this paper is organized as follows. 
\begin{colorenv}
    In Section~\ref{sec:main_results}, we present the main approximation results, including the layer-wise \(L^p\) approximation theorem, its H\"older and Lipschitz consequences, the associated complexity estimates, and extensions involving \(L^p\) moduli of continuity and general rectangular boxes.
\end{colorenv}
Section~\ref{sec:preparations:proof:main} collects the main ingredients for the proof, including the geometric partition of the domain, the encoder--decoder realization of the network architecture, and the oscillation estimate on cubes. 
With these preparations in place, Section~\ref{sec:proof:main} gives the proof of the main theorem. 
Finally, Section~\ref{sec:conclusion} concludes the paper with final remarks.

\section{Main results}
\label{sec:main_results}

This section introduces the notation used throughout the remainder of the paper and presents the main approximation results. 
After setting up the notation in Section~\ref{sec:notation}, we state in Section~\ref{sec:main:thm} a layer-wise \(L^p\) approximation theorem for a fixed-width mixed-activation architecture. 
This result shows that each intermediate layer already yields a meaningful approximant to the target function. 
In Section~\ref{sec:holder_complexity}, we specialize the result to the H\"older and Lipschitz settings and derive complexity estimates for the depth and total number of affine parameters required to attain a prescribed accuracy. 
Section~\ref{sec:Lp_modulus_discussion} discusses the \(L^p\) modulus of continuity in detail and compares it with the classical modulus of continuity. 
Finally, in Section~\ref{sec:rectangular-boxes}, we extend Theorem~\ref{thm:main:Lp} to general rectangular boxes.

\subsection{Notation}
\label{sec:notation}

We summarize below the basic notation used throughout the paper.

\begin{itemize}
\item The symbols \(\mathbb{N}\), \(\mathbb{N}^+\), \(\mathbb{Z}\), \(\mathbb{Q}\), and \(\mathbb{R}\) stand for the sets of natural numbers (including \(0\)), positive natural numbers, integers, rational numbers, and real numbers, respectively.

 \item For any $n\in\mathbb{N}^+$ and $\bmx=(x_1,\dots,x_n)\in\mathbb{R}^n$, we define the mixed activation
 \begin{equation}
 \label{eq:act:varrho}
 \bmvarrho(\bmx)
 :=
 \bigl(\sin(x_1),\,\sin(x_2),\,\ReLU(x_3),\dots,\ReLU(x_n)\bigr).
 \end{equation}
If \(n\le 2\), this simply reduces to coordinatewise sine activation. In particular, the first two channels are reserved for the sinusoidal decoder, whereas the remaining channels are \ReLU\ channels used for geometric localization. We use the same symbol \(\bmvarrho\) for this dimension-dependent coordinatewise activation whenever the ambient dimension is clear from the context.

 \item For $n\in\mathbb{N}^+$, we denote by $\aff_{\le n}$ the collection of affine maps whose input and output dimensions are both at most $n$, that is,
 \[
 \aff_{\le n}
 :=
 \bigcup_{1\le k,m\le n}
 \bigl\{\bmx\mapsto \bmW\bmx+\bmb : \bmW\in\mathbb{R}^{m\times k},\ \bmb\in\mathbb{R}^m\bigr\}.
 \]

 \item For a family of maps $\{T_i\}_{i=n}^m$ with $n\le m$, we use the shorthand
 \[
 \ocirc_{i=n}^m T_i := T_m\circ T_{m-1}\circ\cdots\circ T_n.
 \]

\item For any \(p\in[1,\infty)\) and any vector \(\bmx=(x_1,\dots,x_d)\in\R^d\), we define its \(p\)-norm (equivalently, its \(\ell^p\)-norm) by
\[
\|\bmx\|_p=\|\bmx\|_{\ell^p}:=
\left(\sum_{i=1}^d |x_i|^p\right)^{1/p}.
\]

 \item 
We use \(\mathtt{Id}\) to denote the identity map on \(\mathbb{R}^d\) for any \(d\in\mathbb{N}^+\).

\item 
For a measurable set \( Q\subset \mathbb{R}^d\) and \(p\in[1,\infty)\), we denote by \(L^p(Q)\) the usual Lebesgue space of (equivalence classes of) real-valued measurable functions \(f\) on \(Q\) such that
\[
\|f\|_{L^p(Q)}
:=
\left(\int_{Q} |f(\bm{x})|^p\,d\bm{x}\right)^{1/p}
<\infty.
\]
\blue{Throughout, a.e. means almost everywhere with respect to Lebesgue measure.
Since \(L^p\)-functions are equivalence classes, whenever a pointwise expression
involving an \(L^p\)-function is written, we fix an arbitrary measurable
representative. All pointwise identities involving \(f\), the residuals \(f_\ell\),
or cell averages are understood a.e.; all \(L^p\) estimates and all averages
\(A_Q(f)\) are invariant under changes on null sets.}

\item For any two sets \(A\) and \(B\), their set difference is defined by
\[
A \setminus B \coloneqq \{x \in A : x \notin B\}.
\]

\item Given a set \(Q \subseteq \mathbb{R}^d\) and a vector \(\bmy \in \mathbb{R}^d\), we define
\[
Q-\bmy \coloneqq \{\bmx-\bmy : \bmx \in Q\},
\]
which is the translation of \(Q\) by the vector \(-\bmy\).

 \item If $Q\subset\mathbb{R}^d$ is measurable, then $|Q|$ denotes its Lebesgue measure. If $0<|Q|<\infty$ and $f\in L^1(Q)$, we write
 \[
 A_Q(f):=\frac{1}{|Q|}\int_Q f(\bmx)\,d\bmx
 \]
 for the average of $f$ over $Q$.

 \item Given $\bmh\in\mathbb{R}^d$, we let
 \[
 E_{\bmh}:=\{\bmx\in[0,1]^d : \bmx+\bmh\in[0,1]^d\}.
 \]
 
 \item For $f\in L^p([0,1]^d)$ and $t\ge 0$, the $L^p$ modulus of continuity is defined by
 \begin{equation*}
 \omega_{f,p}(t)
 :=
 \sup_{\|\bmh\|_2\le t}
 \|f(\cdot+\bmh)-f(\cdot)\|_{L^p(E_{\bmh})}.
 \end{equation*}
The function \(t\mapsto \omega_{f,p}(t)\) is nondecreasing. Moreover, if \(f\in L^p([0,1]^d)\), then
$\omega_{f,p}(t)\to 0$
as $t\to 0^+$.
We discuss this notion in greater detail, and compare it with the classical modulus of continuity, in Section~\ref{sec:Lp_modulus_discussion}.

\end{itemize}

\subsection{Main theorem}
\label{sec:main:thm}

We now state the main theorem of the paper. It shows that a single mixed-activation architecture of fixed width gives rise to a sequence of approximants \(\{\Phi_\ell\}_{\ell=0}^L\), one at each layer, and that each of these approximants enjoys a quantitative \(L^p\) error bound at its own geometric scale.

\begin{theorem}
\label{thm:main:Lp}
Let \(f\in L^p([0,1]^d)\) with \(p\in[1,\infty)\), and let \(N,L\in\mathbb{N}^+\). Then there exist affine maps
\[
\mathcal{A}_{i},\ \mathcal{A}_{j}^{\mathrm{out}} \in \aff_{\le 2dN+d+2}
\quad\textnormal{for }i=-1,0,\dots,L\ \textnormal{and }j=0,1,\dots,L,
\]
with the following property: for every \(\ell=0,1,\dots,L\), the layer-$\ell$ readout
\[
\Phi_{\ell}
:=
\sum_{j=0}^{\ell}
\mathcal{A}_{j}^{\mathrm{out}}
\circ
\ocirc_{i=-1}^{j}(\bmvarrho\circ \mathcal{A}_{i})
\]
satisfies
\[
\|f-\Phi_{\ell}\|_{L^p([0,1]^d)}
\le
(2d+1)\,\omega_{f,p}(N^{-\ell}).
\]
\end{theorem}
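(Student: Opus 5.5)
Here is the plan. Let \(\mathcal{Q}_\ell\) be the uniform partition of \([0,1]^d\) into \(N^{d\ell}\) half-open cubes of side \(N^{-\ell}\), with the boundary convention that the last cells are closed. The target approximant at grade \(\ell\) is the piecewise-constant average
\[
P_\ell f:=\sum_{Q\in\mathcal{Q}_\ell}A_Q(f)\,\mathds{1}_Q .
\]
Set \(\Gamma_0=P_0f\) and \(\Gamma_\ell=P_\ell f-P_{\ell-1}f\). Then \(\Phi_\ell=\sum_{j\le\ell}\Gamma_j=P_\ell f\) holds exactly on the cells, up to a null set. The proof therefore splits into an analytic part, namely the bound \(\|f-P_\ell f\|_{L^p}\le(2d+1)\,\omega_{f,p}(N^{-\ell})\), and a realization part, namely building the network so that the \(j\)-th head outputs \(\Gamma_j\) a.e.

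\textbf{Analytic part.} By Jensen,
\[
|f(\bmx)-A_Q f|^p\le \frac{1}{|Q|}\int_Q|f(\bmx)-f(\bmy)|^p\,d\bmy .
\]
Write \(\bmy=\bmx+\bmh\) with \(\bmh\) ranging over \(Q-\bmx\subset[-\delta,\delta]^d\), where \(\delta=N^{-\ell}\). Such an \(\bmh\) can have Euclidean norm up to \(\sqrt d\,\delta\), which is larger than \(\delta\). To stay at scale \(\delta\), I would not compare directly. Instead I would interpolate coordinatewise: pass from \(\bmx\) to \(\bmy\) through the \(d+1\) points that change one coordinate at a time, all of which stay in \(Q\), and use the triangle inequality in \(L^p\) to get \(d\) terms. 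Each term is a one-directional shift with \(|h_i|\le\delta\). Averaging over \(Q\times Q\) and changing variables, each term is bounded by \(\omega_{f,p}(\delta)\) up to a constant. I expect the careful bookkeeping, which averages over \(\bmh\) and uses that shifted points remain in \([0,1]^d\), to produce a constant like \(2d\) or \(d\). The remaining \(+1\) should absorb the head contribution or boundary effects. I would aim for exactly \(2d+1\) by splitting each one-dimensional step into two half-steps through the cell midpoint; this is the oscillation estimate on cubes announced for Section~\ref{sec:preparations:proof:main}.

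\textbf{Realization part.} This is where the layer architecture enters.

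\emph{Layer \(-1\) (encoder).} The first layer \(\mathcal{A}_{-1}\) copies \(\bmx\) into \(d\) ReLU channels, using \(\ReLU(x)=x\) on \([0,1]\), and keeps the two sine channels for later use.

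\emph{Layer \(\ell\) (digit extraction).} Layer \(\ell\) uses \(2dN\) ReLU units to extract the \(\ell\)-th base-\(N\) digit of each coordinate: \(2N\) ReLUs per coordinate form a step function equal to \(\lfloor N r\rfloor\) except on an \(\epsilon\)-strip. It then updates the fractional remainder \(r\mapsto Nr-\lfloor Nr\rfloor\). It also accumulates the cell index
\[
k_\ell=\sum_i \bigl(\text{digit string of coordinate } i\bigr)\in\{0,\dots,N^{d\ell}-1\}
\]
in one carried channel via \(k_\ell=N^d k_{\ell-1}+\text{new digits}\). That channel is affine in the previous one, so the width is \(d\) remainders, \(2dN\) step units, one index channel, and two sine channels, giving \(2dN+d+2\).

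\emph{Decoder.} The sine channels decode the cell values. Given an integer index \(k\), I would invoke the standard two-sine decoding lemma: for distinct integers and arbitrary target values \(c_k\in[-1,1]\) with tolerance \(\eta\), there is \(\theta\) such that \(\sin(\theta k)\) approximates \(c_k\). This follows from density of \(\{(\theta k \bmod 2\pi)_k\}\), by Kronecker's theorem with \(\theta\) chosen so that \(\theta,\pi\) are rationally independent. To get exact values I would use both sine channels: pick \(a,b\) so that \(a\sin(\theta k)+b\sin(\theta' k)\) interpolates \(\Gamma_\ell\) on the finitely many cells. Interpolating finitely many points would need more parameters in general, so instead I accept an arbitrarily small error \(\eta\), which suffices because the inequality is non-strict only when \(\omega_{f,p}(N^{-\ell})>0\). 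In the degenerate case \(\omega=0\), \(f\) is a.e. constant and the construction is trivial. The head \(\mathcal{A}^{\mathrm{out}}_\ell\) then reads off this combination, and the residual-sum structure of \(\Phi_\ell\) telescopes.

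\textbf{Main obstacle.} I expect the main difficulty to be meeting the exact constant \(2d+1\) once the realization errors are included. The ReLU step functions are wrong on thin strips of measure \(\calO(\epsilon)\), and the sine decoder has error \(\eta\). I would make both errors small enough that their total \(L^p\) contribution is at most the slack left after proving the cube-oscillation bound with constant \(2d\). The extra \(\omega_{f,p}(N^{-\ell})\) term, with \(\epsilon\) and \(\eta\) chosen depending on \(f\) and \(L\), absorbs them, and the case \(\omega=0\) is handled separately as above.
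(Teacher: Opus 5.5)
Your analytic part works. The coordinate-by-coordinate path inside a cell, the inequality $|\sum_{i=1}^d a_i|^p\le d^{p-1}\sum_i|a_i|^p$, and summation over all cells for a fixed one-directional shift give $\|f-P_\ell f\|_{L^p}\le 2^{1/p}d\,\omega_{f,p}(N^{-\ell})\le 2d\,\omega_{f,p}(N^{-\ell})$. Note that $\omega_{f,p}$ enters only after that summation, not cell by cell. This is the same constant as the paper's averaging-operator proof of Proposition~\ref{prop:f-aver:p:norm:upper_bound}, and the midpoint half-steps are unnecessary. Your treatment of the bad strips and of the degenerate case is in the same spirit as the paper's.

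\textbf{The gap is the decoder.} The lemma you invoke is false already for $K=2$: it claims that for prescribed $c_1,\dots,c_K\in[-1,1]$ some $\theta$ gives $\sin(\theta k)\approx c_k$ for all $k$. With $c_1=0$ and $c_2=1$, $|\sin\theta|<\eta$ forces $|\sin 2\theta|<2\eta$.

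Kronecker's theorem is misapplied. Irrationality of $\theta/\pi$ makes $\{k\theta \bmod 2\pi\}$ dense as $k$ ranges over $\mathbb{N}$. But your labels $k=1,\dots,K$ are fixed by the encoder, and you need the single vector $(k\theta)_{k=1}^K$ to approach an arbitrary point of $\mathbb{T}^K$ as $\theta$ varies. Since the frequencies $1,\dots,K$ are rationally dependent, that curve has one-dimensional closure.

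The same obstruction defeats any single sine applied to an affine function of the digits. The phase vector then lies in the span of $d+1$ integer vectors, whose image in $\mathbb{T}^K$ is a subtorus of dimension at most $d+1$. Your parallel fallback $a\sin(\theta k)+b\sin(\theta' k)$ also fails. Each $\sin(k\theta)$ is a polynomial in $(\cos\theta,\sin\theta)$, so these vectors form a semialgebraic subset of $\mathbb{R}^K$ of dimension at most $4$. That set is nowhere dense once $K=N^{d\ell}\ge 5$, so tolerating an error $\eta$ does not help.

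\textbf{The missing idea is to nest the two sines}, $k\mapsto u\sin(v\sin(wk))$ with $u=\max_k|y_k|$ (Proposition~\ref{prop:k:to:yk}). Choose $w$ outside the countable union of zero sets of the nonzero functions $2\pi c_0+\sum_{j=1}^K c_j\sin(jt)$, $c\in\mathbb{Z}^{K+1}$. Then $1,\sin(w)/2\pi,\dots,\sin(Kw)/2\pi$ are linearly independent over $\mathbb{Q}$. The inner sine thus converts the rationally dependent integer labels into rationally independent numbers. Only then does Kronecker's theorem give an integer $v$ with $v\sin(kw)\approx\arcsin(y_k/u)$ modulo $2\pi$ simultaneously for all $k$. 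Your labels must start at $1$, since $k=0$ always decodes to $0$.

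Nesting also changes your layer schedule. The outer sine for level $j$ naturally sits in layer $j$ and the inner one in layer $j-1$, so the level-$j$ label must be affinely available after layer $j-2$. Digit extraction therefore has to start in layer $-1$ itself rather than after a copy layer; this is what the auxiliary layers $-1$ and $0$ in the theorem accommodate. Otherwise $\Gamma_j$ reaches $\mathcal{A}^{\mathrm{out}}_j$ too late and your bound for $\Phi_\ell$ degrades by two levels.

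Two smaller repairs:
\begin{itemize}
\item Your width count $d+2dN+1+2$ equals $2dN+d+3$. An $N$-level step proxy needs only $2N-2$ ReLUs per coordinate, giving $2dN-d+3\le 2dN+d+2$.
\item You decode $P_\ell f-P_{\ell-1}f$ rather than the actual residual $f-\Phi_{\ell-1}$, as the paper does. Your decoder errors therefore accumulate over levels, so take the per-level tolerance of order $\omega_{f,p}(N^{-L})/(L+1)$.
\end{itemize}
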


\begin{colorenv}
\begin{remark*}[Readout levels and parameter scope]
The index \(\ell\) in Theorem~\ref{thm:main:Lp} denotes the refinement level of
the readout \(\Phi_\ell\), rather than the literal number of hidden layers. The
construction uses auxiliary blocks indexed by \(-1\) and \(0\), so \(\Phi_\ell\)
is realized by the prefix up to \(A_\ell\), involving \(\ell+2\) activation
blocks.
Each \(\Phi_\ell\) is a finite composition and sum of affine maps, \(\ReLU\), and
sine functions, and is therefore continuous and Borel measurable on
\([0,1]^d\). Consequently, \(f-\Phi_\ell\) is measurable and the
\(L^p\)-norm in the theorem is well defined.
The theorem fixes the width and counts real-valued affine parameters, but does
not provide bounded-weight, numerical-stability, or bit-complexity guarantees.
In particular, the sine decoder in Proposition~\ref{prop:k:to:yk} may require
very large coefficients in order to encode the \(N^{d\ell}\) cellwise values at
level~\(\ell\).
\end{remark*}
\end{colorenv}

The proof of Theorem~\ref{thm:main:Lp} is postponed to
Section~\ref{sec:proof:main}. We begin by discussing several consequences and
interpretations of the theorem. Later, in
Section~\ref{sec:rectangular-boxes}, we record a rescaled version of Theorem~\ref{thm:main:Lp} on a general
rectangular box \(Q=\prod_{i=1}^d [a_i,b_i]\). In that setting, the unit-cube scale \(N^{-\ell}\) is
replaced by the corresponding physical scale of the box. Moreover, when the
estimate is stated under pointwise H\"older/Lipschitz regularity, an additional volume
factor \(|Q|^{1/p}\) appears, so the constant may depend on the dimension
through the size of the domain.

Several features of Theorem~\ref{thm:main:Lp} deserve emphasis. First, the theorem gives a genuinely \emph{layer-wise} approximation statement: the intermediate readouts are not merely hidden feature representations, but are themselves valid approximants to the target function. Second, the error estimate is quantitative and explicit at every depth. The bound is governed by \(\omega_{f,p}(N^{-\ell})\), which means that the \(\ell\)-th layer captures the behavior of \(f\) at resolution \(N^{-\ell}\). Thus, increasing depth corresponds to progressively refining the approximation scale in a controlled and predictable manner.
This multilevel interpretation is illustrated in Figure~\ref{fig:thm:main:MGNN_Approx}. At each layer \(\ell\in\{0,1,\dots,L\}\), the corresponding approximant \(\Phi_\ell\) satisfies an \(L^p\) error bound at the scale \(N^{-\ell}\). Hence the sequence \(\{\Phi_\ell\}\) should be viewed not as a collection of auxiliary intermediate states, but as a nested hierarchy of approximants with increasingly sharp certified error bounds.

\begin{figure}[ht]
\centering
\includegraphics[width=0.92585\textwidth]{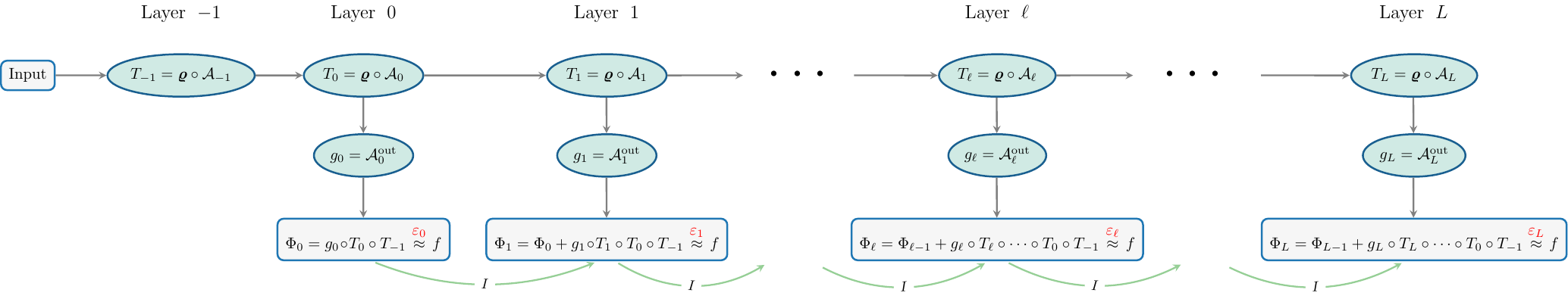}
\caption{Multilevel interpretation of Theorem~\ref{thm:main:Lp}. For each \(\ell\in\{0,1,\dots,L\}\), the readout \(\Phi_\ell\) approximates \(f\) with
\(\eps_\ell \le (2d+1)\,\omega_{f,p}(N^{-\ell})\). For convenience, we index the layers from \(0\) and regard \(\bmvarrho\circ \calA_{-1}\) as an initialization layer rather than a true hidden layer.
}
\label{fig:thm:main:MGNN_Approx}
\end{figure}

 Theorem~\ref{thm:main:Lp} establishes a genuine multilevel approximation principle within a fixed-width architecture. If one writes
\(\Phi_\ell=\Phi_{\ell-1}+\Gamma_\ell,\)
where \(\Gamma_\ell\) denotes the newly appended correction term at level \(\ell\), then each deeper model is obtained by refining the preceding one rather than replacing it. This nested structure stands in clear contrast to the usual approximation viewpoint for deep networks, where increasing depth often leads to a global redesign of the model and a complete retraining of the full architecture. Here, by comparison, depth acts as a refinement mechanism: one appends a new correction only when finer accuracy is needed. This naturally supports an \emph{adaptive depth construction}, in which the approximation can be stopped as soon as the desired tolerance has been reached.

Theorem~\ref{thm:main:Lp} also provides a rigorous approximation-theoretic foundation for MGDL. In the MGDL paradigm, the target is approximated progressively, grade by grade, with each new grade correcting the residual left by the previous ones. Earlier work \cite{Xu2023} developed this viewpoint mainly from the algorithmic and empirical perspective, while \cite{shijun:mgdl:relu:decay} established monotone error decay and convergence to zero in the purely \ReLU\ framework. The present theorem extends this line of research from a quantitative approximation viewpoint by showing that \emph{every} intermediate layer satisfies an explicit approximation bound. At the same time, this stronger layer-wise characterization comes with a structural distinction: it relies on a mixed-activation architecture, and therefore the monotonicity property available in the pure \ReLU\ setting is no longer guaranteed.

This naturally brings us to another essential feature of our result, namely, the role of the mixed activation \(\bmvarrho\). The use of \(\bmvarrho\) is not a superficial variation of existing architectures, but a crucial ingredient of the explicit construction underlying Theorem~\ref{thm:main:Lp}. More precisely, the \(\ReLU\) channels are responsible for localizing the input and generating the underlying geometric partition of the domain, whereas the sine channels are used to realize the target values on the resulting cells. In this way, geometric localization and value realization are separated in function yet coordinated in the overall construction. Such a division of labor is particularly effective in multiscale approximation, where one needs both precise geometric control and sufficient flexibility to encode oscillatory or fine-scale behavior. Purely \ReLU\ networks, owing to their piecewise linear structure, are naturally well suited for localization, but they may be comparatively inefficient for representing highly oscillatory components. By contrast, sinusoidal activations encode oscillatory patterns much more directly. 
\begin{colorenv}
Research on periodic activations can be traced back to the 1980s; for example, \cite{Gallant1988ThereEA} considered Fourier neural networks using cosine activation. Since then, a substantial literature has emerged around periodic representations, including the frequency principle \cite{xu2019frequency} and Fourier-feature-based networks \cite{NEURIPS2020_55053683}. As far as we know, the first approximation-theoretic result for  sine-\ReLU\ networks was obtained in \cite{yarotsky:2019:06}, which implies that any $1$-Lipschitz function on $[0,1]^d$ can be approximated with error $\exp(-c_d\sqrt{W})$ by a sine-\ReLU\ network with $O(W)$ parameters. 
However, this result is asymptotic in nature. It asserts only the existence of a suitable network with \(W\) parameters for sufficiently large \(W\), without quantifying how large \(W\) must be or characterizing the required architecture. Consequently, for a given network with \(W\) parameters, the approximation rate cannot be inferred from \(W\) alone. By contrast, quantitative approximation results stated directly in terms of width and depth are more informative, since the approximation rate can be explicitly estimated once the architecture is specified. For example,
it is shown in \cite{shijun:floor:relu} that a Floor-\ReLU{} network achieves an approximation error of order $3\sqrt d\,N^{-\sqrt L}$ using width $\max\{d,5N+3\}$ and depth $64dL+3$. The later work \cite{ZZZZ-25-FMMNN} further demonstrates that a sine-\ReLU{} network of width $d(4N-1)$ and depth $L+2$ satisfies $\|\phi-f\|_{L^p([0,1]^d)}\le 2\sqrt d\,N^{-L}$ for every $1$-Lipschitz target function, while each hidden-layer weight matrix has rank at most $3d$. It is worth noting that all of these constructions employ mixed activations, but the activation assigned to each hidden neuron is not specified explicitly. The present paper makes this assignment explicit by fixing two sine channels and using \ReLU{} activations for all remaining channels. More importantly, these works suggest a natural division of labor: periodic activations are particularly effective for encoding and decoding fine-scale or oscillatory information, whereas \ReLU{} activations provide the geometric localization and partition structure required for multiscale approximation.

The present paper combines these two mechanisms in a layer-wise manner.
Previous results mainly provide approximation guarantees for the final output of a network.
By contrast, our construction gives a single shared architecture in which every intermediate readout is already a certified approximant.
In the simple case of a $1$-Lipschitz target function, our result gives
\[
    \|f-\Phi_\ell\|_{L^p([0,1]^d)}
    \le
    (2d+1)N^{-\ell}\quad \tn{for }
    \ell=0,1,\ldots,L.
\]
Hence the layer index $\ell$ has a direct approximation-theoretic meaning: the readout $\Phi_\ell$ resolves the target at the geometric scale $N^{-\ell}$.
Although we state this interpretation for $1$-Lipschitz functions for clarity, the actual theorem is proved for general functions in $L^p([0,1]^d)$, with the error controlled by the $L^p$ modulus of continuity.

This layer-wise estimate is the key advantage of the present architecture for multigrade learning.
A deeper model is obtained by appending a new correction term, while the previously constructed readouts remain embedded in the later approximants.
Thus, depth is not merely used to improve the final output; it acts as a progressive refinement parameter.

Our hybrid architecture exploits the complementary strengths of sine and \ReLU{} activations in a completely explicit manner.
The coordinate-wise activation assignment in \eqref{eq:act:varrho} is fixed in advance: only the first two channels in each hidden layer use sine activations, while all remaining channels use \ReLU{}.
The \ReLU{} channels propagate geometric cell labels and implement localization, whereas the two sine channels realize the decoder
\(    k \mapsto u_\ell \sin\bigl(v_\ell\sin(w_\ell k)\bigr),\)
which assigns the prescribed correction value to each cell at level $\ell$.
No additional gating, routing, adaptive partitioning, or data-dependent activation selection mechanism is required.
\end{colorenv}

\subsection{Complexity under \(L^p\) H\"older regularity}
\label{sec:holder_complexity}

Theorem~\ref{thm:main:Lp} gives a scale-dependent error estimate in terms of the
translation modulus \(\omega_{f,p}\). We now spell out the consequences in the common
case where this modulus decays algebraically. This specialization turns the abstract
scale estimate of Theorem~\ref{thm:main:Lp} into a geometric layer-wise rate and, in
particular, gives an explicit choice of the depth needed to reach a prescribed accuracy.
We shall say that \(f\) has \(L^p\)-H\"older regularity of order \(\alpha\in(0,1]\) if there
exists a constant \(\lambda_p>0\) such that
\[
\omega_{f,p}(t)\le \lambda_p t^\alpha
\quad \textnormal{for } t\ge 0.
\]
The case \(\alpha=1\) corresponds to \(L^p\)-Lipschitz regularity. Substituting this
condition into Theorem~\ref{thm:main:Lp} gives the following immediate consequence.

\begin{corollary}
\label{cor:holder}
Assume that \(f\in L^p([0,1]^d)\), \(1\le p<\infty\), and that
\(\omega_{f,p}(t)\le \lambda_p   t^\alpha\) for $t\ge 0$, where
\(\alpha\in(0,1]\) and \(\lambda_p>0\). Then the approximants in
Theorem~\ref{thm:main:Lp} satisfy
\[
\|f-\Phi_\ell\|_{L^p([0,1]^d)}
\le
(2d+1)\lambda_p N^{-\alpha\ell}
\quad \tn{for }
\ell=0,1,\dots,L.
\]
In particular, if \(f\) is \(1\)-Lipschitz on \([0,1]^d\), then
\[
\|f-\Phi_\ell\|_{L^p([0,1]^d)}
\le
(2d+1)N^{-\ell}\quad \tn{for }
\ell=0,1,\dots,L.
\]
\end{corollary}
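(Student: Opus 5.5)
Corollary~\ref{cor:holder} follows by substituting the hypothesis into Theorem~\ref{thm:main:Lp}; no new construction is needed. Fix \(N,L\in\mathbb{N}^+\) and take the affine maps \(\mathcal{A}_i\), \(\mathcal{A}_j^{\mathrm{out}}\) supplied by Theorem~\ref{thm:main:Lp} for the given \(f\). The readouts \(\Phi_\ell\) are defined from these maps, and for each \(\ell=0,1,\dots,L\) they satisfy
\[
\|f-\Phi_\ell\|_{L^p([0,1]^d)}\le (2d+1)\,\omega_{f,p}(N^{-\ell}).
\]
Apply the assumption \(\omega_{f,p}(t)\le\lambda_p t^\alpha\) at \(t=N^{-\ell}\ge 0\). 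This gives \(\omega_{f,p}(N^{-\ell})\le \lambda_p N^{-\alpha\ell}\), and hence the first displayed bound.

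For the Lipschitz case, I will check that a \(1\)-Lipschitz \(f\) (in the pointwise sense, with respect to the Euclidean norm) satisfies the hypothesis with \(\alpha=1\) and \(\lambda_p=1\). Fix \(\bmh\) with \(\|\bmh\|_2\le t\). For \(\bmx\in E_{\bmh}\) we have \(|f(\bmx+\bmh)-f(\bmx)|\le \|\bmh\|_2\le t\). Since \(E_{\bmh}\subset[0,1]^d\) has measure at most \(1\), it follows that
\[
\|f(\cdot+\bmh)-f(\cdot)\|_{L^p(E_{\bmh})}\le t\,|E_{\bmh}|^{1/p}\le t.
\]
Taking the supremum over \(\bmh\) gives \(\omega_{f,p}(t)\le t\). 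The first part then yields \((2d+1)N^{-\ell}\).

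The only point needing any care is this measure bound \(|E_{\bmh}|\le 1\). It is what keeps the constant free of a volume factor on the unit cube, in contrast to the general boxes mentioned later. Beyond that there is no real obstacle, since all the substantive work is contained in Theorem~\ref{thm:main:Lp}.
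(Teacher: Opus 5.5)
Your proposal is correct and follows the paper's route. The corollary is obtained by substituting \(\omega_{f,p}(N^{-\ell})\le\lambda_p N^{-\alpha\ell}\) into Theorem~\ref{thm:main:Lp}. For the Lipschitz case, the paper argues in Section~\ref{sec:Lp_modulus_discussion} exactly as you do: \(|E_{\bm h}|\le 1\) turns the pointwise bound \(|f(\bm x+\bm h)-f(\bm x)|\le\|\bm h\|_2\) into \(\omega_{f,p}(t)\le t\), so no volume factor appears.
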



Corollary~\ref{cor:holder} shows that, under algebraic \(L^p\)-smoothness, the error
decays geometrically across the readouts. Thus the parameter \(\ell\) has a precise
approximation-theoretic meaning: the layer-\(\ell\) readout resolves the target at the
geometric scale \(N^{-\ell}\), and the exponent \(\alpha\) determines how effectively
regularity at that scale is converted into approximation accuracy.
The additive structure of the construction makes this interpretation especially transparent.
If
\[
\Gamma_\ell
:=
\mathcal A_\ell^{\mathrm{out}}
\circ
\ocirc_{i=-1}^{\ell}(\bmvarrho\circ\mathcal A_i)
\quad \tn{for }
\ell=0,1,\dots,L,
\]
then \(\Phi_\ell=\sum_{j=0}^{\ell}\Gamma_j\), or equivalently
\(\Phi_\ell=\Phi_{\ell-1}+\Gamma_\ell\) for \(\ell\ge 0\), where $\Phi_{-1}\equiv  0$. Hence each new layer contributes
a correction term to the previously constructed approximant. For every prescribed terminal
depth \(L\), the family \(\{\Phi_\ell\}_{\ell=0}^L\) is therefore nested in the algebraic sense
that the earlier readouts are retained inside the later ones.
This point of view also clarifies the role of the mixed activation. The \(\ReLU\) channels
are responsible for the geometric localization, while the sine channels are responsible for
cellwise value realization. 

We next record the resulting depth and parameter complexity. The estimate is most naturally
read as a stopping rule: choose the smallest depth for which the geometric bound in
Corollary~\ref{cor:holder} is below the desired tolerance. By Corollary~\ref{cor:holder}, the error at layer \(m\) is bounded by
\((2d+1)\lambda_p N^{-\alpha  m}\). Therefore it suffices to impose
\((2d+1)\lambda_p N^{-\alpha  m}\le \varepsilon\), which is equivalent to
\(m\ge \alpha^{-1}\log_N((2d+1)\lambda_p/\varepsilon)\). 
This yields the following corollary.

\begin{corollary}
\label{cor:Lp_complexity}
Let \(f\in L^p([0,1]^d)\), \(1\le p<\infty\), and assume that
\(\omega_{f,p}(t)\le \lambda_p  t^\alpha\) for \(t\ge 0\), where
\(\alpha\in(0,1]\) and \(\lambda_p>0\). Fix an integer \(N\ge 2\). Then, for every
\(\varepsilon>0\), it is enough to choose
\[
m
=
\max\left\{
1,\,
\left\lceil
\tfrac{1}{\alpha}
\log_N \tfrac{(2d+1)\lambda_p}{\varepsilon} 
\right\rceil
\right\}
\]
in order to obtain a readout \(\Phi_m\) satisfying
\[
\|f-\Phi_m\|_{L^p([0,1]^d)}\le \varepsilon.
\]
Moreover, the total number of parameters
is bounded  by
\[
\calO  \!\left(
(2dN+d+2)^2\,\alpha^{-1}
\log_N \tfrac{(2d+1)\lambda_p}{\varepsilon} 
\right)\quad \tn{as } \eps\to 0^+.
\]
\end{corollary}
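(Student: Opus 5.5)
The plan is to apply Corollary~\ref{cor:holder} directly with $L:=m$ and then count parameters. First, the choice of depth. Set $m$ as in the statement. If $(2d+1)\lambda_p/\varepsilon\le 1$, then the logarithm is nonpositive and $m=1$. In that case
\[
(2d+1)\lambda_p N^{-\alpha m}\le (2d+1)\lambda_p\le\varepsilon .
\]
Otherwise $m\ge \alpha^{-1}\log_N\bigl((2d+1)\lambda_p/\varepsilon\bigr)$. Rearranging gives $N^{\alpha m}\ge (2d+1)\lambda_p/\varepsilon$, that is, $(2d+1)\lambda_p N^{-\alpha m}\le\varepsilon$. In both cases we invoke Theorem~\ref{thm:main:Lp} with terminal depth $L=m$. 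Corollary~\ref{cor:holder} then yields $\|f-\Phi_m\|_{L^p([0,1]^d)}\le (2d+1)\lambda_p N^{-\alpha m}\le\varepsilon$.

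Second, the parameter count. $\Phi_m$ is built from the affine maps $\mathcal{A}_{-1},\dots,\mathcal{A}_m$ and $\mathcal{A}^{\mathrm{out}}_0,\dots,\mathcal{A}^{\mathrm{out}}_m$, which gives $2m+3$ maps in total. Each lies in $\aff_{\le 2dN+d+2}$. Writing $W:=2dN+d+2$, each map therefore has at most $W^2+W\le 2W^2$ real coefficients. The total is thus at most $2W^2(2m+3)$.

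Third, the asymptotics. As $\varepsilon\to0^+$ we have $\log_N\bigl((2d+1)\lambda_p/\varepsilon\bigr)\to\infty$, so eventually this quantity is at least $1$. From then on
\[
m\le \alpha^{-1}\log_N\tfrac{(2d+1)\lambda_p}{\varepsilon}+1\le 2\alpha^{-1}\log_N\tfrac{(2d+1)\lambda_p}{\varepsilon},
\]
where the last step uses $\alpha\le1$. Likewise $2m+3\le 5m$ since $m\ge1$. The parameter count is therefore at most
\[
10W^2 m\le 20\,W^2\alpha^{-1}\log_N\tfrac{(2d+1)\lambda_p}{\varepsilon},
\]
which is the claimed $\calO$ bound.

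There is no real obstacle here. The only points needing care are the case split around the $\max\{1,\cdot\}$ in the definition of $m$ and absorbing the additive constants into the logarithm for small $\varepsilon$. One subtlety is that the theorem's affine maps depend on the terminal depth $L$. We simply fix $L=m$ from the outset, so this causes no difficulty.
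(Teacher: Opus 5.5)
Your proposal is correct and follows essentially the paper's route: apply Corollary~\ref{cor:holder} with terminal depth $L=m$, solve $(2d+1)\lambda_p N^{-\alpha m}\le\varepsilon$ for $m$, and bound the parameter count by $O(W^2)$ coefficients per affine map times the $2m+3$ maps. Your explicit handling of the $\max\{1,\cdot\}$ and of the constants is more careful than the paper's one-line justification preceding the corollary.
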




\begin{colorenv}
    \begin{remark*}
\label{rem:complexity_interpretation}
Corollary~\ref{cor:Lp_complexity} counts only the number of real-valued affine
coefficients. It does not control the magnitudes of these coefficients, the arithmetic
precision required by the two-sine decoder, or the numerical stability of the resulting
parametrization. This distinction is essential in the present construction: the width
remains fixed because the sine decoder is allowed to encode many cellwise values into
large, high-frequency parameters. 
Thus
Corollary~\ref{cor:Lp_complexity} should be interpreted as an expressivity estimate,
rather than as a bounded-weight, bit-complexity, stability, or training-complexity result.
\end{remark*}
\end{colorenv}

For fixed \(N\) and \(\alpha\), the real-parameter count is therefore polynomial in the ambient
dimension \(d\) and logarithmic in the inverse accuracy. The essential point is not merely the
displayed asymptotic form, but the mechanism behind it: increasing the target accuracy
requires increasing the number of refinement layers, while the per-layer width remains fixed.
This nested refinement mechanism is the approximation-theoretic content of the MGDL
construction. Each additional layer reduces the unresolved scale from \(N^{-(\ell-1)}\) to
\(N^{-\ell}\), while preserving the previously formed readouts. In this precise sense, depth
serves as a resolution parameter rather than only as a measure of architectural size.

\subsection{The \(L^p\) modulus of continuity}
\label{sec:Lp_modulus_discussion}

The quantity that appears in Theorem~\ref{thm:main:Lp} is the translation modulus
$\omega_{f,p}$ defined in Section~\ref{sec:notation}.
For \(L^p\) approximation, this is the natural notion of smoothness. The reason is simple:
approximation by cell averages is controlled by \emph{averaged translation increments}, rather
than by worst-case pointwise oscillation.

\paragraph{Elementary observations.}
The map \(t\mapsto \omega_{f,p}(t)\) is nondecreasing by definition. Moreover, for every
\(\bmh\in\R^d\),
\[
\|f(\cdot+\bmh)-f(\cdot)\|_{L^p(E_{\bmh})}
\le
\|f(\cdot+\bmh)\|_{L^p(E_{\bmh})}
+
\|f\|_{L^p(E_{\bmh})}
\le
2\|f\|_{L^p([0,1]^d)},
\]
so \(0\le \omega_{f,p}(t)\le 2\|f\|_{L^p([0,1]^d)}\) for all \(t\ge 0\). If \(\widetilde f\) denotes
the zero extension of \(f\) to \(\R^d\), then
\[
\omega_{f,p}(t)
\le
\sup_{\|\bmh\|_2\le t}
\|\widetilde f(\cdot+\bmh)-\widetilde f(\cdot)\|_{L^p(\R^d)}.
\]
Since translations are continuous in \(L^p(\R^d)\), it follows that
\(\omega_{f,p}(t)\to 0\) as \(t\to 0^+\). Thus, for \(1\le p<\infty\), the vanishing of
\(\omega_{f,p}\) at the origin is automatic for every \(f\in L^p([0,1]^d)\). What carries real
analytic information is not the mere limit, but the \emph{rate} at which this quantity decays.

\paragraph{Relation with the classical modulus.}
For continuous functions it is convenient to extend the definition to \(p=\infty\) in the obvious way,
namely,
\[
\omega_{f,\infty}(t)
:=
\sup_{\|\bmh\|_2\le t}
\|f(\cdot+\bmh)-f(\cdot)\|_{L^\infty(E_{\bmh})}.
\]
If \(f\in C([0,1]^d)\), the classical modulus of continuity is
\[
\omega_f(t)
:=
\sup\bigl\{|f(\bmx)-f(\bmy)|:\bmx,\bmy\in[0,1]^d,\ \|\bmx-\bmy\|_2\le t\bigr\}.
\]
This is exactly the \(L^\infty\) version of the translation modulus, that is,
\(\omega_f(t)=\omega_{f,\infty}(t)\). Indeed, if \(\|\bmh\|_2\le t\) and \(\bmx\in E_{\bmh}\),
then \(|f(\bmx+\bmh)-f(\bmx)|\le \omega_f(t)\); conversely, given
\(\bmx,\bmy\in[0,1]^d\) with \(\|\bmx-\bmy\|_2\le t\), taking \(\bmh=\bmy-\bmx\) yields
\[
|f(\bmy)-f(\bmx)|
\le
\|f(\cdot+\bmh)-f(\cdot)\|_{L^\infty(E_{\bmh})}.
\]
More generally, since \(|E_{\bmh}|\le 1\), the embedding
\(L^q(E_{\bmh})\hookrightarrow L^p(E_{\bmh})\) has norm at most \(1\) whenever
\(1\le p\le q\le\infty\). Hence
\[
\omega_{f,p}(t)\le \omega_{f,q}(t)\le \omega_f(t)\quad\textnormal{for }1\le p\le q\le \infty.
\]
In particular, any H\"older bound for the classical modulus is automatically inherited by all
\(L^p\) moduli. If
\[
|f(\bmx)-f(\bmy)|\le \lambda \|\bmx-\bmy\|_2^\alpha\quad\textnormal{for all }\bmx,\bmy\in[0,1]^d,
\]
then
\[
\omega_{f,p}(t)\le \lambda t^\alpha\quad\textnormal{for }p\in[1,\infty]\ \textnormal{and }t\ge 0.
\]
This is precisely the regularity input used in Corollary~\ref{cor:holder}.

\paragraph{Two guiding comments.}
First, \(\omega_{f,p}\) may decay even for discontinuous functions. For example, let
\(f(\bmx)=\one_{\{x_1>1/2\}}\). A translation by \(\bmh=(t,0,\dots,0)\) changes the value of
\(f\) only on a strip of measure \(t\), and therefore, for \(0<t\le \frac{1}{2}\),
\[
\omega_{f,p}(t)=t^{1/p}\quad\textnormal{for }1\le p<\infty,
\]
whereas \(\omega_{f,\infty}(t)=1\). Thus \(\omega_{f,p}(t)\to 0\) does not imply continuity when
\(p<\infty\).

Second, even for continuous functions, the averaged modulus can be much smaller than the
classical one if the oscillation is concentrated on a narrow set. Consider
\[
f_\varepsilon(\bmx)
:=
\max\left\{1-\tfrac{|x_1-1/2|}{\varepsilon},\,0\right\}\quad\textnormal{for }0<\varepsilon<\tfrac{1}{4}.
\]
Then \(\omega_{f_\varepsilon}(\varepsilon)=1\), but
\(\omega_{f_\varepsilon,p}(\varepsilon)\le (3\varepsilon)^{1/p}\) for every \(1\le p<\infty\), because a
translation of size at most \(\varepsilon\) can affect \(f_\varepsilon\) only on an \(x_1\)-interval of
length at most \(3\varepsilon\). This illustrates the basic distinction between pointwise oscillation
and averaged oscillation.

\paragraph{Interpretation for the main theorem.}
The estimate in Theorem~\ref{thm:main:Lp} should therefore be read as a genuinely multiscale
statement: the layer-$\ell$ readout resolves the target down to the geometric scale \(N^{-\ell}\),
and the remaining error is controlled by the averaged oscillation of \(f\) at exactly that scale.
In this sense, depth is tied not merely to architectural size, but to approximation resolution.

\begin{colorenv}
    \begin{remark*}[Consistency with the density viewpoint]
\label{rem:density-viewpoint}
Although the proof of Theorem~\ref{thm:main:Lp} is carried out directly for
\(L^p\) targets, it is also consistent with the standard density argument. Let
\(g\in L^p([0,1]^d)\) and let \(\rho>0\). Since \(C([0,1]^d)\) is dense in
\(L^p([0,1]^d)\) for \(1\le p<\infty\), choose
\(f_\rho\in C([0,1]^d)\) such that
\[
\|g-f_\rho\|_{L^p([0,1]^d)}\le \rho .
\]
Applying Theorem~\ref{thm:main:Lp} to \(f_\rho\) gives a readout
\(\Phi_{\ell,\rho}\) satisfying
\[
\|f_\rho-\Phi_{\ell,\rho}\|_{L^p([0,1]^d)}
\le
(2d+1)\omega_{f_\rho,p}(N^{-\ell}).
\]
Moreover,
\[
\omega_{f_\rho,p}(t)
\le
\omega_{g,p}(t)+2\|g-f_\rho\|_{L^p([0,1]^d)}
\le
\omega_{g,p}(t)+2\rho .
\]
Therefore
\[
\|g-\Phi_{\ell,\rho}\|_{L^p([0,1]^d)}
\le
(2d+1)\omega_{g,p}(N^{-\ell})+(4d+3)\rho .
\]
Thus the continuous-density route recovers the same estimate up to an arbitrarily
small additive tolerance. The direct proof below avoids this auxiliary tolerance
and proves the stated \(L^p\) bound directly.
\end{remark*}
\end{colorenv}

\subsection{Extension to rectangular boxes}
\label{sec:rectangular-boxes}

We close this section by recording a rescaled version of
Theorem~\ref{thm:main:Lp} on rectangular boxes. The unit-cube assumption in the
main theorem is only a normalization. After an affine change of variables, the
same layer-wise approximation principle holds on any axis-aligned rectangular
box. The only change is that the geometric scale \(N^{-\ell}\) on \([0,1]^d\)
is replaced by the corresponding physical scale of the box.

Let
\[
Q:=\prod_{i=1}^d [a_i,b_i]
\quad  \tn{with}\quad 
a_i<b_i \quad  \textnormal{for } i=1,2,\cdots,d.
\]
For \(\bm h\in\mathbb R^d\), define
\[
E_{\bm h}^Q
:=
\{\bm x\in Q:\bm x+\bm h\in Q\}.
\]
For \(f\in L^p(Q)\) with \(1\le p<\infty\), define the \(L^p\) modulus of
continuity of \(f\) on \(Q\) by
\[
\omega_{f,p}^Q(t)
:=
\sup_{\|\bm h\|_2\le t}
\|f(\cdot+\bm h)-f(\cdot)\|_{L^p(E_{\bm h}^Q)}
\quad  \textnormal{for } t\ge 0.
\]
When \(Q=[0,1]^d\), we abbreviate \(\omega_{f,p}^{[0,1]^d}\) as \(\omega_{f,p}\). This notation coincides with the modulus \(\omega_{f,p}\) used in Theorem~\ref{thm:main:Lp}. With these definitions in
place, we state the rectangular-box version of the main theorem.

\begin{theorem}
\label{thm:rectangular-box}
Let \(Q\) be the rectangular box defined above. Let \(f\in L^p(Q)\) with
\(1\le p<\infty\), and let \(N,L\in\mathbb N^+\). Then there exist affine maps
\[
\mathcal A_i,\mathcal A_j^{\rm out}\in \aff_{\le 2dN+d+2}
\quad 
\textnormal{for } i=-1,0,\cdots,L
\textnormal{ and } j=0,1,\cdots,L,
\]
such that, for every \(\ell=0,1,\cdots,L\), the readout
\[
\Phi_\ell
:=
\sum_{j=0}^{\ell}
\mathcal A_j^{\rm out}\circ
\ocirc_{i=-1}^{j}(\bmvarrho\circ \mathcal A_i)
\]
satisfies
\[
\|f-\Phi_\ell\|_{L^p(Q)}
\le
(2d+1)\,
\omega_{f,p}^Q
\left(
\max_{1\le i\le d}(b_i-a_i)\,N^{-\ell}
\right).
\]
\end{theorem}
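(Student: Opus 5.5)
The plan is to reduce Theorem~\ref{thm:rectangular-box} to Theorem~\ref{thm:main:Lp} by an affine change of variables. The extra scale factor and volume factors should then be handled in the comparison of moduli.

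Let \(D:=\mathrm{diag}(b_1-a_1,\dots,b_d-a_d)\), \(\bm a:=(a_1,\dots,a_d)\), and \(\tau(\bm y):=\bm a+D\bm y\). The map \(\tau\) is an affine bijection \([0,1]^d\to Q\), and \(\tau^{-1}(\bm x)=D^{-1}(\bm x-\bm a)\). Put \(g:=f\circ\tau\in L^p([0,1]^d)\). A change of variables gives
\[
\|f-\Psi\circ\tau^{-1}\|_{L^p(Q)}=|Q|^{1/p}\,\|g-\Psi\|_{L^p([0,1]^d)}
\]
for any measurable \(\Psi\), where \(|Q|=\det D\). Applying Theorem~\ref{thm:main:Lp} to \(g\) produces affine maps \(\widetilde{\mathcal A}_i\) and \(\widetilde{\mathcal A}^{\rm out}_j\), with readouts \(\widetilde\Phi_\ell\).

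We then set \(\mathcal A_{-1}:=\widetilde{\mathcal A}_{-1}\circ\tau^{-1}\), which is still affine with input dimension \(d\) and the same output dimension, so it lies in \(\aff_{\le 2dN+d+2}\). All other maps are kept unchanged. By construction \(\Phi_\ell=\widetilde\Phi_\ell\circ\tau^{-1}\) on \(Q\). This yields
\[
\|f-\Phi_\ell\|_{L^p(Q)}\le (2d+1)\,|Q|^{1/p}\,\omega_{g,p}(N^{-\ell}).
\]

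It remains to compare \(|Q|^{1/p}\omega_{g,p}(t)\) with \(\omega^Q_{f,p}(\max_i(b_i-a_i)\,t)\). Fix \(\bm h\) with \(\|\bm h\|_2\le t\). Then \(\bm y\in E_{\bm h}\) if and only if \(\tau(\bm y)\in E^Q_{D\bm h}\), and \(g(\bm y+\bm h)-g(\bm y)=f(\tau(\bm y)+D\bm h)-f(\tau(\bm y))\). Changing variables gives
\[
|Q|^{1/p}\,\|g(\cdot+\bm h)-g\|_{L^p(E_{\bm h})}=\|f(\cdot+D\bm h)-f\|_{L^p(E^Q_{D\bm h})}.
\]
Since \(\|D\bm h\|_2\le \max_i(b_i-a_i)\|\bm h\|_2\le \max_i(b_i-a_i)\,t\), the right side is at most \(\omega^Q_{f,p}(\max_i(b_i-a_i)\,t)\). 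Taking the supremum over \(\bm h\) and setting \(t=N^{-\ell}\) gives the claimed bound.

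The step most likely to go wrong is this last one, because the volume factor \(|Q|^{1/p}\) must cancel exactly against the Jacobian in the modulus. The cancellation works because \(\omega^Q_{f,p}\) is an unnormalized \(L^p(Q)\) quantity. I would double-check the direction of the inequality \(\|D\bm h\|_2\le\max_i(b_i-a_i)\|\bm h\|_2\) and check that the domains \(E_{\bm h}\) and \(E^Q_{D\bm h}\) correspond under \(\tau\).
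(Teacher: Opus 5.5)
Your proposal is correct and matches the paper's proof essentially step for step. You rescale to the unit cube via $\tau$ and apply Theorem~\ref{thm:main:Lp} to $g=f\circ\tau$; you absorb $\tau^{-1}$ into the first affine map; and you compare the moduli through $\bm h\mapsto D\bm h$, where the Jacobian factor $|Q|^{1/p}$ cancels. The two points you flagged also check out: $\|D\bm h\|_2\le\max_i(b_i-a_i)\|\bm h\|_2$ holds, and $\bm y\in E_{\bm h}$ if and only if $\tau(\bm y)\in E^Q_{D\bm h}$.
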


\begin{remark*}
The multiplicative constant \(2d+1\) is the same as in
Theorem~\ref{thm:main:Lp}. Only the approximation scale changes: the unit-cube
scale \(N^{-\ell}\) is replaced by
\(\max_{1\le i\le d}(b_i-a_i)\,N^{-\ell}.\)
Thus the refinement index \(\ell\) has the same multiscale interpretation as
before, but the scale is now measured in the physical coordinates of the
rectangular box.
\end{remark*}

The proof of Theorem~\ref{thm:rectangular-box} is deferred to the end of this section.
We next record two standard consequences under H\"older-type regularity. The
first one is formulated directly in terms of the \(L^p\) modulus of continuity
\(\omega_{f,p}^Q\), and in that case Theorem~\ref{thm:rectangular-box} applies
by direct substitution, so no additional factor depending on the size of the
domain appears. The second one starts from the stronger pointwise H\"older
condition
\[
|f(\bm x)-f(\bm y)|
\le
\lambda \|\bm x-\bm y\|_2^\alpha
\qquad \textnormal{for } \bm x,\bm y\in Q.
\]
If \(\bm x\in E_{\bm h}^Q\), then
\[
|f(\bm x+\bm h)-f(\bm x)|
\le
\lambda \|\bm h\|_2^\alpha.
\]
Taking the \(L^p\)-norm over \(E_{\bm h}^Q\), we obtain
\[
\|f(\cdot+\bm h)-f(\cdot)\|_{L^p(E_{\bm h}^Q)}
\le
\lambda \|\bm h\|_2^\alpha |E_{\bm h}^Q|^{1/p}
\le
\lambda \|\bm h\|_2^\alpha |Q|^{1/p}.
\]
Hence
\[
\omega_{f,p}^Q(t)
\le
\lambda |Q|^{1/p} t^\alpha
=
\lambda
\prod_{j=1}^d (b_j-a_j)^{1/p}
t^\alpha
\qquad \textnormal{for } t\ge 0.
\]
Thus the pointwise H\"older assumption introduces the additional volume factor
\(|Q|^{1/p}\). We summarize these two consequences in the following corollary.

\begin{corollary}
\label{cor:rectangular-box-holder}
Let \(Q\) be the rectangular box introduced above, fix \(p\in[1,\infty)\),  and let the readouts
\(\Phi_\ell\) be as in Theorem~\ref{thm:rectangular-box}.
\begin{enumerate}[label=\textnormal{(\roman*)}]
\item If \(f:Q\to\mathbb R\) has \(L^p\)-H\"older regularity of order
\(\alpha\in(0,1]\) on \(Q\), namely, if there exists \(\lambda_p>0\) such that
\[
\omega_{f,p}^Q(t)\le \lambda_p t^\alpha
\quad \textnormal{for } t\ge 0,
\]
then
\[
\|f-\Phi_\ell\|_{L^p(Q)}
\le
(2d+1)\lambda_p
\left(\max_{1\le i\le d}(b_i-a_i)\right)^\alpha
N^{-\alpha\ell}
\quad \textnormal{for } \ell=0,1,\cdots,L.
\]

\item If \(f:Q\to\mathbb R\) is pointwise H\"older continuous of order
\(\alpha\in(0,1]\), namely, if there exists \(\lambda>0\) such that
\[
|f(\bm x)-f(\bm y)|
\le
\lambda\|\bm x-\bm y\|_2^\alpha
\quad \textnormal{for } \bm x,\bm y\in Q,
\]
then
\[
\|f-\Phi_\ell\|_{L^p(Q)}
\le
(2d+1)\lambda
\prod_{j=1}^d(b_j-a_j)^{1/p}
\left(\max_{1\le i\le d}(b_i-a_i)\right)^\alpha
N^{-\alpha\ell}
\quad \textnormal{for } \ell=0,1,\cdots,L.
\]
\end{enumerate}
\end{corollary}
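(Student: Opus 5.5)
Both parts will follow by direct substitution into Theorem~\ref{thm:rectangular-box}, which we take as given. The readouts \(\Phi_\ell\) are those produced by that theorem. They satisfy
\[
\|f-\Phi_\ell\|_{L^p(Q)}\le (2d+1)\,\omega_{f,p}^Q\bigl(s_\ell\bigr),
\qquad s_\ell:=\max_{1\le i\le d}(b_i-a_i)\,N^{-\ell},
\]
for every \(\ell=0,1,\dots,L\). So the task reduces to bounding \(\omega_{f,p}^Q(s_\ell)\) under each hypothesis.

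For part (i), the assumption \(\omega_{f,p}^Q(t)\le\lambda_p t^\alpha\) holds for all \(t\ge0\). Evaluating it at \(t=s_\ell\) gives \(\omega_{f,p}^Q(s_\ell)\le \lambda_p\bigl(\max_i(b_i-a_i)\bigr)^\alpha N^{-\alpha\ell}\), using \((N^{-\ell})^\alpha=N^{-\alpha\ell}\). Multiplying by \(2d+1\) yields the claim. The only point to note is that \(f\in L^p(Q)\), which the theorem requires, is implicit in the definition of \(\omega^Q_{f,p}\) (and in (ii) it is automatic, see below).

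For part (ii), I will use the estimate derived just before the corollary. For \(\bm x\in E_{\bm h}^Q\) one has \(|f(\bm x+\bm h)-f(\bm x)|\le\lambda\|\bm h\|_2^\alpha\). Integrating the \(p\)-th power over \(E^Q_{\bm h}\subset Q\) gives \(\|f(\cdot+\bm h)-f\|_{L^p(E_{\bm h}^Q)}\le\lambda\|\bm h\|_2^\alpha|Q|^{1/p}\). Taking the supremum over \(\|\bm h\|_2\le t\) then gives \(\omega_{f,p}^Q(t)\le\lambda|Q|^{1/p}t^\alpha\) with \(|Q|=\prod_j(b_j-a_j)\). Thus (ii) is the special case of (i) with \(\lambda_p:=\lambda\prod_j(b_j-a_j)^{1/p}>0\).

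Before invoking the theorem, I will also check that \(f\in L^p(Q)\). A pointwise Hölder function is continuous on the compact box \(Q\), so it is bounded and measurable, hence in \(L^p(Q)\). Substituting \(\lambda_p\) into (i) gives exactly the displayed bound. There is no real obstacle; the only care needed is this membership check and tracking the volume factor.
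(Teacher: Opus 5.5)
Your proposal is correct and matches the paper's argument. Part (i) is direct substitution of the hypothesis at the scale $\max_i(b_i-a_i)N^{-\ell}$ into Theorem~\ref{thm:rectangular-box}, and part (ii) reduces to (i) through the bound $\omega_{f,p}^Q(t)\le\lambda|Q|^{1/p}t^\alpha$ derived just before the corollary; your extra check that a pointwise H\"older $f$ lies in $L^p(Q)$ is a small detail the paper leaves implicit.
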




Finally, we present the proof of Theorem~\ref{thm:rectangular-box}.

\begin{proof}[Proof of Theorem~\ref{thm:rectangular-box}]
Define the affine rescaling from \([0,1]^d\) to \(Q\) by
\[
\bm x
=
\bigl(a_1+(b_1-a_1)y_1,\ldots,a_d+(b_d-a_d)y_d\bigr)
\quad \textnormal{for } \bm y\in[0,1]^d.
\]
For \(f\in L^p(Q)\), define the rescaled function \(g\) on the unit cube by
\[
g(\bm y)
:=
f\bigl(a_1+(b_1-a_1)y_1,\ldots,a_d+(b_d-a_d)y_d\bigr)
\quad \textnormal{for } \bm y\in[0,1]^d.
\]
Applying Theorem~\ref{thm:main:Lp} to \(g\), we obtain readouts
\(\widetilde\Phi_\ell\) such that
\[
\|g-\widetilde\Phi_\ell\|_{L^p([0,1]^d)}
\le
(2d+1)\,
\omega_{g,p}^{[0,1]^d}(N^{-\ell})
\quad \textnormal{for } \ell=0,1,\cdots,L.
\]

For \(\bm x\in Q\), define
\[
\Phi_\ell(\bm x)
:=
\widetilde\Phi_\ell\left(
\tfrac{x_1-a_1}{b_1-a_1},\ldots,
\tfrac{x_d-a_d}{b_d-a_d}
\right).
\]
The affine map
\[
\bm x
\mapsto
\left(
\tfrac{x_1-a_1}{b_1-a_1},\ldots,
\tfrac{x_d-a_d}{b_d-a_d}
\right)
\]
can be absorbed into the first affine transformation of the network. Hence the
width remains \(2dN+d+2\).
The change of variables above gives
\[
\|f-\Phi_\ell\|_{L^p(Q)}
=
\prod_{i=1}^d (b_i-a_i)^{1/p}
\|g-\widetilde\Phi_\ell\|_{L^p([0,1]^d)}.
\]

It remains to compare the two moduli. For a translation \(\bm u\in\mathbb R^d\)
in the rescaled variable, set
\[
\bm h
:=
\bigl((b_1-a_1)u_1,\ldots,(b_d-a_d)u_d\bigr).
\]
Then
\[
\|\bm h\|_2
\le
\max_{1\le i\le d}(b_i-a_i)\,\|\bm u\|_2.
\]
Moreover, another change of variables yields
\[
\|g(\cdot+\bm u)-g(\cdot)\|_{L^p(E_{\bm u}^{[0,1]^d})}
=
\prod_{i=1}^d (b_i-a_i)^{-1/p}
\|f(\cdot+\bm h)-f(\cdot)\|_{L^p(E_{\bm h}^Q)}.
\]
Taking the supremum over all \(\|\bm u\|_2\le t\), we obtain
\[
\omega_{g,p}^{[0,1]^d}(t)
\le
\prod_{i=1}^d (b_i-a_i)^{-1/p}
\omega_{f,p}^Q
\left(
\max_{1\le i\le d}(b_i-a_i)\,t
\right).
\]

Combining the last two estimates with the unit-cube bound for
\(\widetilde\Phi_\ell\), we get
\[
\begin{aligned}
\|f-\Phi_\ell\|_{L^p(Q)}
=
\prod_{i=1}^d (b_i-a_i)^{1/p}
\|g-\widetilde\Phi_\ell\|_{L^p([0,1]^d)}
&\le
(2d+1)
\prod_{i=1}^d (b_i-a_i)^{1/p}
\omega_{g,p}^{[0,1]^d}(N^{-\ell})
\\
&\le
(2d+1)\,
\omega_{f,p}^Q
\left(
\max_{1\le i\le d}(b_i-a_i)\,N^{-\ell}
\right).
\end{aligned}
\]
This proves the theorem.
\end{proof}

\section{Preparatory results for the proof of Theorem~\ref{thm:main:Lp}}
\label{sec:preparations:proof:main}

This section presents the three main ingredients underlying the proof of Theorem~\ref{thm:main:Lp}: the geometric partition of the domain in Section~\ref{sec:Geometric_partition}, the encoder-decoder realization of the cellwise refinement in Section~\ref{sec:encoder-decoder:realization}, and the oscillation estimate controlling approximation by cell averages in Section~\ref{sec:Oscillation_around_cell_averages}.

\subsection{Geometric partition and approximation scheme}
\label{sec:Geometric_partition}

At resolution level $\ell\in\{0,1,\dots,L\}$, the unit cube $[0,1]^d$ is decomposed into $N^{d\ell}$ interior cubes
\[
\{Q_{\ell,\bmbeta}\}_{\bmbeta\in\{0,1,\dots,N^\ell-1\}^d}
\]
of sidelength $N^{-\ell}-\delta$, together with a thin transition region $\Omega_\ell$ near the cell boundaries. The parameter $\delta>0$ will eventually be chosen so small that the $L^p$ mass of the transition region is negligible. Figure~\ref{fig:Q:Omega} illustrates this decomposition for $N=2$.

\begin{figure}[ht]
 \centering
\begin{minipage}{0.827850\textwidth}
     \begin{subfigure}[b]{0.27302\textwidth}
 \centering
 \includegraphics[width=0.92876724825\textwidth]{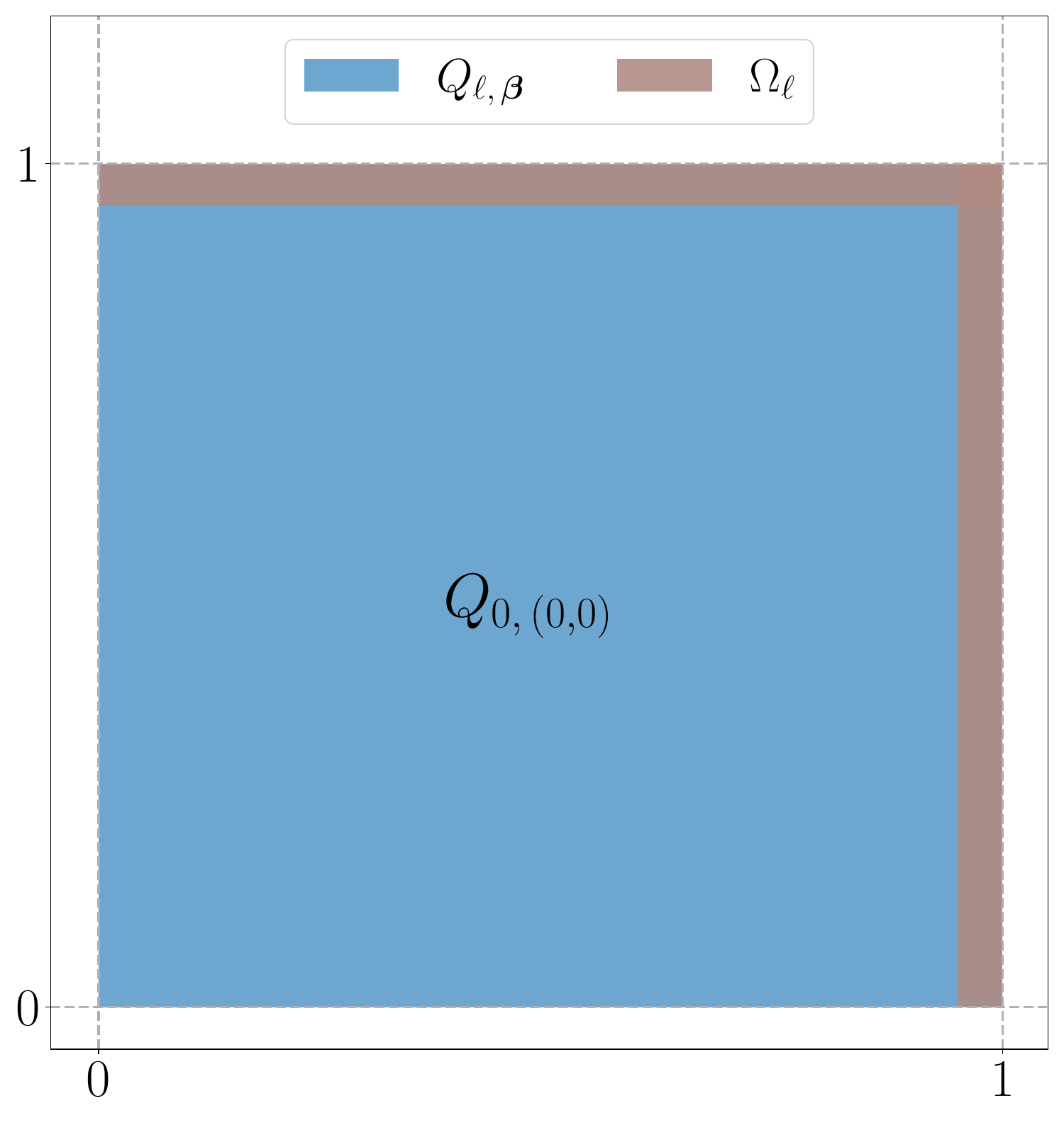}
 \subcaption{$\ell=0$.}
 \end{subfigure}
 \hfill
 \begin{subfigure}[b]{0.27302\textwidth}
 \centering
 \includegraphics[width=0.92876724825\textwidth]{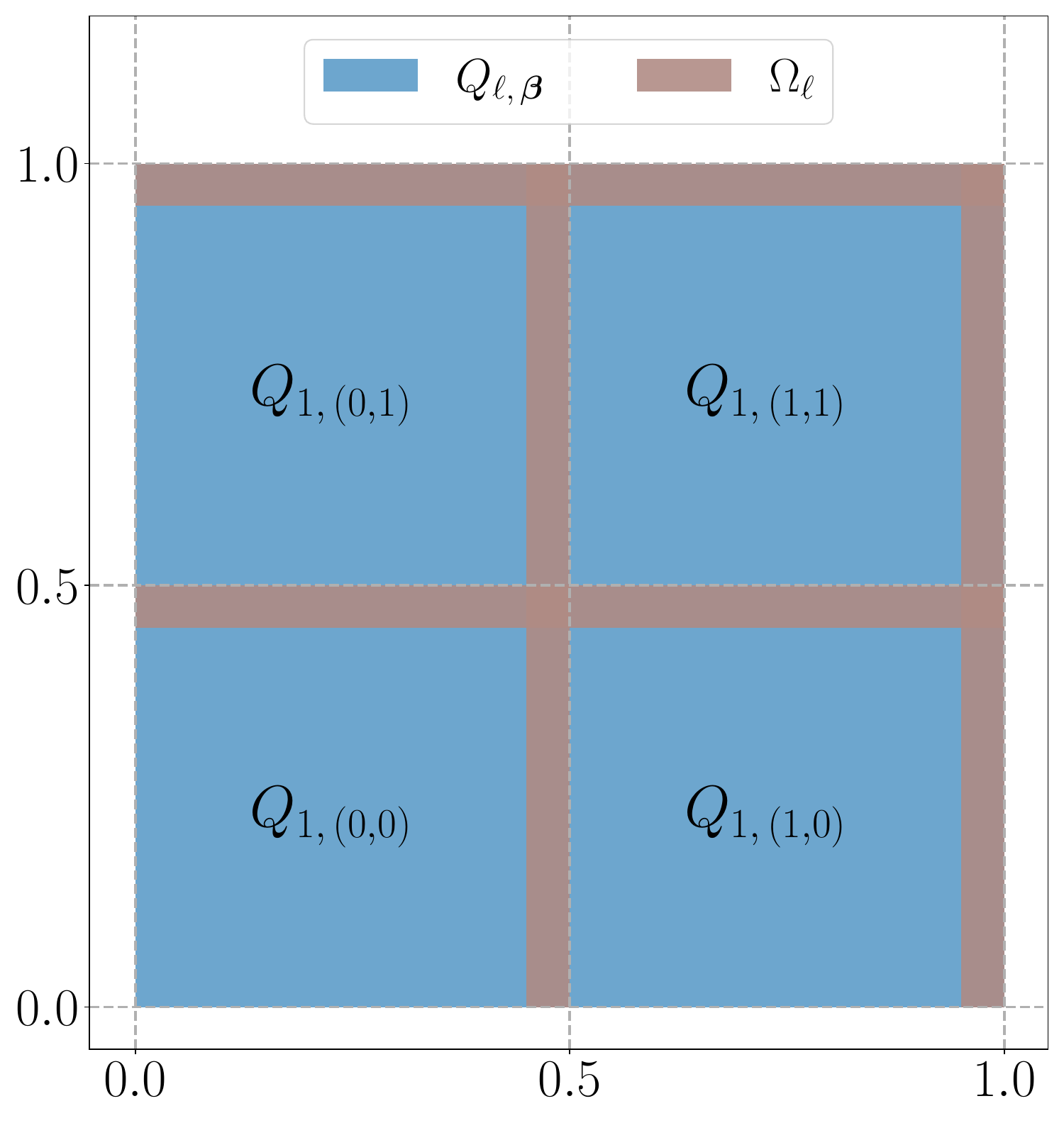}
 \subcaption{$\ell=1$.}
 \end{subfigure}
 \hfill
 \begin{subfigure}[b]{0.27302\textwidth}
 \centering
 \includegraphics[width=0.92876724825\textwidth]{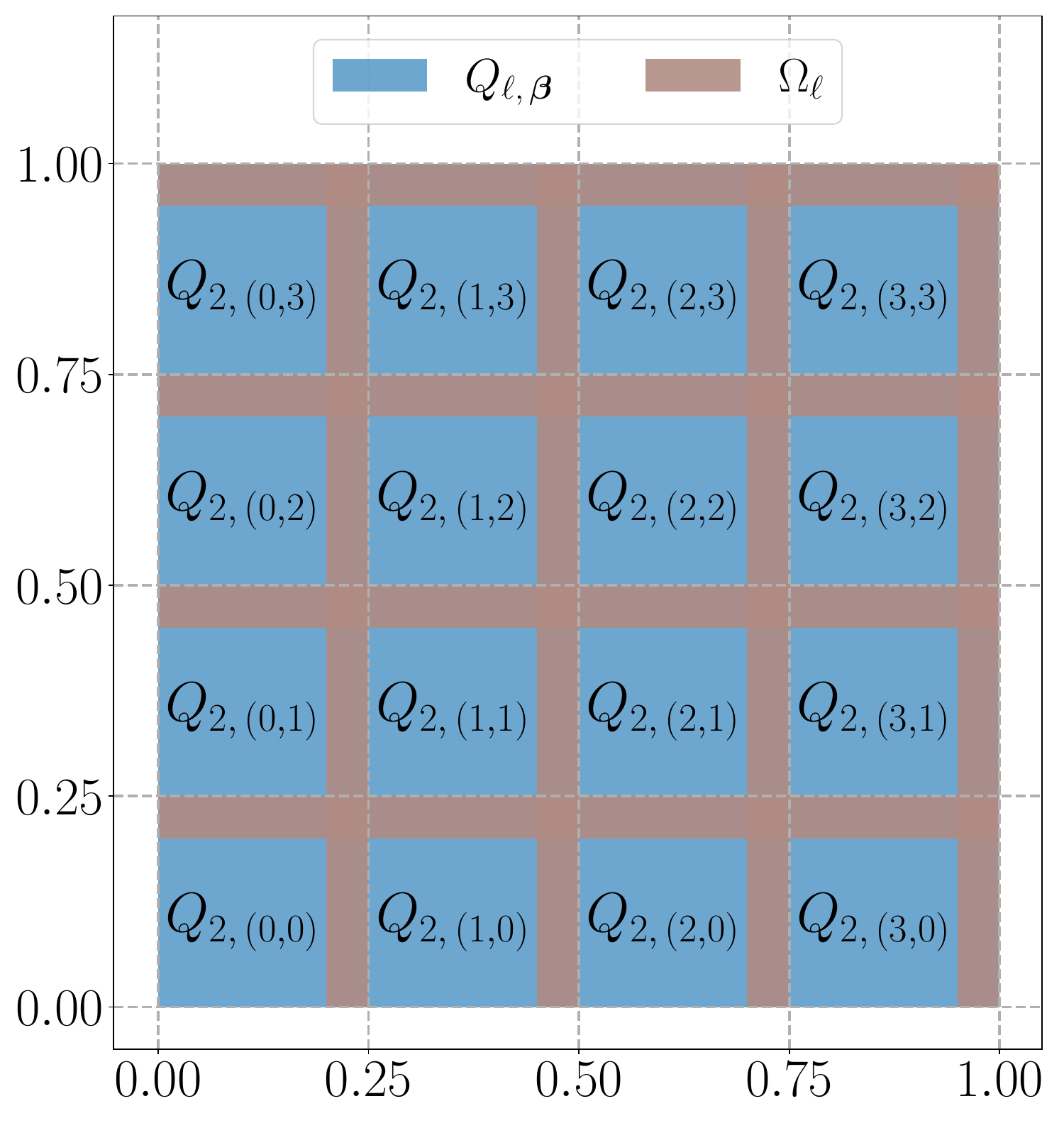}
 \subcaption{$\ell=2$.}
 \end{subfigure}
\end{minipage}
\caption{Interior cubes $Q_{\ell,\bmbeta}$ and transition regions $\Omega_\ell$ for $N=2$ at the levels $\ell=0,1,2$.}
 \label{fig:Q:Omega}
\end{figure}

The role of the transition region is purely technical but important: it allows the network to remain continuous while behaving like a piecewise constant approximant on the interior cubes. The cubes are nested across scales, so each level-$(\ell+1)$ cube is contained in a unique level-$\ell$ cube. This nested geometry underlies the multilevel construction.

The approximation itself is built recursively. Starting from $f_0:=f$, we construct refinement modules $\Gamma_\ell$ and residuals $f_{\ell+1}$ through
\[
\Phi_\ell:=\sum_{j=0}^{\ell}\Gamma_j,\quad
f_{\ell+1}:=f_\ell-\Gamma_\ell\quad\textnormal{for }\ell=0,1,\dots,L.
\]
The purpose of the layer-$\ell$ module $\Gamma_\ell$ is to capture the behavior of the current residual at scale $N^{-\ell}$. 
We also note that $\Phi_\ell$ is constant on each cube $Q_{\ell,\bmbeta}$, as we shall see later. The next subsection explains how this refinement is realized through an encoder-decoder mechanism.

\subsection{Encoder-decoder realization}
\label{sec:encoder-decoder:realization}

The construction naturally splits into two components. The first is an \emph{encoder} that identifies the cell containing the input, and the second is a \emph{decoder} that assigns to each cell its prescribed representative value. We now formalize both parts.

Fix a resolution level $\ell\in\{0,1,\dots,L\}$ and a transition parameter $\delta\in(0,N^{-L})$. Define the one-dimensional intervals
\begin{equation*}
\calI_{\ell,j}
:=
\left[\tfrac{j}{N^\ell},\,\tfrac{j+1}{N^\ell}-\delta\right]
\quad\textnormal{for }j=0,1,\dots,N^\ell-1.
\end{equation*}
For each multi-index $\bmbeta=(\beta_1,\dots,\beta_d)\in\{0,1,\dots,N^\ell-1\}^d$, the corresponding level-$\ell$ cube is
\[
Q_{\ell,\bmbeta}:=\calI_{\ell,\beta_1}\times\cdots\times\calI_{\ell,\beta_d}.
\]
The transition region $\Omega_\ell$ is defined by $[0,1]^d\setminus \cup_{\bmbeta} Q_{\ell, \bmbeta} $. 
\begin{colorenv}
    We also write
\[
U_\ell
:=
\bigcup_{\bmbeta\in\{0,1,\dots,N^\ell-1\}^d}
Q_{\ell,\bmbeta},
\qquad
\Omega_\ell=[0,1]^d\setminus U_\ell .
\]
The transition regions are nested in the following sense:
\begin{equation}
\label{eq:nested-transition-regions}
U_\ell\subseteq U_j
\quad\textnormal{and hence}\quad
\Omega_j\subseteq \Omega_\ell
\qquad
\textnormal{for }0\le j\le \ell\le L .
\end{equation}
Indeed, in one dimension,
\[
\calI_{\ell,r}
\subseteq
\calI_{j,\lfloor r/N^{\ell-j}\rfloor}
\qquad
\textnormal{for }0\le j\le \ell,
\]
and taking Cartesian products gives the claim. Consequently, the accumulated
historical transition set up to level \(\ell\) is not larger than the current transition
set:
\begin{equation}
\label{eq:historical-transition-equals-current}
\bigcup_{j=0}^{\ell}\Omega_j=\Omega_\ell .
\end{equation}
Moreover, since
\[
|\Omega_\ell|
=
1-N^{d\ell}(N^{-\ell}-\delta)^d
=
1-(1-N^\ell\delta)^d,
\]
we have, for \(N^\ell\delta\le 1\),
\begin{equation*}
|\Omega_\ell|
\le dN^\ell\delta
\le dN^L\delta .
\end{equation*}
Thus all transition regions, including all historical ones, can be made uniformly
small by choosing a single sufficiently small \(\delta\).
\end{colorenv}


To determine which interval contains a given coordinate,
we introduce  a continuous step-proxy \(h:\mathbb{R}\to\mathbb{R}\) by
\begin{equation}
\label{eq:def:h}
h(x):=
\begin{cases}
0  &\tn{for } x\le 0,\\[0.3ex]
j  & \tn{for }  x\in [j,\; j+1-\delta] \tn{  and } j=0,1,\dots,N-2,\\[0.3ex]
\displaystyle
j+\tfrac{x-(j+1-\delta)}{\delta} 
&\tn{for }  x\in [j+1-\delta,\; j+1]  \tn{  and   } j=0,1,\dots,N-2,\\[0.3ex]
N-1  &\tn{for }  x\ge N-1.
\end{cases}
\end{equation}
Then \(h\) is continuous and piecewise linear with \(2N-1\) linear pieces. 
Moreover, 
\[
h(x)=j
\quad\text{for all  }x\in [j,\; j+1-\delta] \tn{  and   }  j=0,1,\dots,N-1.
\]
As illustrated in Figure~\ref{fig:h}, the function \(h\) serves as a continuous surrogate of the floor function on the relevant intervals. By Lemma~4.3 of \cite{shijun:optimal:rate:in:width:and:depth}, such a function can be realized by a one-hidden-layer \ReLU\ network of width \(2N-1\).

\begin{figure}[ht]
 \centering
 \includegraphics[width=0.4595\textwidth]{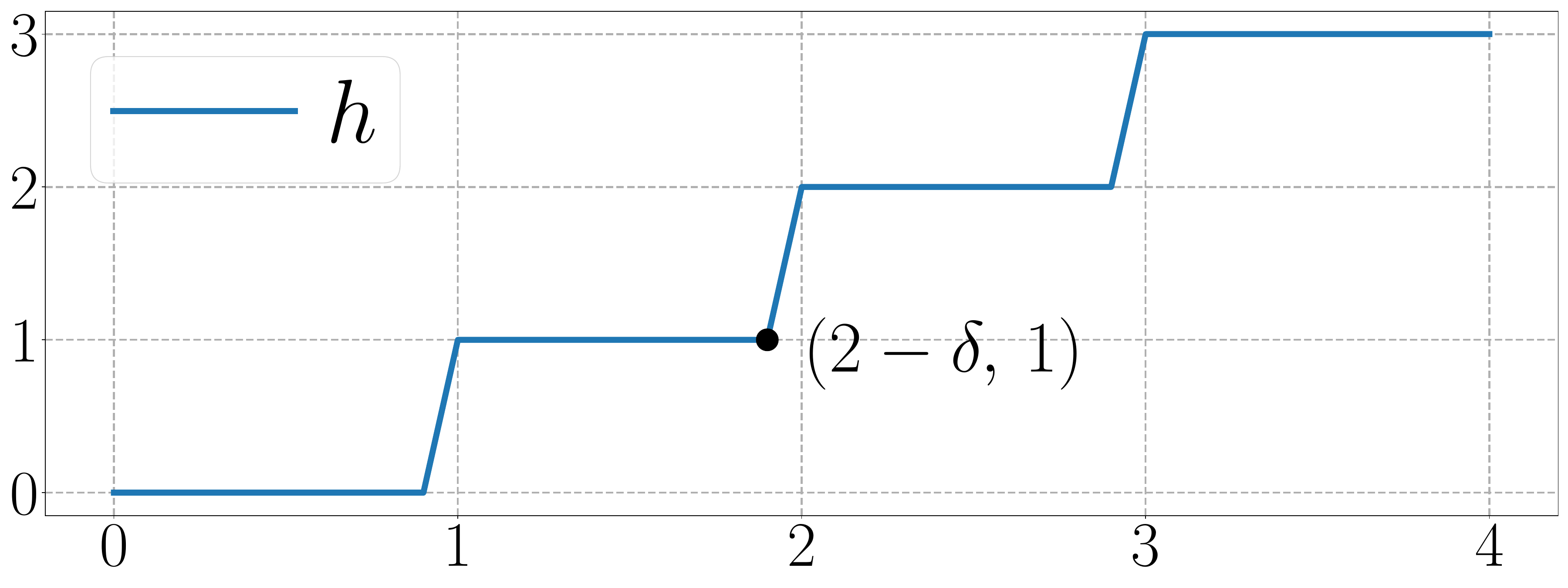}
 \caption{The step-proxy function $h$ for $N=4$.}
 \label{fig:h}
\end{figure}

Define the two-variable map $\bmh_\ell:\mathbb{R}^2\to\mathbb{R}^2$ by
\begin{equation}
\label{eq:def:h:ell}
\bmh_\ell
\left(\begin{bmatrix}x\\ y\end{bmatrix}\right)
:=
\begin{bmatrix}
 x\\[3pt]
 \tfrac{1}{N^\ell}h\bigl(N^\ell(x-y)\bigr)+y
\end{bmatrix}.
\end{equation}
Let $\bmh_{\ell,k}$ denote its $k$-th component for $k=1,2$.
It is straightforward to verify that $\bmh_\ell$ maps $[0,\infty)^2$ into itself. When restricted to $[0,\infty)^2$, the function $\bmh_\ell$ can be realized by a one-hidden-layer \ReLU\ network of width $1+(2N-1)+1= 2N+1$, as illustrated in Figure~\ref{fig:h:ell:network:size}.

\begin{figure}[!htp]
	\centering	\includegraphics[width=0.50499872\textwidth]{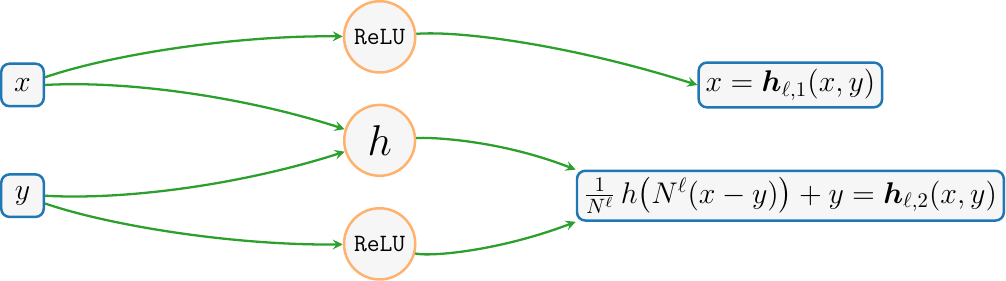}
	\caption{A network realization of \(\bmh_\ell\) on \([0,\infty)^2\).}
	\label{fig:h:ell:network:size}
\end{figure}

The next proposition provides a recursive encoder for the one-dimensional cell index at the levels \(\ell\ge 1\). 
At the base level \(\ell=0\), by contrast, no localization is needed, since there is only one interior cube.

\begin{proposition}
\label{prop:floor:approx:Recursively}
For every $\ell\in\{1,2,\dots,L\}$ and every $j\in\{0,1,\dots,N^\ell-1\}$,
\begin{equation*}
\bmh_\ell\circ\cdots\circ\bmh_1
\left(\begin{bmatrix}x\\ 0\end{bmatrix}\right)
=
\begin{bmatrix}x\\  {j}/{N^\ell}\end{bmatrix}\quad\textnormal{for all }x\in\calI_{\ell,j}.
\end{equation*}
\end{proposition}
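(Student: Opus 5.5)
Argue by induction on \(\ell\). The first component of \(\bmh_\ell\) is the identity in \(x\), so every composition \(\bmh_\ell\circ\cdots\circ\bmh_1\) leaves the first coordinate equal to \(x\). It therefore suffices to track the second coordinate \(y_\ell\). It is defined by \(y_0=0\) and
\[
y_\ell=\tfrac{1}{N^\ell}h\bigl(N^\ell(x-y_{\ell-1})\bigr)+y_{\ell-1}.
\]
The claim to prove is \(y_\ell=j/N^\ell\) whenever \(x\in\calI_{\ell,j}\).

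The key step is a single-level computation. Suppose \(y_{\ell-1}=r/N^{\ell-1}\) with \(x\in\calI_{\ell-1,r}\). Write \(j=rN+s\) with \(s\in\{0,\dots,N-1\}\).

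First, the nesting relation \(\calI_{\ell,j}\subseteq\calI_{\ell-1,\lfloor j/N\rfloor}\) from \eqref{eq:nested-transition-regions} shows that \(x\in\calI_{\ell,j}\) forces \(x\in\calI_{\ell-1,r}\) with \(r=\lfloor j/N\rfloor\). So the induction hypothesis applies to this \(r\).

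Next, \(x\in\calI_{\ell,j}\) means \(j/N^\ell\le x\le (j+1)/N^\ell-\delta\). Since \(y_{\ell-1}=rN/N^\ell\), multiplying by \(N^\ell\) gives
\[
N^\ell(x-y_{\ell-1})\in[s,\;s+1-N^\ell\delta].
\]
We have \(N^\ell\delta\ge\delta\), so this interval lies inside \([s,s+1-\delta]\), on which \(h\equiv s\) by \eqref{eq:def:h}. Hence
\[
y_\ell=\tfrac{s}{N^\ell}+\tfrac{rN}{N^\ell}=\tfrac{j}{N^\ell},
\]
which completes the inductive step.

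The base case is \(\ell=1\) with \(y_0=0\): there \(r=0\), \(s=j\in\{0,\dots,N-1\}\), and the same computation applies verbatim. Alternatively, one can start the induction at \(\ell=0\), where \(\calI_{0,0}=[0,1-\delta]\) and \(y_0=0\) holds trivially.

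The only delicate point is the scaling mismatch in \(\delta\). The plateaus of \(h\) have width \(1-\delta\) at unit scale, but after rescaling by \(N^\ell\) the relevant interval has width \(1-N^\ell\delta\). The inclusion therefore holds because \(N^\ell\ge1\), and the intervals \(\calI_{\ell,j}\) are nonempty because \(\delta<N^{-L}\). I will check this inequality explicitly. I will also verify the \(s=N-1\) case against the definition of \(h\), where \(h(x)=N-1\) for \(x\ge N-1\).
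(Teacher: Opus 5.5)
Your proposal is correct and is essentially the paper's own proof. Both argue by induction on $\ell$, write $j=rN+s$, use $\mathcal{I}_{\ell,j}\subseteq\mathcal{I}_{\ell-1,r}$ to apply the hypothesis, and use $[s,\,s+1-N^\ell\delta]\subseteq[s,\,s+1-\delta]$ together with $h\equiv s$ there. The only difference is cosmetic: you justify the interval nesting by citation and mention starting at $\ell=0$, whereas the paper starts at $\ell=1$ and treats the nesting as evident.
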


\begin{proof}
We argue by induction on $\ell$.
For $\ell=1$, let $x\in\calI_{1,j}$. Then $Nx\in[j,j+1-N\delta]\subseteq[j,j+1-\delta]$, so \eqref{eq:def:h} gives $h(Nx)=j$. Hence, by \eqref{eq:def:h:ell},
\[
\bmh_1\left(\begin{bmatrix}x\\0\end{bmatrix}\right)
=
\begin{bmatrix}x\\ N^{-1}h(Nx)\end{bmatrix}
=
\begin{bmatrix}x\\ j/N\end{bmatrix}.
\]
This proves the claim for $\ell=1$.

Assume next that the statement holds for some $\ell\ge  1$. Let $j\in\{0,1,\dots,N^{\ell+1}-1\}$ and write $j=mN+r$ with $m\in\{0,1,\dots,N^\ell-1\}$ and $r\in\{0,1,\dots,N-1\}$. If $x\in\calI_{\ell+1,j}$, then $x\in\calI_{\ell,m}$. By the induction hypothesis,
\[
\bmh_\ell\circ\cdots\circ\bmh_1
\left(\begin{bmatrix}x\\0\end{bmatrix}\right)
=
\begin{bmatrix}x\\ m/N^\ell\end{bmatrix}.
\]
Applying $\bmh_{\ell+1}$ and using \eqref{eq:def:h:ell}, we obtain
\[
\bmh_{\ell+1}\left(\begin{bmatrix}x\\ m/N^\ell\end{bmatrix}\right)
=
\begin{bmatrix}
 x\\[3pt]
 N^{-(\ell+1)}h\bigl(N^{\ell+1}(x-m/N^\ell)\bigr)+m/N^\ell
\end{bmatrix}.
\]
Now
\[
N^{\ell+1}\Bigl(x-\frac{m}{N^\ell}\Bigr)
\in
[r,r+1-N^{\ell+1}\delta]\subseteq[r,r+1-\delta],
\]
so \eqref{eq:def:h} yields $h\bigl(N^{\ell+1}(x-m/N^\ell)\bigr)=r$. Therefore the second component becomes
\[
\frac{r}{N^{\ell+1}}+\frac{m}{N^\ell}=\frac{mN+r}{N^{\ell+1}}=\frac{j}{N^{\ell+1}},
\]
which proves the induction step.
\end{proof}

The proposition gives a recursive encoder for one-dimensional cell indices. To extend it to the full $d$-dimensional grid, define
\[
\psi_\ell(x):=\bmh_{\ell,2}\circ\bmh_{\ell-1}\circ\cdots\circ\bmh_1\left(\begin{bmatrix}x\\0\end{bmatrix}\right),
\quad
\bmPsi_\ell(\bmx):=\bigl(\psi_\ell(x_1),\dots,\psi_\ell(x_d)\bigr).
\]
Then Proposition~\ref{prop:floor:approx:Recursively} implies that
\[
\bmPsi_\ell(\bmx)=\frac{\bmbeta}{N^\ell}\quad\textnormal{for all }\bmx\in Q_{\ell,\bmbeta}.
\]
In other words, $\bmPsi_\ell$ maps each interior cube to its left-corner index.

To turn this geometric information into a scalar code, define
\[
\Lambda_\ell(\bmy):=1+\sum_{i=1}^d N^{i\ell}y_i
\quad\textnormal{for all }\bmy\in\mathbb{R}^d.
\]
Then $\Lambda_\ell$ maps the discrete set $\{\bmbeta/N^\ell:\ \bmbeta\in\{0,1,\dots,N^\ell-1\}^d\}$ bijectively onto $\{1,2,\dots,N^{d\ell}\}$. Thus every level-$\ell$ cube receives a unique integer label.

The decoder is based on the following finite point-fitting
property of two nested sine activations.

\begin{proposition}
\label{prop:k:to:yk}
Given $\varepsilon>0$ and real numbers $y_k\in\mathbb{R}$ for $k=1,2,\dots,K$, there exist $v,w\in\mathbb{R}$ such that
\[
\bigl|u\sin\bigl(v\sin(kw)\bigr)-y_k\bigr|<\varepsilon
\quad\textnormal{for }k=1,2,\dots,K,
\]
where
\(u:=\max_{1\le k\le K}|y_k|.\)
\end{proposition}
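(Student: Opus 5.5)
The plan is to reduce the statement to Kronecker's simultaneous approximation theorem, applied to the numbers \(a_k:=\sin(kw)\), after choosing \(w\) so that these numbers are linearly independent over \(\mathbb{Q}\).

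\textbf{Step 1 (reductions).} If \(u=0\), then every \(y_k=0\), and any \(v,w\) work. So assume \(u>0\). Set \(z_k:=y_k/u\in[-1,1]\) and \(\theta_k:=\arcsin(z_k)\in[-\pi/2,\pi/2]\), so that \(u\sin\theta_k=y_k\). Since \(|\sin s-\sin t|\le|s-t|\), it is enough to find \(v,w\) and integers \(m_k\) with
\[
|v\sin(kw)-\theta_k-2\pi m_k|<\varepsilon/u \quad\textnormal{for } k=1,\dots,K.
\]
Indeed, this gives \(|u\sin(v\sin(kw))-y_k| = u\,|\sin(v\sin(kw))-\sin\theta_k|<\varepsilon\).

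\textbf{Step 2 (choice of \(w\)).} I will choose \(w\) so that \(a_k:=\sin(kw)\), \(k=1,\dots,K\), are linearly independent over \(\mathbb{Q}\). Use \(\sin(kw)=\sin w\cdot U_{k-1}(\cos w)\), where \(U_{k-1}\) is the Chebyshev polynomial of the second kind. It has integer coefficients, degree exactly \(k-1\), and leading coefficient \(2^{k-1}\). Suppose \(\sum_k q_k a_k=0\) with rationals \(q_k\) not all zero. Then \(\sin w\cdot P(\cos w)=0\) with \(P:=\sum_k q_kU_{k-1}\). Because the degrees are distinct, \(P\) is a nonzero polynomial with rational coefficients. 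Now pick \(w\) with \(\sin w\neq0\) and \(\cos w\) transcendental, for instance \(w=\arccos(1/e)\), or any \(w\) outside a countable set. Then \(P(\cos w)\neq0\), a contradiction, so the \(a_k\) are \(\mathbb{Q}\)-independent.

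\textbf{Step 3 (choice of \(v\)).} Apply Kronecker's theorem in its continuous form: if \(a_1,\dots,a_K\) are linearly independent over \(\mathbb{Q}\), then the line \(\{v(a_1,\dots,a_K):v\in\mathbb{R}\}\) is dense in the torus \(\mathbb{R}^K/2\pi\mathbb{Z}^K\). This yields \(v\) and integers \(m_k\) satisfying the inequality of Step 1, which completes the argument.

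\textbf{Main obstacle.} The only nontrivial point is the \(\mathbb{Q}\)-independence in Step 2. Note that independence of \(1,a_1,\dots,a_K\) is not needed for the continuous-flow version of Kronecker's theorem, only independence of the \(a_k\) themselves. The Chebyshev-degree argument combined with the transcendence of \(\cos w\) is what I would rely on there. Note also that \(v\) may be enormous, which is consistent with the paper's remarks that no bounded-weight guarantee is given.
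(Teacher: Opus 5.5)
Your argument is correct. Its skeleton matches the paper's: reduce, via $\arcsin$ and $|\sin s-\sin t|\le|s-t|$, to simultaneous approximation of the $\theta_k$ modulo $2\pi$; choose $w$ so that the relevant frequencies are rationally independent; then invoke Kronecker. Both ingredients differ, however.

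The paper uses the \emph{discrete} Kronecker theorem and takes $v=n\in\mathbb{Z}$. It therefore needs the stronger independence of $1,\sin(w)/(2\pi),\dots,\sin(Kw)/(2\pi)$. It obtains such a $w$ non-constructively: orthogonality of the $\sin(jt)$ shows that each $F_c(t)=2\pi c_0+\sum_j c_j\sin(jt)$ with $c\neq 0$ is a nonzero analytic function. Each such function has finitely many zeros in $[0,2\pi)$, and the paper discards the countable union of these zero sets.

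You use the \emph{flow} version, which needs only independence of the $\sin(kw)$ themselves. You prove that algebraically through $\sin(kw)=\sin w\,U_{k-1}(\cos w)$ and the transcendence of $\cos w$.

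Your two choices depend on each other. Transcendence of $\cos w$ would not suffice for the discrete version: if $\sin w=2\pi/7$, then $\cos w=\sqrt{1-4\pi^2/49}$ is transcendental, yet $\sin(w)/(2\pi)=1/7$ is rational.

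Your route buys an explicit $w=\arccos(1/e)$, at the price of Hermite's theorem; with your countability alternative, it uses no more than the paper does. The paper's route avoids any transcendence input and yields an integer frequency $v$.
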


\begin{proof}
If $u=0$, then $y_k=0$ for all $k$, and the claim is trivial. Assume therefore that $u>0$. For each $k=1,2,\dots,K$, define
\[
\xi_k:=\arcsin\!\left(\frac{y_k}{u}\right)\in\left[-\frac{\pi}{2},\frac{\pi}{2}\right],
\]
so that $u\sin\xi_k=y_k$. It is enough to find $v,w\in\mathbb{R}$ and integers $r_1,\dots,r_K$ such that
\[
|v\sin(kw)-(\xi_k+2\pi r_k)|<\frac{\varepsilon}{u}
\quad\textnormal{for }k=1,2,\dots,K,
\]
because then
\[
\begin{aligned}
\bigl|u\sin(v\sin(kw))-y_k\bigr|
&=
 u\bigl|\sin(v\sin(kw))-\sin(\xi_k+2\pi r_k)\bigr|
\\
&\le u\,|v\sin(kw)-(\xi_k+2\pi r_k)|
<\varepsilon.
\end{aligned}
\]

We now choose $w$ so that Kronecker's theorem applies. For each nonzero integer vector
\[
\bmc=(c_0,\dots,c_K)\in\mathbb{Z}^{K+1}\setminus\{\bm0\},
\]
consider the $2\pi$-periodic real-analytic function
\[
F_\bmc(t):=2\pi c_0+\sum_{j=1}^K c_j\sin(jt).
\]
We claim that $F_\bmc\not\equiv 0$. Indeed, if $F_\bmc\equiv 0$, then integration over $[0,2\pi]$ gives $c_0=0$, and hence
\[
\sum_{j=1}^K c_j\sin(jt)\equiv 0.
\]
Multiplying by $\sin(\ell t)$ and integrating over $[0,2\pi]$ yields
\[
0=\sum_{j=1}^K c_j\int_0^{2\pi}\sin(jt)\sin(\ell t)\,dt=\pi c_\ell
\quad\textnormal{for }\ell=1,2,\dots,K,
\]
so every $c_\ell$ vanishes, a contradiction.

Therefore each $F_\bmc$ has only finitely many zeros in $[0,2\pi)$, and the union of those zero sets over all nonzero $\bmc\in\mathbb{Z}^{K+1}$ is countable. Choose
\[
w\in[0,2\pi)
\setminus
\bigcup_{c\in\mathbb{Z}^{K+1}\setminus\{0\}}\{t\in[0,2\pi):F_\bmc(t)=0\}.
\]
Then the numbers
\[
1,\ \frac{\sin w}{2\pi},\ \frac{\sin 2w}{2\pi},\ \dots,\ \frac{\sin Kw}{2\pi}
\]
are linearly independent over $\mathbb{Q}$.

Set
\[
\alpha_k:=\frac{\sin(kw)}{2\pi},
\quad
\beta_k:=\frac{\xi_k}{2\pi}
\quad\textnormal{for }k=1,2,\dots,K.
\]
By Kronecker's approximation theorem, with $\eta:=\varepsilon/(2\pi u)$ there exist an integer $n\in\mathbb{Z}$ and integers $r_1,\dots,r_K$ such that
\[
|n\alpha_k-\beta_k-r_k|<\eta
\quad\textnormal{for }k=1,2,\dots,K.
\]
Multiplying by $2\pi$ gives
\[
|n\sin(kw)-(\xi_k+2\pi r_k)|<\frac{\varepsilon}{u}
\quad\textnormal{for }k=1,2,\dots,K.
\]
Thus the desired estimate holds with $v:=n$.
\end{proof}

Proposition~\ref{prop:k:to:yk} appeared in a related form as Proposition~4.2 of \cite{ZZZZ-25-FMMNN}; we include the proof here because the decoder is central to the present manuscript and the argument is self-contained. Conceptually, the proposition says that the two-sine map $k\mapsto u\sin(v\sin(kw))$ is dense on any finite set of prescribed values. This makes it possible to encode all cell averages at a fixed level using only two sine channels.
This finite point-fitting phenomenon is closely related to recent notions of super-expressive
activations, where a fixed small architecture can interpolate arbitrarily prescribed finite data
through suitably chosen parameters \cite{pmlr-v139-yarotsky21a,WANG2025107258,shijun:arbitrary:error:with:fixed:size}.

The combination of Propositions~\ref{prop:floor:approx:Recursively} and~\ref{prop:k:to:yk} is the core of the construction: the encoder locates the cell, and the decoder assigns the appropriate cell value. Figure~\ref{fig:main:ideas} gives a schematic illustration of this mechanism.

\begin{figure}[ht]
 \centering
 \includegraphics[width=0.75\textwidth]{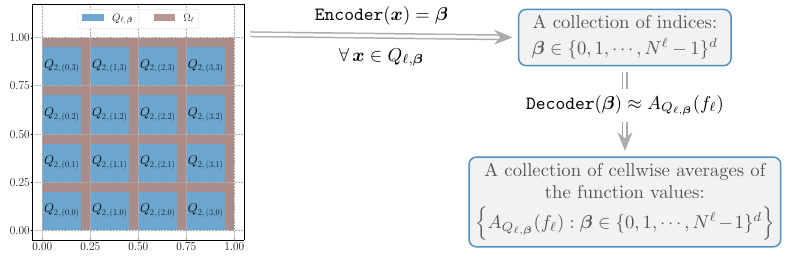}
 \caption{Conceptual overview of the multigrade construction, showing how the encoder and decoder are combined (here illustrated for $\ell=2$ and $N=2$).}
 \label{fig:main:ideas}
\end{figure}

\subsection{Oscillation around cell averages}
\label{sec:Oscillation_around_cell_averages}

The final ingredient converts the oscillation of a function around its cell average into directional translation increments. This is the mechanism that links the geometric partition to the $L^p$ modulus of continuity.

\begin{proposition}
\label{prop:f-aver:p:norm:upper_bound}
Let $f\in L^p([0,1]^d)$ for some $p\in[1,\infty)$, and let
\[
Q=\bma+[0,s]^d=\prod_{i=1}^d[a_i,a_i+s]\subset[0,1]^d
\]
be a cube of sidelength $s>0$. Then
\[
\|f-A_Q(f)\|_{L^p(Q)}^p
\le
\frac{d^{p-1}}{s}\int_{-s}^s
\sum_{i=1}^d
\int_{Q\cap(Q-h\bme_i)}
|f(\bmx+h\bme_i)-f(\bmx)|^p
\,d\bmx\,dh.
\]
\end{proposition}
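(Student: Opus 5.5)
Our plan is to reduce to a telescoping argument along coordinate directions, combined with Jensen's inequality. By Jensen (convexity of $|\cdot|^p$), for a.e. $\bmx\in Q$,
\[
|f(\bmx)-A_Q(f)|^p\le \frac{1}{|Q|}\int_Q |f(\bmx)-f(\bmy)|^p\,d\bmy ,
\]
so it suffices to bound the double integral $\frac{1}{s^d}\int_Q\int_Q|f(\bmx)-f(\bmy)|^p\,d\bmy\,d\bmx$.

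For $\bmx,\bmy\in Q$, introduce the intermediate points $\bmz^{(0)}=\bmx$, $\bmz^{(i)}=(y_1,\dots,y_i,x_{i+1},\dots,x_d)$, so $\bmz^{(d)}=\bmy$. Consecutive points differ only in the $i$-th coordinate, by $h_i:=y_i-x_i\in[-s,s]$, i.e.\ $\bmz^{(i)}=\bmz^{(i-1)}+h_i\bme_i$. All of them lie in $Q$, since $Q$ is a product of intervals. By the triangle inequality and the convexity bound $(\sum_{i=1}^d a_i)^p\le d^{p-1}\sum_i a_i^p$,
\[
|f(\bmx)-f(\bmy)|^p\le d^{p-1}\sum_{i=1}^d |f(\bmz^{(i-1)}+h_i\bme_i)-f(\bmz^{(i-1)})|^p .
\]

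Next, fix $i$ and integrate over $(\bmx,\bmy)\in Q\times Q$. We change variables from $(\bmx,\bmy)$ to $(\bmz,h,\text{rest})$, where $\bmz:=\bmz^{(i-1)}\in Q$ and $h=y_i-x_i$. The remaining free variables are $x_1,\dots,x_{i-1}$ (replaced by $y$'s in $\bmz$) and $y_{i+1},\dots,y_d$. The $i$-th coordinate is encoded by $(z_i,h)=(x_i,y_i-x_i)$, which has unit Jacobian. The integrand depends only on $(\bmz,h)$, and the other $d-1$ free coordinates each range over an interval of length $s$, contributing a factor $s^{d-1}$. The constraint $\bmx,\bmy\in Q$ becomes $\bmz\in Q$ and $\bmz+h\bme_i\in Q$, i.e.\ $\bmz\in Q\cap(Q-h\bme_i)$, with $h\in[-s,s]$. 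Hence
\[
\int_Q\int_Q |f(\bmz^{(i-1)}+h_i\bme_i)-f(\bmz^{(i-1)})|^p\,d\bmy\,d\bmx
= s^{d-1}\int_{-s}^{s}\int_{Q\cap(Q-h\bme_i)}|f(\bmz+h\bme_i)-f(\bmz)|^p\,d\bmz\,dh .
\]
Dividing by $s^d$ and summing over $i$ then gives exactly the claimed factor $d^{p-1}/s$.

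The main obstacle is making this change of variables rigorous, since $f$ is only defined a.e. We would handle it with Fubini--Tonelli: the integrands are nonnegative and measurable, and the map $(\bmx,\bmy)\mapsto(\bmz,h,\text{rest})$ is a linear bijection with determinant $\pm1$. Null sets are therefore preserved, and the composition $f(\bmz+h\bme_i)$ is a.e.\ well defined. Finiteness of the right-hand side follows because $f\in L^p$, though this is not needed, as Tonelli handles possibly infinite values.
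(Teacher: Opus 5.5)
Your proof is correct and gives exactly the stated constant $d^{p-1}/s$, but it is organized differently from the paper's.

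\textbf{The paper's route.} The paper argues at the level of operators. It introduces the one-directional averaging operators $P_j$ and shows they are $L^p$ contractions. It then uses $P_1\cdots P_d f=A_Q(f)$ to telescope
$f-A_Q(f)=\sum_j \widetilde P_{j-1}(\mathtt{Id}-P_j)f$,
applies the $d^{p-1}$ splitting, and discards $\widetilde P_{j-1}$ by contractivity. Finally, it bounds each $\|(\mathtt{Id}-P_j)f\|_{L^p(Q)}^p$ by a separate one-dimensional lemma (Jensen on a line plus the substitution $h=y-t$), applied slice by slice via Fubini.

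\textbf{Your route.} You apply a single $d$-dimensional Jensen inequality first, which reduces everything to the pair integral $s^{-d}\iint_{Q\times Q}|f(x)-f(y)|^p$. You then telescope pointwise along the staircase path $z^{(0)},\dots,z^{(d)}$. Each term is handled by one unit-Jacobian linear change of variables in $\mathbb{R}^{2d}$ that integrates out the $d-1$ inactive coordinates.

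\textbf{How they relate.} The two proofs perform the same estimates in the opposite order: the paper splits into directions before applying Jensen, while you apply Jensen before splitting. Indeed, $\widetilde P_{j-1}(\mathtt{Id}-P_j)f(x)$ is precisely the $y$-average over $Q$ of $f(z^{(j-1)})-f(z^{(j)})$. Likewise, your $j$-th term, after integrating out the free variables, coincides with the paper's bound for $\|(\mathtt{Id}-P_j)f\|_{L^p(Q)}^p$.

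\textbf{What each buys.} Your route is shorter and more self-contained. It needs no averaging operators, no contraction property, and no identification of $P_1\cdots P_d f$ with $A_Q(f)$. Its only technical point is that composing $f$ with coordinate projections of $\mathbb{R}^{2d}$ preserves null sets, which you address. The paper's route is more modular. It isolates a reusable one-dimensional lemma and handles measurability slice-wise through Fubini. It also yields the intermediate directional decomposition
$\|f-A_Q(f)\|_{L^p(Q)}^p\le d^{p-1}\sum_j\|(\mathtt{Id}-P_j)f\|_{L^p(Q)}^p$,
which is informative in its own right.
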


The proof proceeds by reducing the estimate to a one-dimensional slice inequality, as shown in the following lemma.

\begin{lemma}
\label{lem:f-aver:p:norm:upper_bound:1d}
Let $\mathcal I=[0,s]\subset\mathbb{R}$ with $s>0$, and let $g\in L^p(\calI)$ for some $p\in[1,\infty)$. 
Then
\[
\int_0^s |g(t)-A_{\mathcal I}(g)|^p\,dt
\le
\frac{1}{s}\int_{-s}^s
\int_{\mathcal I\cap(\mathcal I-h)}
|g(t+h)-g(t)|^p\,dt\,dh.
\]
\end{lemma}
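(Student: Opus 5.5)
Write the deviation from the mean as an average of increments and apply Jensen. For a.e. $t\in\calI$,
\[
g(t)-A_{\calI}(g)=\frac1s\int_0^s\bigl(g(t)-g(\tau)\bigr)\,d\tau .
\]
Since $p\ge1$, Jensen's (or H\"older's) inequality for the normalized measure $d\tau/s$ on $[0,s]$ gives
\[
|g(t)-A_{\calI}(g)|^p\le \frac1s\int_0^s|g(\tau)-g(t)|^p\,d\tau .
\]
Integrate over $t\in[0,s]$ and use Tonelli's theorem, which is justified because the integrand is nonnegative and measurable on $[0,s]^2$. This yields
\[
\int_0^s|g(t)-A_{\calI}(g)|^p\,dt\le \frac1s\int_0^s\!\!\int_0^s|g(\tau)-g(t)|^p\,d\tau\,dt .
\]

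Next, change variables in the inner integral by setting $\tau=t+h$ for fixed $t$, so that $h$ ranges over $[-t,s-t]\subset[-s,s]$. The region $\{(t,\tau)\in[0,s]^2\}$ then becomes exactly $\{(t,h): h\in[-s,s],\ t\in[0,s],\ t+h\in[0,s]\}$. For fixed $h$, the set of admissible $t$ is precisely $\calI\cap(\calI-h)$. This map is a measure-preserving linear change of variables on $\R^2$ (its Jacobian has determinant $1$), so a second application of Tonelli gives
\[
\int_0^s\!\!\int_0^s|g(\tau)-g(t)|^p\,d\tau\,dt=\int_{-s}^s\int_{\calI\cap(\calI-h)}|g(t+h)-g(t)|^p\,dt\,dh,
\]
which is the claimed bound.

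The argument has no real obstacle. The only care needed is measure-theoretic: fix a representative of $g$, note that $(t,\tau)\mapsto|g(\tau)-g(t)|^p$ is measurable on the product, and note that all integrals are of nonnegative functions, so Tonelli applies even when they are infinite. The bound is in any case finite because $g\in L^p$. The averaging identity for $g(t)-A_\calI(g)$ requires $g\in L^1(\calI)$, which holds since $\calI$ is bounded and $p\ge1$.
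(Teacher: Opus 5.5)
Your proof is correct and follows the paper's argument essentially step for step: the averaging identity and Jensen's inequality, integration in $t$, and then the substitution $h=\tau-t$ with Tonelli's theorem, identifying the $t$-domain for fixed $h$ as $\mathcal{I}\cap(\mathcal{I}-h)$. Your measure-theoretic remarks (fixing a representative, $g\in L^1(\mathcal{I})$ because $\mathcal{I}$ is bounded, and Tonelli for nonnegative integrands) match the justifications the paper gives.
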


\begin{proof}
Since $\mathcal I$ has finite measure and $g\in L^p(\calI)$, we have $g\in L^1(\calI)$, so $A_{\mathcal I}(g)$ is well defined. For a.e. \(t\in \mathcal I\),
\[
g(t)-A_{\mathcal I}(g)
=
\frac{1}{s}\int_0^s\bigl(g(t)-g(y)\bigr)\,dy.
\]
By Jensen's inequality,
\[
|g(t)-A_{\mathcal I}(g)|^p
\le
\frac{1}{s}\int_0^s |g(t)-g(y)|^p\,dy.
\]
Integrating in $t$ gives
\[
\int_0^s |g(t)-A_{\mathcal I}(g)|^p\,dt
\le
\frac{1}{s}\int_0^s\int_0^s |g(t)-g(y)|^p\,dy\,dt.
\]

For a.e. $t\in[0,s]$, the change of variables $h=y-t$ yields
\[
\int_0^s |g(t)-g(y)|^p\,dy
=
\int_{-t}^{s-t}|g(t)-g(t+h)|^p\,dh
=
\int_{-s}^s \one_{\{t+h\in\mathcal I\}}\cdot|g(t)-g(t+h)|^p\,dh.
\]
Tonelli's theorem then gives
\[
\begin{aligned}
\int_0^s\int_0^s |g(t)-g(y)|^p\,dy\,dt
&=
\int_{-s}^s\int_0^s \one_{\{t+h\in\mathcal I\}}\cdot|g(t)-g(t+h)|^p\,dt\,dh
\\
&=
\int_{-s}^s\int_{\mathcal I\cap(\mathcal I-h)} |g(t)-g(t+h)|^p\,dt\,dh.
\end{aligned}
\]
Substituting this identity into the previous estimate proves the lemma.
\end{proof}


We now proceed to the proof of Proposition~\ref{prop:f-aver:p:norm:upper_bound}, based on Lemma~\ref{lem:f-aver:p:norm:upper_bound:1d}.

\begin{proof}[Proof of Proposition~\ref{prop:f-aver:p:norm:upper_bound}]
Since $Q$ has finite measure and $f\in L^p(Q)$, we have $f\in L^1(Q)$, so $A_Q(f)$ is well defined.
For each $j=1,2,\dots,d$, define the averaging operator $P_j:L^p(Q)\to L^p(Q)$ by
\[
(P_jg)(x_1,\dots,x_d)
:=
\frac{1}{s}\int_0^s g(x_1,\dots,x_{j-1},a_j+t,x_{j+1},\dots,x_d)\,dt.
\]
Thus $P_jg$ is obtained by averaging in the $j$-th coordinate.

We first note that each $P_j$ is an $L^p$ contraction. Fix $j\in\{1,2,\dots,d\}$ and write
\[
\bmx_{(-j)}:=(x_1,\dots,x_{j-1},x_{j+1},\dots,x_d),
\quad
Q_{(-j)}:=\prod_{i\ne j}[a_i,a_i+s].
\]
For $g\in L^p(Q)$ and a.e. $\bmx_{(-j)}\in Q_{(-j)}$, define the slice
\[
g_{\bmx_{(-j)}}(t)
:=
g(x_1,\dots,x_{j-1},a_j+t,x_{j+1},\dots,x_d)\quad\textnormal{for }t\in[0,s].
\]
Then
\[
(P_jg)(x_1,\dots,x_d)
=
\frac{1}{s}\int_0^s g_{\bmx_{(-j)}}(t)\,dt,
\]
which is independent of $x_j$. By Jensen's inequality,
\[
\begin{aligned}
\|P_jg\|_{L^p(Q)}^p
&=
 \int_{a_j}^{a_j+s}\int_{Q_{(-j)}}\left|\frac{1}{s}\int_0^s g_{\bmx_{(-j)}}(t)\,dt\right|^p d\bmx_{(-j)}d x_j
\\
&=
 s\int_{Q_{(-j)}}\left|\frac{1}{s}\int_0^s g_{\bmx_{(-j)}}(t)\,dt\right|^p d\bmx_{(-j)}
\\
&\le
 s\int_{Q_{(-j)}}\frac{1}{s}\int_0^s |g_{\bmx_{(-j)}}(t)|^p\,dt\,d\bmx_{(-j)}
=
\|g\|_{L^p(Q)}^p.
\end{aligned}
\]
Hence $P_j$ is indeed an $L^p$ contraction.

Now define
\[
\widetilde P_0:=\mathtt{Id},\quad
\widetilde P_j:=P_1\circ  \cdots \circ P_j\quad\textnormal{for }j=1,2,\dots,d.
\]
Each $\widetilde P_j$ is again an $L^p$ contraction, and repeated averaging yields
\[
\widetilde P_df=P_1\cdots P_df=A_Q(f)
\quad\textnormal{a.e. on }Q,
\]
by Fubini's theorem. Therefore
\[
f-A_Q(f)
=
\widetilde P_0f-\widetilde P_df
=
\sum_{j=1}^d \widetilde P_{j-1}(\mathtt{Id}-P_j)f\quad\textnormal{a.e. on }Q.
\]
Using the elementary inequality
\[
\Bigl|\sum_{j=1}^d u_j\Bigr|^p\le d^{p-1}\sum_{j=1}^d |u_j|^p,
\]
we obtain
\[
\begin{aligned}
\|f-A_Q(f)\|_{L^p(Q)}^p
&\le
 d^{p-1}\sum_{j=1}^d \|\widetilde P_{j-1}(\mathtt{Id}-P_j)f\|_{L^p(Q)}^p
\le
 d^{p-1}\sum_{j=1}^d \|(\mathtt{Id}-P_j)f\|_{L^p(Q)}^p,
\end{aligned}
\]
since each $\widetilde P_{j-1}$ is a contraction.

It remains to estimate $\|(\mathtt{Id}-P_j)f\|_{L^p(Q)}^p$. Fix $j\in\{1,2,\dots,d\}$, and for a.e. $\bmx_{(-j)}\in Q_{(-j)}$, we define
\[
f_{\bmx_{(-j)}}(t)
:=
f(x_1,\dots,x_{j-1},a_j+t,x_{j+1},\dots,x_d)\quad\textnormal{for }t\in[0,s].
\]
Then
\[
(P_jf)(x_1,\dots,x_d)=A_{[0,s]}(f_{\bmx_{(-j)}}),
\]
and hence
\[
\begin{split}
    \|(\mathtt{Id}-P_j)f\|_{L^p(Q)}^p
&=
\int_{Q_{(-j)}}\int_{a_j}^{a_j+s} |f(\bmx)-(P_jf)(\bmx)|^p\,dx_j\,d\bmx_{(-j)}
\\ &=
\int_{Q_{(-j)}}\int_0^s |f_{\bmx_{(-j)}}(t)-A_{[0,s]}(f_{\bmx_{(-j)}})|^p\,dt\,d\bmx_{(-j)}.
\end{split}
\]
By Fubini's theorem, for a.e.\ \(\bmx_{(-j)} \in Q_{(-j)}\), the slice \(f_{\bmx_{(-j)}}\) belongs to \(L^p([0,s])\).
Applying Lemma~\ref{lem:f-aver:p:norm:upper_bound:1d} on each slice and using Tonelli's theorem, we obtain
\[
\begin{aligned}
\|(\mathtt{Id}-P_j)f\|_{L^p(Q)}^p
&\le
 \int_{Q_{(-j)}}\frac{1}{s}\int_{-s}^s\int_{[0,s]\cap([0,s]-h)} |f_{\bmx_{(-j)}}(t+h)-f_{\bmx_{(-j)}}(t)|^p\,dt\,dh\,d\bmx_{(-j)}
\\
&=
 \frac{1}{s}\int_{-s}^s\int_{Q_{(-j)}}\int_{[0,s]\cap([0,s]-h)} |f_{\bmx_{(-j)}}(t+h)-f_{\bmx_{(-j)}}(t)|^p\,dt\,d\bmx_{(-j)}\,dh.
\end{aligned}
\]
For fixed $h\in[-s,s]$, the change of variables
\[
\bmx=(x_1,\dots,x_{j-1},a_j+t,x_{j+1},\dots,x_d)
\]
has Jacobian $1$ and identifies
\[
Q_{(-j)}\times\big([0,s]\cap([0,s]-h)\big)
\]
with
\[
Q\cap(Q-h\bme_j).
\]
Under this change of variables,
\[
f_{\bmx_{(-j)}}(t+h)-f_{\bmx_{(-j)}}(t)=f(\bmx+h\bme_j)-f(\bmx).
\]
Therefore
\[
\|(\mathtt{Id}-P_j)f\|_{L^p(Q)}^p
\le
\frac{1}{s}\int_{-s}^s\int_{Q\cap(Q-h\bme_j)} |f(\bmx+h\bme_j)-f(\bmx)|^p\,d\bmx\,dh.
\]
Summing the above inequality over $j=1,2,\dots,d$ proves the proposition.
\end{proof}

\section{Proof of Theorem~\ref{thm:main:Lp}}
\label{sec:proof:main}

We now prove Theorem~\ref{thm:main:Lp}. The argument is constructive and follows the same multilevel philosophy as the statement itself.
If $f$ is a.e. constant, the theorem is immediate: one may take $\Phi_0=\cdots=\Phi_L\equiv f$. We therefore assume from now on that $f$ is not a.e. constant. In that case $\omega_{f,p}(t)>0$ for every $t>0$; otherwise $\omega_{f,p}(t_0)=0$ for some $t_0>0$ would force $f(\cdot+\bmh)=f(\cdot)$ a.e. on $E_{\bmh}$ for every $\|\bmh\|_2\le t_0$, which implies that $f$ is a.e. constant on the connected domain $[0,1]^d$.

For readability, we divide the proof into three steps.

\mystep{1}{Fixing the geometric parameters and recalling the encoding setup.}

We now recall the geometric encoding used for the refinement levels \(\ell\ge1\). For \(\ell=0\), no localization is needed.
For each \(\ell\in\{1,2,\dots,L\}\), let \(\calI_{\ell,j}\), \(Q_{\ell,\bmbeta}\), and \(\Omega_\ell\)
be defined as in Section~\ref{sec:encoder-decoder:realization}, and let \(\bmh_\ell:\R^2\to\R^2\) be the
two-variable map introduced there. In particular,
\(\bmh_\ell\) can be realized by a one-hidden-layer \ReLU\ network of width
\(2N+1.\) 

We set
\[
\eta:=\tfrac12\,\omega_{f,p}(N^{-L})>0
\]
and choose
\(\delta\in\left(0,\tfrac{1}{2N^L}\right)\)
so small that, for every \(\ell=0,1, \dots,L\),
\begin{equation}
\label{eq:delta_small}
\|f\|_{L^p(\Omega_\ell)}
+
(L+1)M_{f,N,L,d,p}\,|\Omega_\ell|^{1/p}
\le \eta,
\end{equation}
where
\[
M_{f,N,L,d,p}:=2^{d+2}(d+1)N^{dL}\|f\|_{L^p([0,1]^d)}.
\]
\begin{colorenv}
    Such a choice is possible because, for each fixed \(\ell\),
\[
|\Omega_\ell|
=
1-N^{d\ell}(N^{-\ell}-\delta)^d
=
1-(1-N^\ell\delta)^d
\longrightarrow 0
\quad\textnormal{as }\delta\to0^+,
\]
and \(f\in L^p([0,1]^d)\) implies the absolute continuity of the integral on sets
of small measure. Since only finitely many levels are involved, one common
\(\delta\) works for all \(\ell=0,1,\dots,L\). Moreover, by
\eqref{eq:historical-transition-equals-current}, this same choice controls not only
the transition set created at level \(\ell\), but also the accumulated historical
transition set:
\[
\left|\bigcup_{j=0}^{\ell}\Omega_j\right|
=
|\Omega_\ell|
\quad\textnormal{and}\quad
\|f\|_{L^p(\cup_{j=0}^{\ell}\Omega_j)}
=
\|f\|_{L^p(\Omega_\ell)} .
\]
Thus no additional accumulation term appears in the induction.
\end{colorenv}

Let $\bmh_{\ell,k}$ denote its $k$-th component for $k=1,2$.
Then we define
\[
\psi_\ell(x)
:=
\bmh_{\ell,2}\circ \bmh_{\ell-1}\circ\cdots\circ \bmh_1
\!\left(\begin{bmatrix}x\\0\end{bmatrix}\right),
\quad
\bmPsi_\ell(\bmx)
:=
\Big(\psi_\ell(x_1),\;\cdots,\;
\psi_\ell(x_d)\Big).
\]
Then Proposition~\ref{prop:floor:approx:Recursively} yields the key localization property:
if \(\bmx\in Q_{\ell,\bmbeta}\) with
\(\bmbeta=(\beta_1,\dots,\beta_d)\in\{0,1,\dots,N^\ell-1\}^d\),
then \(x_j\in\calI_{\ell,\beta_j}\) for every \(j=1,2,\dots,d\), and hence
\[
\psi_\ell(x_j)=\frac{\beta_j}{N^\ell}
\quad  \tn{for  } j=1,2,\dots,d.
\]
Therefore,
\[
\bmPsi_\ell(\bmx)=\frac{\bmbeta}{N^\ell}
\quad\textnormal{for all }\bmx\in Q_{\ell,\bmbeta}.
\]
To convert this geometric label into a scalar index, we define
\[
\Lambda_\ell(\bmy):=1+\sum_{i=1}^d N^{i\ell}y_i 
\quad \tn{for  } \bmy\in\mathbb R^d.
\]
Then \(\Lambda_\ell\) maps the discrete set
\[
\left\{\tfrac{\bmbeta}{N^\ell}:\bmbeta\in\{0,1,\dots,N^\ell-1\}^d\right\}
\]
bijectively onto \(\{1,2,\dots,N^{d\ell}\}\). Equivalently, for each
\(k\in\{1,2,\dots,N^{d\ell}\}\), there exists a unique
\(\bmtheta^{(k)}\in\{0,1,\dots,N^\ell-1\}^d\) such that
\[
\Lambda_\ell\!\left(\tfrac{\bmtheta^{(k)}}{N^\ell}\right)
=
1+\sum_{i=1}^d N^{(i-1)\ell}\bmtheta_i^{(k)}
=
k.
\]

This preparation provides the geometric and algebraic framework needed below:
the map \(\bmPsi_\ell\) identifies the level-\(\ell\) cube containing the input,
while \(\Lambda_\ell\) converts that cube label into a unique integer index.
These indices will be used to define the cell averages of the residual and to construct
the refinement module \(\Gamma_\ell\) at each level.

\mystep{2}{Constructing the refinement modules and proving the layer-wise error bound.}

Define recursively
\[
f_0:=f,\quad
\Phi_\ell:=\sum_{j=0}^{\ell}\Gamma_j,\quad
f_{\ell+1}:=f_\ell-\Gamma_\ell\quad\textnormal{for }\ell=0,1,\dots,L.
\]
We shall prove by induction that for every $\ell=0,1,\dots,L$ the following properties hold:
\begin{enumerate}[label=(\roman*)]
 \item \label{item:Phi_ell:constant} $\Phi_\ell$ is constant on every level-$\ell$ cube $Q_{\ell,\bmbeta}$;
 \item \label{item:Gamma_ell:upper_bound} $\|\Gamma_\ell\|_{L^\infty([0,1]^d)}\le M_{f,N,L,d,p}$;
 \item \label{item:Error:upper_bound} $\|f-\Phi_\ell\|_{L^p([0,1]^d)}\le (2d+1)\,\omega_{f,p}(N^{-\ell})$.
\end{enumerate}

\mystep{2.1}{Base level $\ell=0$.}
Set
\[
y_{0,1} =   A_{Q_{0,\bmzero}}(f_0)=A_{Q_{0,\bmzero}}(f)=\frac{1}{|Q_{0,\bmzero}|}\int_{Q_{0,\bmzero}}f(\bmy)\,d\bmy.
\]
Applying Proposition~\ref{prop:k:to:yk}, we obtain parameters $v_0,w_0\in\mathbb{R}$ such that
\begin{equation*}
\bigl|A_{Q_{0,\bmzero}}(f)-u_0\sin\bigl(v_0\sin(w_0 )\bigr)\bigr|= \bigl|y_{0,1}-u_0\sin\bigl(v_0\sin(w_0 )\bigr)\bigr|<\eta,
\end{equation*}
where
\[
\begin{split}
    u_0 = |y_{0,1}|
=|A_{Q_{0,\bmzero}}(f_0)| 
&\le \frac{1}{|Q_{0,\bmzero}|}\int_{Q_{0,\bmzero}}|f_0(\bmy)|\,d\bmy 
\le   (1-\delta)^{-d}  \|f\|_{L^1([0,1]^d)}
\\  & \le 2^d\|f\|_{L^1([0,1]^d)}
\le 2^d\|f\|_{L^p([0,1]^d)}
\le M_{f,N,L,d,p}.
\end{split}
\]
We then define
\[
\Phi_0=\Gamma_0 \equiv u_0\sin\bigl(v_0\sin(w_0)\bigr).
\]
It follows that
\[
\|\Gamma_0\|_{L^\infty([0,1]^d)} \le u_0 \le M_{f,N,L,d,p},
\]
which establishes \ref{item:Gamma_ell:upper_bound}. Since \(\Phi_0\) is constant on \(Q_{0,\bmzero}\subseteq [0,1]^d\), property \ref{item:Phi_ell:constant} is immediate.

It remains to prove \ref{item:Error:upper_bound}. Applying Proposition~\ref{prop:f-aver:p:norm:upper_bound} with $Q=Q_{0,\bmzero}$ and $s_0=1-\delta$, we obtain
\[
\begin{aligned}
\|f-A_{Q_{0,\bmzero}}(f)\|_{L^p(Q_{0,\bmzero})}^p
&\le
\frac{d^{p-1}}{s_0}\int_{-s_0}^{s_0}\sum_{i=1}^d\int_{Q_{0,\bmzero}\cap(Q_{0,\bmzero}-h\bme_i)} |f(\bmx+h\bme_i)-f(\bmx)|^p\,d\bmx\,dh
\\
&\le
\frac{d^{p-1}}{s_0}\int_{-s_0}^{s_0}\sum_{i=1}^d \omega_{f,p}(1)^p\,dh
=
2d^p\omega_{f,p}(1)^p.
\end{aligned}
\]
It follows that
\begin{equation*}
    \begin{split}
        \|f-\Phi_0\|_{L^p(Q_{0,\bmzero})}
        & =\|f-u_0\sin (v_0\sin(w_0 ) )\|_{L^p(Q_{0,\bmzero})}
 \\  &   \le   \|f-A_{Q_{0,\bmzero}}(f)\|_{L^p(Q_{0,\bmzero})}
    +\|A_{Q_{0,\bmzero}}(f)-u_0\sin (v_0\sin(w_0 ) ) \|_{L^p(Q_{0,\bmzero})}
    \\  & \le  2^{1/p}d\,\omega_{f,p}(1) +\eta|Q_{0,\bm0}|^{1/p} \le  2^{1/p}d\,\omega_{f,p}(1) +\eta.
    \end{split}
\end{equation*}
Moreover, by \eqref{eq:delta_small},
\[
\|f-\Phi_0\|_{L^p(\Omega_0)}
\le 
\|f\|_{L^p(\Omega_0)}+\|\Phi_0\|_{L^p(\Omega_0)}
\le \|f\|_{L^p(\Omega_0)}+M_{f,N,L,d,p}\,|\Omega_0|^{1/p}\le \eta,
\]
from which we deduce
\[
\begin{split}
    \|f-\Phi_0\|_{L^p([0,1]^d)}
&\le \|(f-\Phi_0)\cdot\one_{Q_{0,\bm0}}\|_{L^p([0,1]^d)} + \|(f-\Phi_0)\cdot\one_{\Omega_0} 
\|_{L^p([0,1]^d)}
\\& \le 
2^{1/p}d\,\omega_{f,p}(1)+\eta +\eta \le (2d+1)\,\omega_{f,p}(1).
\end{split}
\]
This proves \ref{item:Error:upper_bound} for $\ell=0$.



\mystep{2.2}{Inductive step.}
Fix $\ell\in\{1,2,\dots,L\}$ and assume that \ref{item:Phi_ell:constant}--\ref{item:Error:upper_bound} hold at level $\ell-1$.
For each integer label $k\in\{1,2,\dots,N^{d\ell}\}$, let $\bmtheta^{(k)}\in\{0,\dots,N^\ell-1\}^d$ be the unique multi-index satisfying
\[
\Lambda_\ell\!\left(\tfrac{\bmtheta^{(k)}}{N^\ell}\right)=k,
\]
and define the level-$\ell$ cell average of the residual by
\[
y_{\ell,k}:=A_{Q_{\ell,\bmtheta^{(k)}}}(f_\ell).
\]
Because $0<\delta<1/(2N^L)$ and $\ell\le L$, the volume of each interior cube satisfies
\[
|Q_{\ell,\bmtheta^{(k)}}|=(N^{-\ell}-\delta)^d\ge \left(\tfrac{1}{2N^L}\right)^d.
\]
Therefore,
\[
\begin{aligned}
|y_{\ell,k}|
&\le
 |Q_{\ell,\bmtheta^{(k)}}|^{-1}\int_{Q_{\ell,\bmtheta^{(k)}}}|f_\ell(\bmx)|\,d\bmx
\le
 2^dN^{dL}\|f_\ell\|_{L^1([0,1]^d)}
\le
 2^dN^{dL}\|f_\ell\|_{L^p([0,1]^d)}.
\end{aligned}
\]
By the induction hypothesis at level $\ell-1$,
\[
\|f_\ell\|_{L^p([0,1]^d)}=\|f-\Phi_{\ell-1}\|_{L^p([0,1]^d)}\le (2d+1)\,\omega_{f,p}(N^{-(\ell-1)})\le (4d+2)\|f\|_{L^p([0,1]^d)},
\]
so $|y_{\ell,k}|\le M_{f,N,L,d,p}$.

Applying Proposition~\ref{prop:k:to:yk}, we obtain parameters $v_\ell,w_\ell\in\mathbb{R}$ such that
\begin{equation}
\label{eq:decoder-fit}
\bigl|y_{\ell,k}-u_\ell\sin\bigl(v_\ell\sin(w_\ell k)\bigr)\bigr|<\eta
\quad\textnormal{for }k=1,2,\dots,N^{d\ell},
\end{equation}
where $u_\ell= \max_k|y_{\ell,k}|\le M_{f,N,L,d,p}$. Define
\[
\Gamma_\ell(\bmx):=u_\ell\sin\bigl(v_\ell\sin\bigl(w_\ell\, \Lambda_\ell\circ \bmPsi_\ell(\bmx)\bigr)\bigr).
\]
Then $\|\Gamma_\ell\|_{L^\infty([0,1]^d)}\le M_{f,N,L,d,p}$, which proves \ref{item:Gamma_ell:upper_bound} at level $\ell$.

Next we verify \ref{item:Phi_ell:constant}. By construction, $\bmPsi_\ell$ is constant on each level-$\ell$ cube $Q_{\ell,\bmbeta}$, hence so is $\Gamma_\ell$. On the other hand, each level-$\ell$ cube is contained in a unique level-$(\ell-1)$ cube, and the induction hypothesis says that $\Phi_{\ell-1}$ is constant on every such cube. Therefore $\Phi_\ell=\Phi_{\ell-1}+\Gamma_\ell$ is constant on every level-$\ell$ cube, establishing \ref{item:Phi_ell:constant}.

It remains to prove \ref{item:Error:upper_bound}. Let
\[
U_\ell:=\bigcup_{\bmbeta\in\{0,1,\dots,N^\ell-1\}^d}Q_{\ell,\bmbeta}=[0,1]^d\setminus\Omega_\ell.
\]
Fix $\bmbeta\in\{0,1,\dots,N^\ell-1\}^d$ and $\bmx\in Q_{\ell,\bmbeta}$. Let $k$ be the unique integer such that $\bmbeta=\bmtheta^{(k)}$. Then, because $\bmPsi_\ell(\bmx)=\bmbeta/N^\ell$ on $Q_{\ell,\bmbeta}$, we have
\[
\Lambda_\ell\circ \bmPsi_\ell(\bmx)=\Lambda_\ell\!\left(\tfrac{\bmbeta}{N^\ell}\right)=k.
\]
Hence \eqref{eq:decoder-fit} yields
\begin{equation*}
\bigl|A_{Q_{\ell,\bmbeta}}(f_\ell)-\Gamma_\ell(\bmx)\bigr|<\eta.
\end{equation*}
Because $\Phi_{\ell-1}$ is constant on $Q_{\ell,\bmbeta}$, say $\Phi_{\ell-1}\equiv c_{\ell-1,\bmbeta}$ there, we have
\[
f_\ell=f-\Phi_{\ell-1} = f-c_{\ell-1,\bmbeta}
\quad\textnormal{a.e. on }Q_{\ell,\bmbeta},
\]
and therefore
\[
A_{Q_{\ell,\bmbeta}}(f_\ell)=A_{Q_{\ell,\bmbeta}}(f-c_{\ell-1,\bmbeta})=A_{Q_{\ell,\bmbeta}}(f)-c_{\ell-1,\bmbeta}.
\]
Consequently, for a.e. $\bmx\in Q_{\ell,\bmbeta}$,
\[
\begin{aligned}
|f_{\ell+1}(\bmx)|
&=
|f_\ell(\bmx)-\Gamma_\ell(\bmx)|
\le
|f_\ell(\bmx)-A_{Q_{\ell,\bmbeta}}(f_\ell)|+|A_{Q_{\ell,\bmbeta}}(f_\ell)-\Gamma_\ell(\bmx)|
\\
&=
|f(\bmx)-A_{Q_{\ell,\bmbeta}}(f)|+|A_{Q_{\ell,\bmbeta}}(f_\ell)-\Gamma_\ell(\bmx)|
<
|f(\bmx)-A_{Q_{\ell,\bmbeta}}(f)|+\eta.
\end{aligned}
\]
Define the piecewise constant function
\[
\chi_\ell:=
\sum_{\bmbeta\in\{0,\dots,N^\ell-1\}^d}
A_{Q_{\ell,\bmbeta}}(f)\cdot \one_{{Q_{\ell,\bmbeta}}}.
\]
Then the above a.e. pointwise estimate implies
\[
\|f_{\ell+1}\|_{L^p(U_\ell)}\le \|f-\chi_\ell\|_{L^p(U_\ell)}+\eta|U_\ell|^{1/p}
\le \|f-\chi_\ell\|_{L^p(U_\ell)}+\eta.
\]
Moreover,
\[
\|f-\chi_\ell\|_{L^p(U_\ell)}^p
=
\sum_{\bmbeta\in\{0,\dots,N^\ell-1\}^d}\|f-A_{Q_{\ell,\bmbeta}}(f)\|_{L^p(Q_{\ell,\bmbeta})}^p.
\]
Let $s_\ell:=N^{-\ell}-\delta$. Applying Proposition~\ref{prop:f-aver:p:norm:upper_bound} to each cube $Q_{\ell,\bmbeta}$ and summing over $\bmbeta$, we obtain
\[
\begin{aligned}
\|f-\chi_\ell\|_{L^p(U_\ell)}^p
&\le
\frac{d^{p-1}}{s_\ell}\int_{-s_\ell}^{s_\ell}
\sum_{i=1}^d\sum_{\bmbeta}
\int_{Q_{\ell,\bmbeta}\cap(Q_{\ell,\bmbeta}-h\bme_i)}
|f(\bmx+h\bme_i)-f(\bmx)|^p\,d\bmx\,dh.
\end{aligned}
\]
For fixed $i$ and $h$, the sets $Q_{\ell,\bmbeta}\cap(Q_{\ell,\bmbeta}-h\bme_i)$ are pairwise disjoint and their union is contained in $E_{h\bme_i}$. Since $|h|\le s_\ell<N^{-\ell}$, the definition of $\omega_{f,p}$ gives
\[
\sum_{\bmbeta}
\int_{Q_{\ell,\bmbeta}\cap(Q_{\ell,\bmbeta}-h\bme_i)}
|f(\bmx+h\bme_i)-f(\bmx)|^p\,d\bmx
\le
\omega_{f,p}(N^{-\ell})^p.
\]
Therefore,
\[
\begin{aligned}
\|f-\chi_\ell\|_{L^p(U_\ell)}^p
&\le
\frac{d^{p-1}}{s_\ell}\int_{-s_\ell}^{s_\ell}\sum_{i=1}^d\omega_{f,p}(N^{-\ell})^p\,dh
=
2d^p\omega_{f,p}(N^{-\ell})^p,
\end{aligned}
\]
and hence
\begin{equation}
\label{eq:good-region-estimate}
\|f_{\ell+1}\|_{L^p(U_\ell)}
\le
2^{1/p}d\,\omega_{f,p}(N^{-\ell})+\eta.
\end{equation}

\begin{colorenv}
    We next estimate the residual on the full transition region \(\Omega_\ell\). This
region already contains all transition regions created at earlier levels. 
By 
\(\Phi_\ell=\sum_{i=0}^\ell \Gamma_i\) and
property~\ref{item:Gamma_ell:upper_bound} at levels \(0,1,\dots,\ell\),
\end{colorenv}
\[
\|\Phi_\ell\|_{L^\infty([0,1]^d)}\le (\ell+1)M_{f,N,L,d,p}\le (L+1)M_{f,N,L,d,p}.
\]
Therefore,
\[
\begin{aligned}
\|f_{\ell+1}\|_{L^p(\Omega_\ell)}
&=
\|f-\Phi_\ell\|_{L^p(\Omega_\ell)}
\le
\|f\|_{L^p(\Omega_\ell)}+\|\Phi_\ell\|_{L^p(\Omega_\ell)}
\\
&\le
\|f\|_{L^p(\Omega_\ell)}+(L+1)M_{f,N,L,d,p}|\Omega_\ell|^{1/p}
\le \eta,
\end{aligned}
\]
where the last step is exactly the choice of $\delta$ in \eqref{eq:delta_small}.
Combining this with \eqref{eq:good-region-estimate}, we obtain
\[
\begin{aligned}
\|f-\Phi_\ell\|_{L^p([0,1]^d)}
&=
\|f_{\ell+1}\|_{L^p([0,1]^d)}
=\|f_{\ell+1}\cdot\one_{U_\ell}+f_{\ell+1}\cdot\one_{\Omega_\ell}\|_{L^p([0,1]^d)}
\\
&\le
\|f_{\ell+1}\|_{L^p(U_\ell)}+\|f_{\ell+1}\|_{L^p(\Omega_\ell)}
\le
2^{1/p}d\,\omega_{f,p}(N^{-\ell})+2\eta.
\end{aligned}
\]
Since $2\eta=\omega_{f,p}(N^{-L})\le \omega_{f,p}(N^{-\ell})$ and $2^{1/p}d\le 2d$, we conclude that
\[
\|f-\Phi_\ell\|_{L^p([0,1]^d)}\le (2d+1)\,\omega_{f,p}(N^{-\ell}),
\]
which proves \ref{item:Error:upper_bound} at level $\ell$.
This completes the induction.

\mystep{3}{Realizing the construction within the claimed mixed-activation architecture.}

We now explain why the functions $\Gamma_\ell$ constructed above can be embedded into a single shared network of width $2dN+d+2$. The relevant architecture is sketched in Figure~\ref{fig:MGDL_SinReLU_Proof}.
The key observation is that each map $\bmh_\ell$ is realized by a one-hidden-layer \ReLU\ network of width $2N+1$. Running $d$ copies of this construction in parallel gives a \ReLU\ block of width $d(2N+1)$ that simultaneously updates the localization variables for all coordinates. These blocks are responsible for computing the maps $\bmPsi_\ell$ recursively.

The scalar map $\Lambda_\ell$ is affine, so once the localization variables are available, the cell index $k=\Lambda_\ell\circ \bmPsi_\ell(\bmx)$ can be formed by an affine combination. Proposition~\ref{prop:k:to:yk} then shows that two successive sine evaluations are sufficient to realize the decoder
\[
k=\Lambda_\ell\circ \bmPsi_\ell(\bmx)\longmapsto u_\ell\sin\bigl(v_\ell\sin\big(w_\ell\, \Lambda_\ell\circ \bmPsi_\ell(\bmx)\big)\bigr).
\]
This is why exactly two non-\ReLU\ channels are reserved in the activation $\bmvarrho$: one sine channel produces the inner sine, and the second produces the outer sine.

\begin{figure}[!htp]
\centering
\includegraphics[width=0.989959872\textwidth]{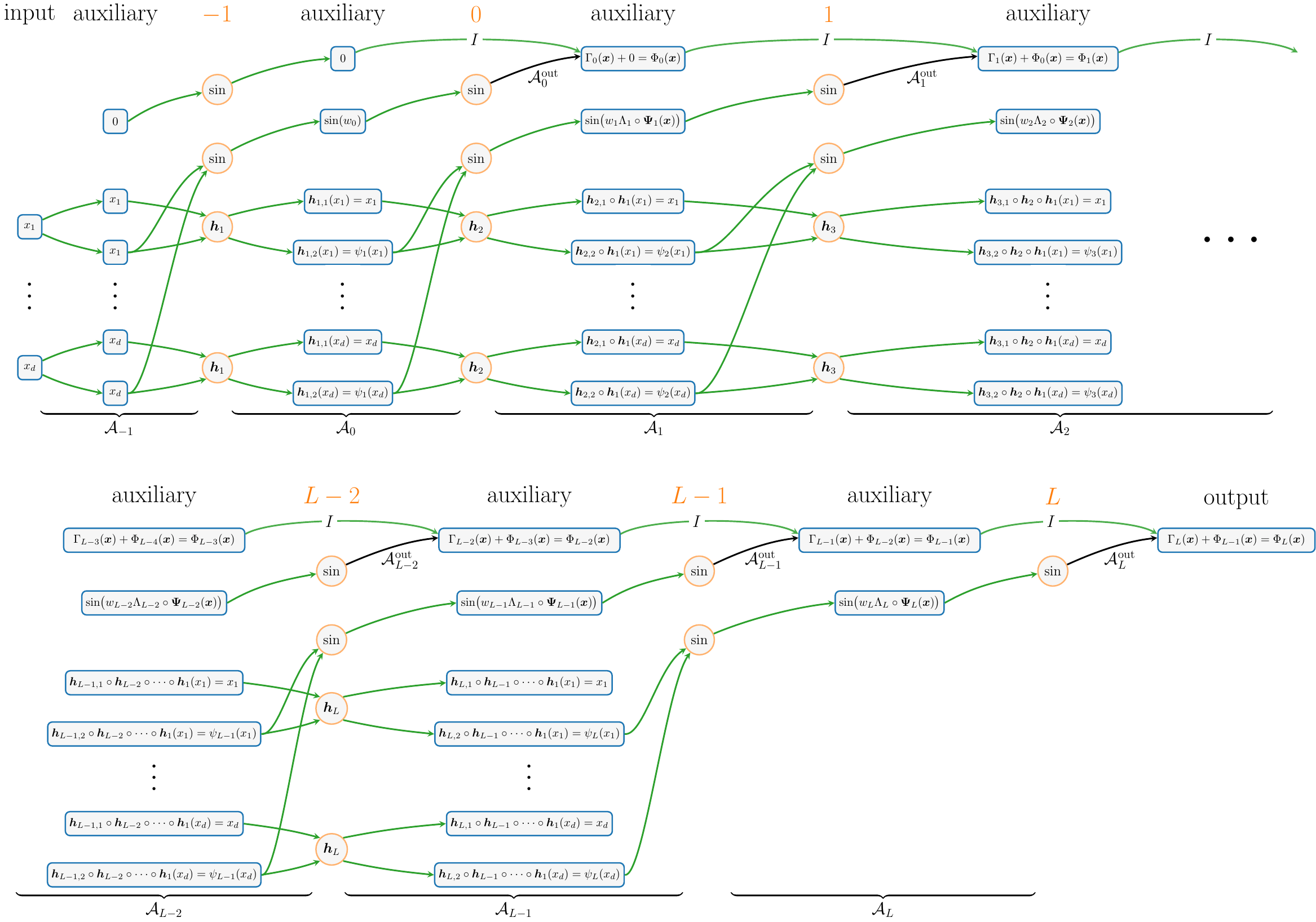}
\caption{A shared network architecture realizing the layer-wise approximants $\Phi_\ell$ for all $\ell$.}
\label{fig:MGDL_SinReLU_Proof}
\end{figure}

Putting these ingredients together, each hidden layer consists of
$
d(2N+1)$ \ReLU\ units and $2$ sine units.
Hence the total width is
\[
d(2N+1)+2=2dN+d+2.
\]

We remark that Figure~\ref{fig:MGDL_SinReLU_Proof} contains many intermediate auxiliary layers, which are included purely for illustration and explanatory purposes. By composing the affine maps associated with the parallel encoder blocks and the two sine channels, we obtain affine transformations
\[
\mathcal A_i\in\aff_{\le 2dN+d+2}
\quad\textnormal{for }i=-1,0,\dots,L,
\]
which generate the shared hidden representation. For each $j\in\{0,\dots,L\}$, an additional affine output head
\[
\mathcal A_j^{\mathrm{out}}\in\aff_{\le 2dN+d+2}
\]
extracts the corresponding correction term $\Gamma_j$. Consequently,
\[
\Gamma_j=\mathcal A_j^{\mathrm{out}}\circ\ocirc_{i=-1}^{j}(\bmvarrho\circ\mathcal A_i)
\quad\textnormal{for }j=0,1,\dots,L,
\]
and therefore
\[
\Phi_\ell=\sum_{j=0}^{\ell}\Gamma_j
=\sum_{j=0}^{\ell}\mathcal A_j^{\mathrm{out}}\circ\ocirc_{i=-1}^{j}(\bmvarrho\circ\mathcal A_i)
\quad\textnormal{for }\ell=0,1,\dots,L.
\]
This is exactly the representation stated in Theorem~\ref{thm:main:Lp}. The layer-wise error estimate was proved in Step~2, so the proof of the theorem is complete.

\section{Conclusion}
\label{sec:conclusion}

We have developed a layer-wise approximation framework for deep neural networks in which depth admits a precise multiscale interpretation. Specifically, for any prescribed finite depth, we construct a single shared mixed-activation architecture of fixed width \(2dN+d+2\) such that every intermediate readout \(\Phi_\ell\) is itself an approximant to the target function. For each \(f\in L^p([0,1]^d)\) with \(p\in[1,\infty)\), the approximation error of \(\Phi_\ell\) is bounded by \((2d+1)\,\omega_{f,p}(N^{-\ell})\) for all \(\ell\), where \(\omega_{f,p}\) denotes the \(L^p\) modulus of continuity. In particular, for \(1\)-Lipschitz functions, this yields the geometric rate \((2d+1)N^{-\ell}\). Our construction is motivated by multigrade deep learning, in which depth acts as a progressive refinement mechanism: each new layer captures residual structure at a finer scale while preserving the previously constructed approximants, thereby producing a nested architecture that supports adaptive refinement within the prescribed depth range. At the same time, the scope of the present work is purely approximation-theoretic. The results concern representation error and the existence of suitable network parameters, and 
\begin{colorenv}
    do not address optimization, generalization, numerical stability, bounded-weight
approximation, or bit-complexity. In particular, the fixed-width geometric rate is
obtained through a two-sine decoder whose parameters may grow very rapidly with
the number \(N^{d\ell}\) of level-\(\ell\) cells.
These issues remain important for
understanding the practical role of multigrade constructions.
\end{colorenv}

Several directions deserve further investigation. From the approximation-theoretic perspective, it would be natural to extend the present framework to broader smoothness classes and function spaces, and to examine whether comparable layer-wise approximation estimates continue to hold for other network architectures, activation patterns, or structural constraints on width and depth. In particular, an important question is whether the current analysis can be strengthened so as to provide analogous layer-wise rates together with quantitative control of the decoder parameters, such as bounds on their magnitudes, arithmetic precision requirements, or numerical stability. 
From the algorithmic perspective, it remains to understand how the constructive theory developed here interacts with practical issues of optimization and generalization in multigrade models. In particular, it would be valuable to clarify whether the multilevel approximation structure identified in this paper can be exploited to guide training procedures, improve stability, or yield more effective adaptive strategies in practice. More broadly, these questions point toward the need for a deeper understanding of the relation between approximation, architecture, and learning dynamics in nested deep models. We hope that the present work provides a useful step toward a broader mathematical theory of deep, nested, and multilevel neural approximation.

\section*{Acknowledgments}

Shijun Zhang was partially supported by the start-up fund P0053092 from The Hong Kong Polytechnic University.
Zuowei Shen was partially supported under the Distinguished Professorship of National University of Singapore.
Yuesheng Xu was supported in part by the U.S. National Science Foundation under Grant DMS-2208386.

\bibliographystyle{plainnat}
\bibliography{references}

\begin{thebibliography}{55}
\providecommand{\natexlab}[1]{#1}
\providecommand{\url}[1]{\texttt{#1}}
\expandafter\ifx\csname urlstyle\endcsname\relax
  \providecommand{\doi}[1]{doi: #1}\else
  \providecommand{\doi}{doi: \begingroup \urlstyle{rm}\Url}\fi

\bibitem[Arora et~al.(2022)Arora, Li, and Panigrahi]{arora2022understanding}
Sanjeev Arora, Zhiyuan Li, and Abhishek Panigrahi.
\newblock Understanding gradient descent on the edge of stability in deep
  learning.
\newblock In Kamalika Chaudhuri, Stefanie Jegelka, Le~Song, Csaba Szepesvari,
  Gang Niu, and Sivan Sabato, editors, \emph{Proceedings of the 39th
  International Conference on Machine Learning}, volume 162 of
  \emph{Proceedings of Machine Learning Research}, pages 948--1024. PMLR,
  17--23 Jul 2022.
\newblock URL: \url{https://proceedings.mlr.press/v162/arora22a.html}.

\bibitem[Bengio et~al.(2006)Bengio, Lamblin, Popovici, and
  Larochelle]{Bengio2007}
Yoshua Bengio, Pascal Lamblin, Dan Popovici, and Hugo Larochelle.
\newblock Greedy layer-wise training of deep networks.
\newblock In B.~Sch\"{o}lkopf, J.~Platt, and T.~Hoffman, editors,
  \emph{Advances in Neural Information Processing Systems}, volume~19. MIT
  Press, 2006.
\newblock URL:
  \url{https://proceedings.neurips.cc/paper_files/paper/2006/file/5da713a690c067105aeb2fae32403405-Paper.pdf}.

\bibitem[B{\"o}lcskei et~al.(2019)B{\"o}lcskei, Grohs, Kutyniok, and
  Petersen]{B_lcskei_2019}
Helmut B{\"o}lcskei, Philipp Grohs, Gitta Kutyniok, and Philipp Petersen.
\newblock Optimal approximation with sparsely connected deep neural networks.
\newblock \emph{SIAM Journal on Mathematics of Data Science}, 1\penalty0
  (1):\penalty0 8--45, Jan 2019.
\newblock ISSN 2577-0187.
\newblock \doi{10.1137/18m118709x}.

\bibitem[Cai et~al.(2020)Cai, Li, and Liu]{doi:10.1137/19M1310050}
Wei Cai, Xiaoguang Li, and Lizuo Liu.
\newblock A phase shift deep neural network for high frequency approximation
  and wave problems.
\newblock \emph{SIAM Journal on Scientific Computing}, 42\penalty0
  (5):\penalty0 A3285--A3312, 2020.
\newblock \doi{10.1137/19M1310050}.

\bibitem[Cheng et~al.(2025)Cheng, Li, Lin, and Shen]{doi:10.1137/23M1599744}
Jingpu Cheng, Qianxiao Li, Ting Lin, and Zuowei Shen.
\newblock Interpolation, approximation, and controllability of deep neural
  networks.
\newblock \emph{SIAM Journal on Control and Optimization}, 63\penalty0
  (1):\penalty0 625--649, 2025.
\newblock \doi{10.1137/23M1599744}.

\bibitem[{Cheng} et~al.(2026){Cheng}, {Li}, {Lin}, and
  {Shen}]{2026arXiv260315363C}
Jingpu {Cheng}, Qianxiao {Li}, Ting {Lin}, and Zuowei {Shen}.
\newblock Deep learning and the rate of approximation by flows.
\newblock \emph{arXiv e-prints}, art. arXiv:2603.15363, March 2026.
\newblock \doi{10.48550/arXiv.2603.15363}.

\bibitem[{Cohen} et~al.(2022){Cohen}, {DeVore}, {Petrova}, and
  {Wojtaszczyk}]{cohen2020optimal}
Albert {Cohen}, Ronald {DeVore}, Guergana {Petrova}, and Przemyslaw
  {Wojtaszczyk}.
\newblock Optimal stable nonlinear approximation.
\newblock \emph{Foundations of Computational Mathematics}, 22:\penalty0
  607--648, 2022.
\newblock \doi{10.1007/s10208-021-09494-z}.

\bibitem[Cohen et~al.(2021)Cohen, Kaur, Li, Kolter, and
  Talwalkar]{cohen2021gradient}
Jeremy~M Cohen, Simran Kaur, Yuanzhi Li, J~Zico Kolter, and Ameet Talwalkar.
\newblock Gradient descent on neural networks typically occurs at the edge of
  stability.
\newblock \emph{arXiv e-prints}, art. arXiv:2103.00065, February 2021.
\newblock \doi{10.48550/arXiv.2103.00065}.

\bibitem[Cybenko(1989)]{Cybenko1989ApproximationBS}
George Cybenko.
\newblock Approximation by superpositions of a sigmoidal function.
\newblock \emph{Mathematics of Control, Signals, and Systems}, 2:\penalty0
  303--314, 1989.
\newblock \doi{10.1007/BF02551274}.

\bibitem[Daubechies et~al.(2022)Daubechies, DeVore, Foucart, Hanin, and
  Petrova]{Ingrid}
Ingrid Daubechies, Ronald DeVore, Simon Foucart, Boris Hanin, and Guergana
  Petrova.
\newblock Nonlinear approximation and (deep) $\mathrm{ReLU}$ networks.
\newblock \emph{Constructive Approximation}, 55:\penalty0 127--172, 2022.
\newblock \doi{10.1007/s00365-021-09548-z}.

\bibitem[DeVore(1998)]{devore_1998}
Ronald~A. DeVore.
\newblock Nonlinear approximation.
\newblock \emph{Acta Numerica}, 7:\penalty0 51–150, 1998.
\newblock \doi{10.1017/S0962492900002816}.

\bibitem[Ding et~al.(2023)Ding, Xia, and Bu]{NEURIPS2023_1d5a9286}
Shizhe Ding, Boyang Xia, and Dongbo Bu.
\newblock Accurate interpolation for scattered data through hierarchical
  residual refinement.
\newblock In A.~Oh, T.~Naumann, A.~Globerson, K.~Saenko, M.~Hardt, and
  S.~Levine, editors, \emph{Advances in Neural Information Processing Systems},
  volume~36, pages 9144--9155. Curran Associates, Inc., 2023.
\newblock URL:
  \url{https://proceedings.neurips.cc/paper_files/paper/2023/file/1d5a92867cf463fad136cfa23395840b-Paper-Conference.pdf}.

\bibitem[Fan et~al.(2022)Fan, Wang, Guo, Zhu, Yan, Wang, and Yu]{9614997}
Feng-Lei Fan, Dayang Wang, Hengtao Guo, Qikui Zhu, Pingkun Yan, Ge~Wang, and
  Hengyong Yu.
\newblock On a sparse shortcut topology of artificial neural networks.
\newblock \emph{IEEE Transactions on Artificial Intelligence}, 3\penalty0
  (4):\penalty0 595--608, 2022.
\newblock \doi{10.1109/TAI.2021.3128132}.

\bibitem[Fang and Xu(2024)]{FangXu2024}
Ronglong Fang and Yuesheng Xu.
\newblock Addressing spectral bias of deep neural networks by multi-grade deep
  learning.
\newblock In A.~Globerson, L.~Mackey, D.~Belgrave, A.~Fan, U.~Paquet,
  J.~Tomczak, and C.~Zhang, editors, \emph{Advances in Neural Information
  Processing Systems}, volume~37, pages 114122--114146. Curran Associates,
  Inc., 2024.
\newblock \doi{10.52202/079017-3625}.

\bibitem[{Fang} and {Xu}(2025)]{FangXu2025}
Ronglong {Fang} and Yuesheng {Xu}.
\newblock Computational advantages of multi-grade deep learning: Convergence
  analysis and performance insights.
\newblock \emph{arXiv e-prints}, art. arXiv:2507.20351, July 2025.
\newblock \doi{10.48550/arXiv.2507.20351}.

\bibitem[Gallant and White(1988)]{Gallant1988ThereEA}
A.~Ronald Gallant and Halbert White.
\newblock There exists a neural network that does not make avoidable mistakes.
\newblock In \emph{IEEE 1988 International Conference on Neural Networks},
  pages 657--664 vol.1, 1988.
\newblock \doi{10.1109/ICNN.1988.23903}.

\bibitem[He et~al.(2016)He, Zhang, Ren, and Sun]{7780459}
Kaiming He, Xiangyu Zhang, Shaoqing Ren, and Jian Sun.
\newblock Deep residual learning for image recognition.
\newblock In \emph{2016 IEEE Conference on Computer Vision and Pattern
  Recognition (CVPR)}, pages 770--778, Los Alamitos, CA, USA, June 2016. IEEE
  Computer Society.
\newblock \doi{10.1109/CVPR.2016.90}.

\bibitem[Hinton et~al.(2006)Hinton, Osindero, and Teh]{6796673}
Geoffrey~E. Hinton, Simon Osindero, and Yee-Whye Teh.
\newblock A fast learning algorithm for deep belief nets.
\newblock \emph{Neural Computation}, 18\penalty0 (7):\penalty0 1527--1554,
  2006.
\newblock \doi{10.1162/neco.2006.18.7.1527}.

\bibitem[Hornik(1991)]{HORNIK1991251}
Kurt Hornik.
\newblock Approximation capabilities of multilayer feedforward networks.
\newblock \emph{Neural Networks}, 4\penalty0 (2):\penalty0 251--257, 1991.
\newblock ISSN 0893-6080.
\newblock \doi{10.1016/0893-6080(91)90009-T}.

\bibitem[Hornik et~al.(1989)Hornik, Stinchcombe, and White]{HORNIK1989359}
Kurt Hornik, Maxwell Stinchcombe, and Halbert White.
\newblock Multilayer feedforward networks are universal approximators.
\newblock \emph{Neural Networks}, 2\penalty0 (5):\penalty0 359--366, 1989.
\newblock ISSN 0893-6080.
\newblock \doi{10.1016/0893-6080(89)90020-8}.

\bibitem[Jiang and Xu(2026)]{Jiang-Xu2025}
Jie Jiang and Yuesheng Xu.
\newblock Adaptive multi-grade deep learning for highly oscillatory {F}redholm
  integral equations of the second kind.
\newblock \emph{Journal of Scientific Computing}, 106\penalty0 (3):\penalty0
  64, 2026.
\newblock \doi{10.1007/s10915-026-03189-9}.

\bibitem[Jiao et~al.(2023)Jiao, Lai, Lu, Wang, Yang, and
  Yang]{doi:10.1137/21M144431X}
Yuling Jiao, Yanming Lai, Xiliang Lu, Fengru Wang, Jerry~Zhijian Yang, and
  Yuanyuan Yang.
\newblock Deep neural networks with {ReLU-Sine-Exponential} activations break
  curse of dimensionality in approximation on {H\"older} class.
\newblock \emph{SIAM Journal on Mathematical Analysis}, 55\penalty0
  (4):\penalty0 3635--3649, 2023.
\newblock \doi{10.1137/21M144431X}.

\bibitem[Li et~al.(2023)Li, Lin, and Shen]{LiLinShen2023DynamicalSystems}
Qianxiao Li, Ting Lin, and Zuowei Shen.
\newblock Deep learning via dynamical systems: An approximation perspective.
\newblock \emph{Journal of the European Mathematical Society}, 25\penalty0
  (5):\penalty0 1671--1709, 2023.
\newblock \doi{10.4171/JEMS/1221}.

\bibitem[Lin and Jegelka(2018)]{NEURIPS2018_03bfc1d4}
Hongzhou Lin and Stefanie Jegelka.
\newblock Resnet with one-neuron hidden layers is a universal approximator.
\newblock In S.~Bengio, H.~Wallach, H.~Larochelle, K.~Grauman, N.~Cesa-Bianchi,
  and R.~Garnett, editors, \emph{Advances in Neural Information Processing
  Systems}, volume~31. Curran Associates, Inc., 2018.
\newblock URL:
  \url{https://proceedings.neurips.cc/paper/2018/file/03bfc1d4783966c69cc6aef8247e0103-Paper.pdf}.

\bibitem[Lu et~al.(2021)Lu, Shen, Yang, and Zhang]{shijun:smooth:functions}
Jianfeng Lu, Zuowei Shen, Haizhao Yang, and Shijun Zhang.
\newblock Deep network approximation for smooth functions.
\newblock \emph{SIAM Journal on Mathematical Analysis}, 53\penalty0
  (5):\penalty0 5465--5506, 2021.
\newblock \doi{10.1137/20M134695X}.

\bibitem[Luo et~al.(2021)Luo, Ma, Xu, and Zhang]{luo2019theory}
Tao Luo, Zheng Ma, Zhi-Qin~John Xu, and Yaoyu Zhang.
\newblock Theory of the frequency principle for general deep neural networks.
\newblock \emph{CSIAM Transactions on Applied Mathematics}, 2\penalty0
  (3):\penalty0 484--507, 2021.
\newblock ISSN 2708-0579.
\newblock \doi{10.4208/csiam-am.SO-2020-0005}.

\bibitem[Mallat(2009)]{WaveletTour2009}
St{\'e}phane Mallat.
\newblock \emph{A Wavelet Tour of Signal Processing: The Sparse Way}.
\newblock Academic Press, Orlando, FL, USA, 3 edition, 2009.
\newblock ISBN 0123743702, 9780123743701.
\newblock \doi{10.1016/B978-0-12-374370-1.X0001-8}.

\bibitem[Oreshkin et~al.(2020)Oreshkin, Carpov, Chapados, and
  Bengio]{oreshkin2020nbeats}
Boris Oreshkin, Dmytro Carpov, Nicolas Chapados, and Yoshua Bengio.
\newblock {N-BEATS}: Neural basis expansion analysis for time series
  forecasting.
\newblock In \emph{International Conference on Learning Representations
  (ICLR)}, 2020.
\newblock URL: \url{https://openreview.net/forum?id=r1ecqn4YwB}.

\bibitem[Pinkus(1999)]{Pinkus1999MLP}
Allan Pinkus.
\newblock Approximation theory of the {MLP} model in neural networks.
\newblock \emph{Acta Numerica}, 8:\penalty0 143--195, 1999.
\newblock \doi{10.1017/S0962492900002919}.

\bibitem[Rahaman et~al.(2019)Rahaman, Baratin, Arpit, Draxler, Lin, Hamprecht,
  Bengio, and Courville]{rahaman2019spectral}
Nasim Rahaman, Aristide Baratin, Devansh Arpit, Felix Draxler, Min Lin, Fred
  Hamprecht, Yoshua Bengio, and Aaron Courville.
\newblock On the spectral bias of neural networks.
\newblock In Kamalika Chaudhuri and Ruslan Salakhutdinov, editors,
  \emph{Proceedings of the 36th International Conference on Machine Learning},
  volume~97 of \emph{Proceedings of Machine Learning Research}, pages
  5301--5310. PMLR, 09--15 Jun 2019.
\newblock URL: \url{https://proceedings.mlr.press/v97/rahaman19a.html}.

\bibitem[Ron and Shen(1997)]{RON1997408}
Amos Ron and Zuowei Shen.
\newblock Affine systems in ${L}^2(\mathbb{R}^d)$: The analysis of the analysis
  operator.
\newblock \emph{Journal of Functional Analysis}, 148\penalty0 (2):\penalty0
  408--447, 1997.
\newblock ISSN 0022-1236.
\newblock \doi{10.1006/jfan.1996.3079}.

\bibitem[Shen et~al.(2019)Shen, Yang, and Zhang]{shijun:NonlineArpprox}
Zuowei Shen, Haizhao Yang, and Shijun Zhang.
\newblock Nonlinear approximation via compositions.
\newblock \emph{Neural Networks}, 119:\penalty0 74--84, 2019.
\newblock ISSN 0893-6080.
\newblock \doi{10.1016/j.neunet.2019.07.011}.

\bibitem[Shen et~al.(2020)Shen, Yang, and
  Zhang]{shijun:Characterized:by:Numer:Neurons}
Zuowei Shen, Haizhao Yang, and Shijun Zhang.
\newblock Deep network approximation characterized by number of neurons.
\newblock \emph{Communications in Computational Physics}, 28\penalty0
  (5):\penalty0 1768--1811, 2020.
\newblock ISSN 1991-7120.
\newblock \doi{10.4208/cicp.OA-2020-0149}.

\bibitem[Shen et~al.(2021{\natexlab{a}})Shen, Yang, and
  Zhang]{shijun:floor:relu}
Zuowei Shen, Haizhao Yang, and Shijun Zhang.
\newblock Deep network with approximation error being reciprocal of width to
  power of square root of depth.
\newblock \emph{Neural Computation}, 33\penalty0 (4):\penalty0 1005--1036, 03
  2021{\natexlab{a}}.
\newblock ISSN 0899-7667.
\newblock \doi{10.1162/neco_a_01364}.

\bibitem[Shen et~al.(2021{\natexlab{b}})Shen, Yang, and
  Zhang]{shijun:three:layers}
Zuowei Shen, Haizhao Yang, and Shijun Zhang.
\newblock Neural network approximation: {T}hree hidden layers are enough.
\newblock \emph{Neural Networks}, 141:\penalty0 160--173, 2021{\natexlab{b}}.
\newblock ISSN 0893-6080.
\newblock \doi{10.1016/j.neunet.2021.04.011}.

\bibitem[Shen et~al.(2022{\natexlab{a}})Shen, Yang, and
  Zhang]{shijun:arbitrary:error:with:fixed:size}
Zuowei Shen, Haizhao Yang, and Shijun Zhang.
\newblock Deep network approximation: Achieving arbitrary accuracy with fixed
  number of neurons.
\newblock \emph{Journal of Machine Learning Research}, 23\penalty0
  (276):\penalty0 1--60, 2022{\natexlab{a}}.
\newblock URL: \url{http://jmlr.org/papers/v23/21-1404.html}.

\bibitem[Shen et~al.(2022{\natexlab{b}})Shen, Yang, and
  Zhang]{shijun:intrinsic:parameters}
Zuowei Shen, Haizhao Yang, and Shijun Zhang.
\newblock Deep network approximation in terms of intrinsic parameters.
\newblock In Kamalika Chaudhuri, Stefanie Jegelka, Le~Song, Csaba Szepesvari,
  Gang Niu, and Sivan Sabato, editors, \emph{Proceedings of the 39th
  International Conference on Machine Learning}, volume 162 of
  \emph{Proceedings of Machine Learning Research}, pages 19909--19934. PMLR,
  17--23 Jul 2022{\natexlab{b}}.
\newblock URL: \url{https://proceedings.mlr.press/v162/shen22g.html}.

\bibitem[Shen et~al.(2022{\natexlab{c}})Shen, Yang, and
  Zhang]{shijun:net:arc:beyond:width:depth}
Zuowei Shen, Haizhao Yang, and Shijun Zhang.
\newblock Neural network architecture beyond width and depth.
\newblock In S.~Koyejo, S.~Mohamed, A.~Agarwal, D.~Belgrave, K.~Cho, and A.~Oh,
  editors, \emph{Advances in Neural Information Processing Systems}, volume~35,
  pages 5669--5681. Curran Associates, Inc., 2022{\natexlab{c}}.
\newblock URL:
  \url{https://proceedings.neurips.cc/paper_files/paper/2022/hash/257be12f31dfa7cc158dda99822c6fd1-Abstract-Conference.html}.

\bibitem[Shen et~al.(2022{\natexlab{d}})Shen, Yang, and
  Zhang]{shijun:optimal:rate:in:width:and:depth}
Zuowei Shen, Haizhao Yang, and Shijun Zhang.
\newblock Optimal approximation rate of {ReLU} networks in terms of width and
  depth.
\newblock \emph{Journal de Mathématiques Pures et Appliquées}, 157:\penalty0
  101--135, 2022{\natexlab{d}}.
\newblock ISSN 0021-7824.
\newblock \doi{10.1016/j.matpur.2021.07.009}.

\bibitem[Siegel and Xu(2022)]{SIEGEL20221}
Jonathan~W. Siegel and Jinchao Xu.
\newblock High-order approximation rates for shallow neural networks with
  cosine and {ReLU}$^k$ activation functions.
\newblock \emph{Applied and Computational Harmonic Analysis}, 58:\penalty0
  1--26, 2022.
\newblock ISSN 1063-5203.
\newblock \doi{10.1016/j.acha.2021.12.005}.

\bibitem[Sitzmann et~al.(2020)Sitzmann, Martel, Bergman, Lindell, and
  Wetzstein]{NEURIPS2020_53c04118}
Vincent Sitzmann, Julien Martel, Alexander Bergman, David Lindell, and Gordon
  Wetzstein.
\newblock Implicit neural representations with periodic activation functions.
\newblock In H.~Larochelle, M.~Ranzato, R.~Hadsell, M.F. Balcan, and H.~Lin,
  editors, \emph{Advances in Neural Information Processing Systems}, volume~33,
  pages 7462--7473. Curran Associates, Inc., 2020.
\newblock URL:
  \url{https://proceedings.neurips.cc/paper_files/paper/2020/file/53c04118df112c13a8c34b38343b9c10-Paper.pdf}.

\bibitem[Tancik et~al.(2020)Tancik, Srinivasan, Mildenhall, Fridovich-Keil,
  Raghavan, Singhal, Ramamoorthi, Barron, and Ng]{NEURIPS2020_55053683}
Matthew Tancik, Pratul Srinivasan, Ben Mildenhall, Sara Fridovich-Keil, Nithin
  Raghavan, Utkarsh Singhal, Ravi Ramamoorthi, Jonathan Barron, and Ren Ng.
\newblock Fourier features let networks learn high frequency functions in low
  dimensional domains.
\newblock In H.~Larochelle, M.~Ranzato, R.~Hadsell, M.F. Balcan, and H.~Lin,
  editors, \emph{Advances in Neural Information Processing Systems}, volume~33,
  pages 7537--7547. Curran Associates, Inc., 2020.
\newblock URL:
  \url{https://proceedings.neurips.cc/paper_files/paper/2020/file/55053683268957697aa39fba6f231c68-Paper.pdf}.

\bibitem[Wang et~al.(2025)Wang, Zhang, Zeng, Xie, Guo, Zeng, and
  Fan]{WANG2025107258}
Qianchao Wang, Shijun Zhang, Dong Zeng, Zhaoheng Xie, Hengtao Guo, Tieyong
  Zeng, and Feng-Lei Fan.
\newblock Don’t fear peculiar activation functions: {EUAF} and beyond.
\newblock \emph{Neural Networks}, 186:\penalty0 107258, 2025.
\newblock ISSN 0893-6080.
\newblock \doi{10.1016/j.neunet.2025.107258}.

\bibitem[Xu(2026)]{Xu2023}
Yuesheng Xu.
\newblock Multi-grade deep learning.
\newblock \emph{Communications on Applied Mathematics and Computation},
  8:\penalty0 778--829, 2026.
\newblock ISSN 2661-8893.
\newblock \doi{10.1007/s42967-024-00474-y}.

\bibitem[Xu and Zeng(to appear)]{XuZeng2023}
Yuesheng Xu and Taishan Zeng.
\newblock Multi-grade deep learning for partial differential equations with
  applications to the {B}urgers equation.
\newblock \emph{International Journal of Numerical Analysis and Modeling}, to
  appear.
\newblock URL: \url{https://arxiv.org/abs/2309.07401}.

\bibitem[Xu et~al.(2019)Xu, Zhang, and Xiao]{xu2019training}
Zhi-Qin~John Xu, Yaoyu Zhang, and Yanyang Xiao.
\newblock Training behavior of deep neural network in frequency domain.
\newblock In \emph{Neural Information Processing: 26th International
  Conference, ICONIP 2019, Sydney, NSW, Australia, December 12--15, 2019,
  Proceedings, Part I 26}, pages 264--274. Springer, 2019.
\newblock \doi{10.1007/978-3-030-36708-4_22}.

\bibitem[Xu et~al.(2020)Xu, Zhang, Luo, Xiao, and Ma]{xu2019frequency}
Zhi-Qin~John Xu, Yaoyu Zhang, Tao Luo, Yanyang Xiao, and Zheng Ma.
\newblock Frequency principle: Fourier analysis sheds light on deep neural
  networks.
\newblock \emph{Communications in Computational Physics}, 28\penalty0
  (5):\penalty0 1746--1767, 2020.
\newblock ISSN 1991-7120.
\newblock \doi{10.4208/cicp.OA-2020-0085}.

\bibitem[Yang et~al.(2022)Yang, Li, and Wang]{yang2020approximation}
Yunfei Yang, Zhen Li, and Yang Wang.
\newblock Approximation in shift-invariant spaces with deep {ReLU} neural
  networks.
\newblock \emph{Neural Networks}, 153:\penalty0 269--281, 2022.
\newblock ISSN 0893-6080.
\newblock \doi{10.1016/j.neunet.2022.06.013}.

\bibitem[Yarotsky(2017)]{yarotsky2017}
Dmitry Yarotsky.
\newblock Error bounds for approximations with deep {ReLU} networks.
\newblock \emph{Neural Networks}, 94:\penalty0 103--114, 2017.
\newblock ISSN 0893-6080.
\newblock \doi{10.1016/j.neunet.2017.07.002}.

\bibitem[Yarotsky(2018)]{yarotsky18a}
Dmitry Yarotsky.
\newblock Optimal approximation of continuous functions by very deep {ReLU}
  networks.
\newblock In S\'ebastien Bubeck, Vianney Perchet, and Philippe Rigollet,
  editors, \emph{Proceedings of the 31st Conference On Learning Theory},
  volume~75 of \emph{Proceedings of Machine Learning Research}, pages 639--649.
  PMLR, 06--09 Jul 2018.
\newblock URL: \url{http://proceedings.mlr.press/v75/yarotsky18a.html}.

\bibitem[Yarotsky(2021)]{pmlr-v139-yarotsky21a}
Dmitry Yarotsky.
\newblock Elementary superexpressive activations.
\newblock In Marina Meila and Tong Zhang, editors, \emph{Proceedings of the
  38th International Conference on Machine Learning}, volume 139 of
  \emph{Proceedings of Machine Learning Research}, pages 11932--11940. PMLR,
  18--24 Jul 2021.
\newblock URL: \url{https://proceedings.mlr.press/v139/yarotsky21a.html}.

\bibitem[Yarotsky and Zhevnerchuk(2020)]{yarotsky:2019:06}
Dmitry Yarotsky and Anton Zhevnerchuk.
\newblock The phase diagram of approximation rates for deep neural networks.
\newblock In H.~Larochelle, M.~Ranzato, R.~Hadsell, M.~F. Balcan, and H.~Lin,
  editors, \emph{Advances in Neural Information Processing Systems}, volume~33,
  pages 13005--13015. Curran Associates, Inc., 2020.
\newblock URL:
  \url{https://proceedings.neurips.cc/paper/2020/file/979a3f14bae523dc5101c52120c535e9-Paper.pdf}.

\bibitem[Zhang et~al.(2024)Zhang, Lu, and
  Zhao]{shijun:2023:beyond:ReLU:to:diverse:actfun}
Shijun Zhang, Jianfeng Lu, and Hongkai Zhao.
\newblock Deep network approximation: Beyond {ReLU} to diverse activation
  functions.
\newblock \emph{Journal of Machine Learning Research}, 25\penalty0
  (35):\penalty0 1--39, 2024.
\newblock URL: \url{http://jmlr.org/papers/v25/23-0912.html}.

\bibitem[{Zhang} et~al.(2025){Zhang}, {Zhao}, {Zhong}, and
  {Zhou}]{ZZZZ-25-FMMNN}
Shijun {Zhang}, Hongkai {Zhao}, Yimin {Zhong}, and Haomin {Zhou}.
\newblock {F}ourier multi-component and multi-layer neural networks: Unlocking
  high-frequency potential.
\newblock \emph{arXiv e-prints}, art. arXiv:2502.18959, February 2025.
\newblock \doi{10.48550/arXiv.2502.18959}.

\bibitem[{Zhang} et~al.(2026){Zhang}, {Shen}, and {Xu}]{shijun:mgdl:relu:decay}
Shijun {Zhang}, Zuowei {Shen}, and Yuesheng {Xu}.
\newblock Multigrade neural network approximation.
\newblock \emph{arXiv e-prints}, art. arXiv:2601.16884, January 2026.
\newblock \doi{10.48550/arXiv.2601.16884}.

\end{thebibliography}

\end{document}